\newtheorem{fact}{Fact}
\newtheorem{remark}{Remark}
\newtheorem{theorem}{Theorem}
\newtheorem{mydef}{Definition}
\newtheorem{lemma}{Lemma}
\newtheorem{assumption}{Assumption}
\newcommand{\compilehidecomments}{false}
	\newcommand{\longbo}[1]{}
	\newcommand{\siwei}[1]{}
	\newcommand{\longbo}[1]{{\color{red}  [\text{Longbo:} #1]}}
	\newcommand{\siwei}[1]{{\color{blue} [\text{Siwei:} #1]}}
\newcommand{\E}{\mathbb{E}}
\title{Restless-UCB, an Efficient and Low-complexity Algorithm for Online Restless Bandits}
\author{%
  Siwei Wang$^1$, Longbo Huang$^2$, John C.S. Lui$^3$\\
  $^1$Department of Computer Science and Technology, Tsinghua University\\
  \texttt{wangsw2020@mail.tsinghua.edu.cn} \\
  $^2$Institute for Interdisciplinary Information Sciences, Tsinghua University\\
  \texttt{longbohuang@mail.tsinghua.edu.cn} \\
  $^3$Department of Computer Science and Engineering, The Chinese University of Hong Kong\\
  \texttt{cslui@cse.cuhk.edu.hk} \\
}
\begin{document}

\maketitle

\begin{abstract}
We study the online restless bandit problem, where the state of each arm evolves according to a Markov chain, and the reward of pulling an arm depends  on both the pulled arm and the current state of the corresponding Markov chain. 
In this paper, we propose Restless-UCB, a learning policy that follows the explore-then-commit framework.
In Restless-UCB, we present a novel method to construct offline instances, which only requires $O(N)$ time-complexity ($N$ is the number of arms) and is exponentially better than the complexity of existing learning policy.
We also prove that Restless-UCB achieves a  regret upper bound of $\tilde{O}((N+M^3)T^{2\over 3})$, where $M$ is the Markov chain state space size and $T$ is the time horizon. 
Compared to existing algorithms, our result eliminates the exponential factor (in $M,N$) in the regret upper bound, due to a novel exploitation of the sparsity in  transitions in general restless bandit problems. 
As a result, our analysis technique can also be adopted to tighten the regret bounds of existing algorithms.  
Finally, we conduct experiments based on real-world dataset, to compare the Restless-UCB policy with state-of-the-art benchmarks. Our results show that Restless-UCB  outperforms existing algorithms in regret, and significantly reduces the running time. 
%
%
%
\end{abstract}

\section{Introduction}

The restless bandit problem is a time slotted game between a player and the environment \cite{whittle1988restless}. 
In this problem, there are $N$ arms (or actions), and the state of each arm $i$ evolves according to a Markov chain $M_i$, which makes one transition per time slot during the game (regardless of being pulled or not). 
At each time slot $t$, the player chooses one arm to pull. 
If he pulls arm $i$,  he observes the current state $s_i(t)$ of $M_i$, and receives a random reward $x_i(t)$ that depends on $i$ and $s_i(t)$, i.e., $\E[x_i(t)] = r(i,s_i(t))$ for some function $r$. 
The goal of the player is to maximize his expected cumulative reward during the time horizon $T$, i.e., $\E[\sum_{t=1}^T x_{a(t)}(t)]$, where $a(t) \in [N]$ denotes the pulled arm at time step $t$. 
%


Restless bandit can model many important applications. For instance, in a job allocation problem, an operator allocates jobs to $N$ different servers. 
The state of each server, i.e., the number of background jobs currently running at the server, can be modeled by a Markov chain, and it changes every time slot according to an underlying transition matrix \cite{nino2007dynamic, jacko2010restless}. 
At each time slot, the operator allocates a job to one server, and receives the reward from that server, i.e., whether the job is completed, which depends on the current state of the server. 
At the same time, the operator can determine the current state of the chosen server based on its feedback. For servers that are not assigned jobs at the current time slot, however, the operator does not observe their current state or transitions. 
The operator's objective is to  maximize his cumulative reward in the $T$ time slots.


Another application of the restless bandit model is in wireless communication. 
In this scenario, a base-station (player) transmits packets over  $N$ distinct channels to users. Each channel may be in ``good'' or ``bad'' state due to the channel fading condition, which evolves according to a two-state Markov chain  \cite{prieto2010versatile,sheng2014data}. 
Every time, the player chooses one channel for packet transmission. 
If the transmission is successful, the player gets a reward of 1. 
The player also learns about the state of the channel based on receiver feedback. %
The goal of the player is to maximize his cumulative reward, i.e., 
deliver as many packets as possible, within the given period of time.


Most existing works on restless bandit focus on the offline setting, i.e., all parameters of the game are known to the player, e.g.,  \cite{whittle1988restless,weber1990index,nino2001restless,bertsimas2000restless,
verloop2016asymptotically}. 
In this setting, the objective is to search for the best policy of the player. 
In practice, one often cannot have the full system information beforehand. Thus, 
traditional solutions instead choose to solve the offline problem using empirical parameter values. 
However, due to the increasing sizes of the problem instances in practice, 
a small error on parameter estimation can lead to a large error, i.e., regret. 


The online restless bandit setting, where parameters have to be learned online, has been gaining attention, e.g., \cite{tekin2012online,dai2011non,liu2011logarithmic,jung2019regret,jung2019thompson,ortner2012regret}.
However, many challenges remain unsolved. First of all, existing policies may not   perform close to the optimal offline one, e.g.,  \cite{liu2011logarithmic} only considers the best policy that constantly pulls one arm. 
Second, for the class of Thompson Sampling based algorithms, e.g., \cite{jung2019regret,jung2019thompson}, theoretical guarantees are often established  in  the Bayesian setting, where the update methods can be computationally expensive when the likelihood functions are complex, especially for  prior distributions with continuous support. 
Third, the existing policy with theoretical guarantee of a sublinear regret upper bound, i.e., colored-UCRL2 \cite{ortner2012regret},  suffers from an \emph{exponential computation complexity} and a regret bound that is \emph{exponential in the numbers of arms and states}, as it requires solving a set Bellman equations with an exponentially large space set.


In this paper, we aim to 
tackle the high computation complexity and exponential factor in the regret bounds for online restless bandit. 
Specifically, we consider a class of restless bandit problems with birth-death state Markov chains, and develop online algorithms to achieve a regret bound that is only polynomial in the numbers of arms and states. 
We emphasize that,  birth-death Markov chains have been widely used to model real-world applications, e.g., queueing systems \cite{kleinrock1976queueing} and wireless communication \cite{wang1995finite}, and are generalization of the two-state Markov chain assumption often made in prior works  on restless bandits, e.g.,  \cite{guha2010approximation,liu2010indexability}. Our model can also be applied to many important applications, e.g., communications \cite{ahmad2009optimality,ahmad2009multi}, recommendation systems \cite{meshram2015restless} and queueing systems \cite{ansell2003whittle}.  

The main contributions of this paper  
are summarized as follows: 
\begin{itemize}
\item 
We consider a general class of online restless bandit problems with birth-death Markov structures and propose the Restless-UCB policy. Restless-UCB contains a novel method for constructing offline instances in  guiding action selection, and only has an $O(N)$ complexity ($N$ is the number of arms),  
which is exponentially better than that of colored-UCRL2, the state-of-the-art policy with theoretical 
 guarantee \cite{ortner2012regret} for online restless bandits. 

\item We devise a novel analysis and  prove that Restless-UCB achieves an $\tilde{O}((N+M^3)T^{2\over 3})$ regret, where $M$ is the Markov chain state space size and $T$ is the time horizon. 
Our bound 
improves upon existing regret bounds in \cite{ortner2012regret, jung2019thompson},  which are exponential in $N, M$. %
The novelty of our analysis lies in the exploitation of the sparsity in general restless bandit problems, i.e., each belief state can only transit to $M$ other ones.
This approach can also be combined with the analysis in \cite{ortner2012regret, jung2019thompson} to reduce the exponential factors in the regret bound to polynomial values (complexity remains exponential) in online restless bandit problems. Thus, our analysis can be of independent interest in online restless bandit analysis. 
\item
We show that Restless-UCB can be combined with an efficient offline approximation oracle 
to guarantee $O(N)$ time-complexity 
 and an $\tilde{O}(T^{2\over 3})$ approximation regret upper bound. Note that existing algorithms suffer from either an exponential complexity or no theoretical guarantee even with an efficient approximation oracle.

\item We conduct experiments based on real-world datasets, and compare our policy with existing benchmarks. Our results show that Restless-UCB outperforms existing algorithms in both regret and running time. 
\end{itemize}


\subsection{Related Works}

The offline restless bandit problem was first proposed in \cite{whittle1988restless}. Since then, researchers concentrated on finding the exact best policy via index methods \cite{whittle1988restless,weber1990index, liu2010indexability}, i.e., first giving each arm an index, then choosing actions with the largest index and update their indices. 
However, index policies may not always be optimal. In fact, it has been shown that there exist examples of restless bandit problems where no index policy achieves the best cumulative reward \cite{whittle1988restless}. 
\cite{papadimitriou1999complexity} further shows that finding the best policy of any offline restless bandit model is a PSPACE-hard problem. 
As a result, researchers also worked on finding an approximate policy of restless bandit \cite{liu2009myopic,guha2010approximation}. 



There have also been works on online restless bandit. One special case 
is the stochastic multi-armed bandit \cite{Berry1985Bandit, Lai1985Asymptotically} in which the arms all have singe-state Markov chains.
Under this setting, the best offline policy is to choose the action with the largest expected reward forever. 
Researchers also propose numerous policies to solve the online problem, classical algorithms include UCB policy \cite{Auer2002Finite} and the Thompson Sampling policy \cite{thompson1933likelihood}. 
%


\cite{tekin2012online,dai2011non,liu2011logarithmic} considered the online restless bandit model with weak regret, i.e., comparing with single action policies (which is similar as the best policy in stochastic multi-armed bandit model), and they proposed UCB-based policies.
Specifically, the algorithms choose to pull a single arm for a long period, so that the average reward during this period is close to the actual average reward of always pulling this single arm. 
Based on this fact, they established the upper confidence bounds for every arm, and showed that always choosing the action with the largest upper confidence bound achieves an $O(\log T)$ weak regret.


To solve the general online restless bandit problem without policy constraints, \cite{ortner2012regret} showed that the problem can be regarded as a special online reinforcement learning problem \cite{Sutton1998Reinforcement}. 
In this setting, prior works adapted the idea of UCB and Thompson Sampling, and proposed policies with regret upper bound $O(D\sqrt{T})$, where $D$ is the diameter of the game \cite{jaksch2010near,osband2017posterior,agrawal2017optimistic}. 
Based on these approaches, people proposed UCB-based policies, e.g.,  colored-UCRL2 \cite{ortner2012regret}, and Thompson Sampling policies, e.g.,  \cite{jung2019regret,jung2019thompson} for the online restless bandit problem. 
Colored-UCRL2 directly applies the UCRL policy in \cite{jaksch2010near}, and  leads to an $O(D\sqrt{T})$ regret upper bound. 
Also, to search for a best problem instance within a confidence set, it needs to solve a set of Bellman equations with an exponentially large space set,  resulting in an exponential time complexity.
To avoid this exponential time cost, \cite{jung2019thompson} adapted the idea of Thompson Sampling in reinforcement learning \cite{agrawal2017optimistic}, and achieves a Bayesian regret upper bound $O(D\sqrt{T})$.
However, it requires a complicated method for updating the prior distributions to posterior ones when the likelihood functions are complex, especially for prior distributions with continuous support.
In addition to the large time complexity, another challenge is that, the diameter $D$ of the game is usually exponential with the size of the game, leading to a  large regret bound.
\cite{jung2019regret} considered a different setting, i.e., the episodic one, in which the game always restarts after $L$ time steps. This way, it avoided the $D$ factor in the 
regret bound and achieved an $O(L\sqrt{T})$ regret. 
%


A related topic of restless bandit is non-stationary bandit problem \cite{garivier2008upper,besbes2014stochastic}, in which the expected reward of pulling each arm may vary across time. In non-stationary bandit, people work on the settings with limited number of breakpoints (the time steps that the expected rewards change) \cite{garivier2008upper} or limited variation on the expected rewards \cite{besbes2014stochastic}. 
The main difference is that non-stationary bandit problem does not assume any inner structure about how the expected rewards vary.
In restless bandit, we assume that they follow a Markov chain structure, and focus on learning this structure out by observing more information about it (i.e., except for the received reward, we also observe the current state of the chosen action). Therefore, the algorithm and analysis for restless bandit can be very different with those for non-stationary bandit.

%

\section{Model Setting}\label{Section_2}

Consider an online restless bandit problem $\mathcal{R}$ which has one player (decision maker) and $N$ arms (actions) $\{1,\cdots,N\}$. %
Each arm $i \in \{1,\cdots,N\} $  is associated with a Markov chain $M_i$. 
All the Markov chains $\{M_i, i=1,2,...,N\}$ have the same state space $S = \{1,2,\cdots,M\}$,\footnote{This is not restrictive and is only used to simplify notations. Our analysis still works in the case where the state space $S_i$ of Markov chain $M_i$  satisfies that $|S_i| \le M$.}  
but may have different transition matrices $\{\bm{P}_i, i=1,2,...,N\}$  and state-dependent rewards $\{r(i,s), \,\forall\, i, s\}$ that are unknown to the player.  
The initial states of the arms are denoted by $\bm{s}(0) = [s_1(0),\cdots, s_N(0)]$. 
The game duration is divided into $T$ time steps. In each time $t$, the player chooses an arm $a(t) \in [N]$ to play. 
We assume without loss of generality that there is a default arm $0$, whose existence does not influence the theoretical results in this paper, and it is only introduced to simplify the proofs. 

If the chosen arm $a(t)$ is not the default one, i.e., $a(t) > 0$, this action gives a reward $x(t) \in [0,1]$, which is an independent random variable with expectation $r(a(t), s_{a(t)}(t))$, where $s_{a(t)}(t)$ is the state of $M_{a(t)}$ at time $t$. 
On the other hand, pulling arm $0$ always results in a reward of $0$.
In every time step, the Markov chain of each arm makes a transition according to its transition matrix, regardless of whether it is pulled or not. 
However, the player only observes the current state and reward of the chosen arm. The current states of the rest of the arms are unknown. 
The goal of the player is to design an online learning policy to maximize the total expected reward during the game.

We use \emph{regret} to evaluate the efficiency of the learning policy, which is defined as the expected gap between the offline optimal, i.e., the best policy under the full knowledge of all transition matrices and reward information, and the cumulative reward of the arm selecting algorithm. 
Specifically, let $\mu(\pi, \mathcal{R})$ denote the expected average reward under policy $\pi$ for problem $\mathcal{R}$, i.e., 
$\mu(\pi,\mathcal{R}) = \lim_{T\to \infty} {1\over T} \sum_{t=1}^T \E[x^\pi(t)]$,
where $x^\pi(t)$ is the random reward at time $t$ when applying policy $\pi$ during the game. Then, we define the optimal average reward $\mu^*(\mathcal{R})$ as
$\mu^*(\mathcal{R}) = \sup_{\pi} \mu(\pi, \mathcal{R})$.
The  regret of policy $\pi$ is then defined as
$Reg(T) = T\mu^*(\mathcal{R}) - \sum_{t=1}^T \E[x^\pi(t)]$.

Next, we state the  assumptions made in the paper. 

\begin{assumption}\label{Assumption_Monotone}
For any action $i$, we have $r(i,j) \ge r(i,k)$ for states $j<k$. 
\end{assumption}

This assumption is common in real-world applications, and is widely adopted in the restless bandit literature, e.g., \cite{liu2010indexability, ahmad2009multi}. For instance, in job allocation, a busy server has a larger probability of  dropping an incoming job than an idle server. Another example is in wireless communication, where transmitting in a good channel has a higher success probability than in a bad channel.

The next assumption is that all Markov chains have a birth-death structure, which are common in a wide range of problems including queueing systems \cite{kleinrock1976queueing} and wireless communications \cite{wang1995finite}. We also note that the birth-death Markov chain generalizes the two-state Markov chain assumption which was used in prior works of restless bandit \cite{guha2010approximation,liu2010indexability}. 

\begin{assumption}\label{Assumption_BirthDeath}
	$P_i(j,k) = 0$ for any action $i$ and state $|j-k| > 1$, where  $P_i(j,k)$ is the probability that $M_i$ transits from state $j$ to $k$ in one time step.
\end{assumption}

The following assumption is a generalization of the positive-correlated assumption often made in the restless bandit literature, e.g., \cite{ahmad2009multi,wan2015weighted}. 

\begin{assumption}\label{Assumption_3}
For any action $i$, state $1\le k\le M-1$, we have that $P_i(k,k+1) + P_i(k+1,k) \le 1$.
\end{assumption}

Finally, we assume that for any state $j$, the probability of going to any neighbor state is lower bounded by a constant.  This assumption is rather mild, especially when we only have a finite number of states. 

\begin{assumption}\label{Assumption_4}
For any action $i$, state $|j-k| \le 1$, we have that $P_i(j,k) \ge c_1$ for some constant $c_1 \in (0,1)$.
\end{assumption}

Under these assumptions, it is easy to verify that the Markov chains are ergodic. 
We thus denote the unique stationary distribution of $M_i$ by $\bm{d}^{(i)} = [d^{(i)}_1, \cdots , d^{(i)}_M]$, and denote $d_{\min} \triangleq \min_{i,k}d^{(i)}_k$.
For each transition matrix $\bm{P}_i$, we also define a neighbor space $\mathcal{P}_i$ of transition matrices as: 
\begin{equation*}
\mathcal{P}_i = \left\{\tilde{\bm{P}}_i: \forall |j-k| \le 1, |\tilde{P}_i(j,k) - P_i(j,k)| \le {2c_1\over 3}\right\}.
\end{equation*} 
Notice that any Markov chain $\tilde{M}_i$ with transition matrices $\tilde{\bm{P}}_i \in \mathcal{P}_i $ must be ergodic. Thus, the absolute value  of the second largest eigenvalues of $\tilde{\bm{P}}_i$, denoted by $\lambda_{\tilde{\bm{P}}_i}$, is smaller than $1$, which means $\lambda^{i} \triangleq \sup_{\tilde{\bm{P}}_i \in \mathcal{P}_i} \lambda_{\tilde{\bm{P}}_i} < 1$ and $\lambda_{\max} \triangleq \max_i \lambda^{i}  < 1$.

\section{Restless-UCB Policy}


In this section, we present our Restless-UCB policy, whose pseudo-code is presented in Algorithm \ref{Algorithm_UCB}.

Restless-UCB contains two phases: (i) the exploration phase (lines 2-4) and (ii) the exploitation phase (lines 5-10). 
The goal of the exploration phase is to learn the parameters $\{\bm{P}_i, i=1,2,...,N\}$  and $\{r(i,s), \,\forall\, i, s\}$ as accurate as possible. 
To do so, Restless-UCB pulls each arm until there are sufficient observations, i.e., for any action $i$ and state $k$, we observe the next  transition and the given reward for at least $m(T)$ number of times ($m(T)$ to be specified later). 
Once there are $m(T)$ observations, 
the empirical values  of $\{\bm{P}_i, i=1,2,...,N\}$  and $\{r(i,s), \,\forall\, i, s\}$ (represented by $\hat{P}_i(j,k)$ and $\hat{r}(i,k)$) have a bias within $rad(T) \triangleq \sqrt{\log T\over 2m(T)}$ with high probability \cite{hoeffding1963probability}. The key here 
is to choose the right $m(T)$ to balance accuracy and complexity. 
%


In the exploitation phase, we first use an offline oracle \texttt{Oracle} (using oracle is a common approach in bandit problems \cite{chen2013combinatorial,chen2016combinatorial,wen2015efficient}) to construct the optimal policy for our estimated model instance based on empirical data, and then apply this policy for the rest of the game. 
The key to guarantee good performance of our algorithm is that, instead of using the empirical data directly, in Restless-UCB,  \emph{we carefully construct an offline problem instance to guide our policy search}. Specifically, we 
use the \emph{upper} confidence bound values for $P_i(k,k-1)$'s and $r(i,k)$'s, the \emph{lower} confidence bounds for $P_i(k,k+1)$'s, and the empirical values for $P_i(k,k)$'s. 
As shown in line 6 of Algorithm \ref{Algorithm_UCB}, we set 
$r'(i,k) = \hat{r}(i,k) + rad(T)$, $P'_i(k,k+1) = \hat{P}_i(k,k+1) - rad(T)$, $P'_i(k,k) = \hat{P}_i(k,k)$, $P'_i(k,k-1) = \hat{P}_i(k,k-1) + rad(T)$ in the estimated offline instance $\mathcal{R}'$. 
This method allows us to use $O(N)$ complexity to construct a good offline instance, which is greatly better than the exponential cost in \cite{ortner2012regret}.

%

%
Next, 
we view the offline restless bandit instance as a MDP. The state of the MDP (referred to as belief state in POMDP \cite{givan2001introduction}), 
is defined as  to be $z = \{(s_i,\tau_i)\}_{i=1}^N$, where $s_i$ is the last observed state of $M_i$, $\tau_i$ is the number of time steps elapsed since the last time we observe $M_i$, and the action set is $\{1,2,\cdots,N\}$. 
Once we choose action $i$ under belief state $z$, the belief state will transit to $z_k^{i}$ with probability $p_k$, where $p_k$ equals to the $k$-th term in vector $\bm{e}_{s_i}\bm{P}_i^{\tau_i}$ ($\bm{e}_{s_i}$ represents the one hot vector with only the $s_i$-th term equals to $1$ and the rest are $0$), and $z_k^{i} = \{(s_j,\tau_j+1)\}_{j\ne i} \cup \{k,1\}$, i.e.,  $\{s_i, \tau_i\}$ is updated by $\{k,1\}$ according to the observation, while other actions only have their $\tau_j$ values increase by one. 
We then use \texttt{Oracle} to find out the optimal policy ${\pi^*}'$ for the empirical offline instance $\mathcal{R}'$.
After that, Algorithm \ref{Algorithm_UCB} uses policy ${\pi^*}'$ for the rest of the game, even though the actual model is $\mathcal{R}$ rather than $\mathcal{R}'$.

\begin{algorithm}[t]
    \centering
    \caption{Restless-UCB Policy}\label{Algorithm_UCB}
    \begin{algorithmic}[1]
    \STATE \textbf{Input: } Time horizon $T$, learning function $m(T)$.
    \FOR {$i = 1,2,\cdots, N$}
    \STATE Choose arm $i$ until there are $m(T)$ times we observe $s_i(t) = k$ for all states $k$.
    \ENDFOR
    \STATE Let $\hat{P}_i(j,k)$'s and $\hat{r}(i,k)$'s be the empirical values of $P_i(j,k)$'s and $r(i,k)$'s.
    \STATE Construct  instance $\mathcal{R}'$ with $r'(i,k) = \hat{r}(i,k) + rad(T)$, $P'_i(k,k+1) = \hat{P}_i(k,k+1) - rad(T)$, $P'_i(k,k) = \hat{P}_i(k,k)$, $P'_i(k,k-1) = \hat{P}_i(k,k-1) + rad(T)$. Specifically, $P'_i(1,1) = \hat{P}_i(1,1) + rad(T)$ and $P'_i(M,M) = \hat{P}_i(M,M) - rad(T)$.
    \STATE Find the optimal policy ${\pi^*}'$ for problem $\mathcal{R}'$, i.e., ${\pi^*}' = \texttt{Oracle}(\mathcal{R}')$.
    \WHILE {\textbf{true}}
    \STATE Follow ${\pi^*}'$ for the rest of the game. 
    \ENDWHILE
    \end{algorithmic}
\end{algorithm}

Note that by  choosing $m(T) = o(T)$, 
 the major part of the game will be in the exploitation phase, whose size is close to $T$. Thus, the regret in the exploitation phase is about $T(\mu(\pi^*, \mathcal{R}) - \mu({\pi^*}', \mathcal{R}))$, where $\pi^*$ is the best policy of the origin problem $\mathcal{R}$.
To bound the regret, we divide the gap into two parts and analyze them separately, i.e., $T[\mu(\pi^*, \mathcal{R}) - \mu({\pi^*}', \mathcal{R})] = T[\mu(\pi^*, \mathcal{R}) - \mu({\pi^*}',  \mathcal{R}')]+ T[\mu({\pi^*}', \mathcal{R}') - \mu({\pi^*}', \mathcal{R})]$.

Roughly speaking, our estimation in the exploration phase makes the probability of transitioning to a lower index state (which has a higher expected reward) larger in any Markov chain $M_i$. Thus, the corresponding estimated reward is also larger. This way, one can guarantee that with high probability, the average reward of applying ${\pi^*}'$ in $\mathcal{R}'$ is larger than that of applying $\pi^*$ in the original model $\mathcal{R}$, i.e., the first term $T[\mu(\pi^*, \mathcal{R}) - \mu({\pi^*}', \mathcal{R}')]$ is less than or equal to zero. 
The second term is the average reward gap between applying the same policy in different problem instance $\mathcal{R}$ and $\mathcal{R}'$. Since our estimation ensures that $\mathcal{R}'$ and $\mathcal{R}$ only has a bounded gap, this term can also be bounded. The regret bound and theoretical analysis are shown in details in Section \ref{Section_theorem}. 

We emphasize that although Restless-UCB has a similar form as an ``explore-then-commit'' policy, e.g., \cite{maurice1957minimax,perchet2015batched,garivier2016explore}, the key novelty of our scheme lies in the method of constructing offline problem instances from empirical data. 
%
%
%
%
Our method also only requires  $O(N)$ time to search for a better problem instance within the confidence set, which greatly reduces the running time of our algorithm.
In contrast,  existing algorithms, e.g.,  \cite{ortner2012regret}, take exponential time for this step and incur an exponential (in $N$) implementation cost. 

As described before, our method chooses to use upper (lower) confidence bounds of the transition probabilities in $\bm{P}_i$. If observations for different arms are interleaved with each other, it will be very difficult to utilize them directly for updating the confidence interval of $\bm{P}_i$, since they are observations of transition matrix $\bm{P}_i^\tau$ with $\tau > 1$ but not $\bm{P}_i$. Indeed, according to \cite{higham2011p}, it is already difficult to calculate the $\tau$-th roots of stochastic matrices, let alone finding the confidence intervals. Therefore, we construct Markov chain $M_i'$ in the offline instance $\mathcal{R}'$ by continuously pulling a single arm $i$ for long time. This is why we choose to use an ``explore-then-commit'' framework instead of a UCRL framework.



\section{Theoretical Analysis}\label{Section_theorem}

In this section, we present our main theoretical results. The complete proofs are referred to the Appendix A.

\subsection{Restless-UCB with Efficient Offline Oracles}



We first consider the case when there exists an efficient \texttt{Oracle} for the offline restless bandit problem. 

\begin{theorem}\label{Theorem_1}
If \texttt{Oracle} returns the optimal policy, then under Assumptions \ref{Assumption_Monotone}, \ref{Assumption_BirthDeath}, \ref{Assumption_3} and \ref{Assumption_4}, the regret of Restless-UCB with $m(T) = T^{2\over 3}$ in an online restless bandit problem $\mathcal{R}$ satisfies:
\begin{equation*}
Reg(T) = \tilde{O}\left(\left({N\over d_{\min}} + {M^3\over (1-\lambda_{\max})^2}\right)T^{2\over 3}\right). 
\end{equation*}
\end{theorem}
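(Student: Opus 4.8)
The plan is to bound the regret by the length of the exploration phase plus the exploitation loss, and to control the latter through the two-term decomposition already indicated in the text. First I would define the ``good event'' $\mathcal{E}$ on which every empirical estimate is within one radius of the truth, i.e. $|\hat{P}_i(j,k)-P_i(j,k)|\le rad(T)$ and $|\hat{r}(i,k)-r(i,k)|\le rad(T)$ for all $i,j,k$; by Hoeffding's inequality (as noted after the algorithm) and a union bound over the $O(NM)$ estimated quantities, $\mathbb{P}(\mathcal{E}^c)=\mathrm{poly}(1/T)$, so the event $\mathcal{E}^c$ contributes only $O(1)$ to $Reg(T)$. On $\mathcal{E}$ I would write
\begin{equation*}
Reg(T) \le L_{\mathrm{exp}} + \underbrace{T\big[\mu(\pi^*,\mathcal{R}) - \mu({\pi^*}',\mathcal{R}')\big]}_{\mathrm{(I)}} + \underbrace{T\big[\mu({\pi^*}',\mathcal{R}') - \mu({\pi^*}',\mathcal{R})\big]}_{\mathrm{(II)}} + O(1),
\end{equation*}
where $L_{\mathrm{exp}}$ is the random length of the exploration phase. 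Since each chain is ergodic with stationary mass at least $d_{\min}$ on every state (Assumption~\ref{Assumption_4}), accumulating $m(T)$ visits to the rarest state of arm $i$ takes $O(m(T)/d_{\min})$ steps in expectation; a Markov-chain concentration bound (using the uniform spectral gap $1-\lambda_{\max}$) upgrades this to a high-probability bound, and summing over the $N$ arms gives $L_{\mathrm{exp}}=\tilde{O}(N\,m(T)/d_{\min})$. With $m(T)=T^{2/3}$ this is exactly the first term of the claimed bound.

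For term $\mathrm{(I)}$ I would prove $\mu^*(\mathcal{R})\le\mu^*(\mathcal{R}')$ on $\mathcal{E}$, which suffices because ${\pi^*}'$ is optimal for $\mathcal{R}'$, so $\mu({\pi^*}',\mathcal{R}')=\mu^*(\mathcal{R}')$ while $\mu(\pi^*,\mathcal{R})=\mu^*(\mathcal{R})$. Rather than a path coupling (which fails because a fixed policy reads off different actions once the two belief processes diverge), I would use a monotone value-function argument on the belief-state MDP. The construction of $\mathcal{R}'$ gives, on $\mathcal{E}$, $P_i'(k,k-1)\ge P_i(k,k-1)$, $P_i'(k,k+1)\le P_i(k,k+1)$ and $r'(i,k)\ge r(i,k)$, i.e. the optimistic birth--death chain is biased toward lower-index (higher-reward) states. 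Assumption~\ref{Assumption_3} makes the birth--death kernels stochastically monotone, which (together with Assumption~\ref{Assumption_Monotone}) lets me show that the optimal relative-value function of $\mathcal{R}$ is nonincreasing in the state indices, that this monotonicity is preserved under value iteration, and that one-step low-index dominance lifts to $\bm{e}_{s}\bm{P}_i'^{\,\tau}$ dominating $\bm{e}_{s}\bm{P}_i^{\,\tau}$ for every $\tau$. Hence the optimistic Bellman optimality operator dominates the true one on monotone functions, so $\mu^*(\mathcal{R}')\ge\mu^*(\mathcal{R})$ and term $\mathrm{(I)}\le 0$.

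Term $\mathrm{(II)}$ is the crux and I expect it to be the main obstacle, since it compares the average reward of one fixed policy in two nearby models over the exponentially large belief-state space. I would begin with a uniform perturbation bound on matrix powers via the telescoping identity
\begin{equation*}
\bm{P}_i'^{\,\tau} - \bm{P}_i^{\,\tau} = \sum_{j=0}^{\tau-1} \bm{P}_i'^{\,j}\big(\bm{P}_i' - \bm{P}_i\big)\bm{P}_i^{\,\tau-1-j}.
\end{equation*}
On $\mathcal{E}$ each row of $\bm{P}_i'-\bm{P}_i$ has $\ell_1$-mass $O(rad(T))$ (birth--death sparsity), and the image of a zero-sum signed measure under $\bm{P}_i^{\,m}$ contracts geometrically at rate $\lambda_{\max}$ (birth--death chains are reversible and the gap is uniform by Assumption~\ref{Assumption_4}). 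Summing the geometric series yields
\begin{equation*}
\big\| \bm{e}_{s}\big(\bm{P}_i'^{\,\tau} - \bm{P}_i^{\,\tau}\big) \big\|_1 = \tilde{O}\!\left( \frac{M\, rad(T)}{1-\lambda_{\max}} \right) \quad \text{uniformly in } \tau .
\end{equation*}
This controls both the per-step reward gap $\|\bar{r}'-\bar{r}\|_\infty$ and, crucially because each belief state has only $M$ successors $z^{i}_k$ (the sparsity central to our analysis), the one-step belief-transition gap $\max_z\|K'(z,\cdot)-K(z,\cdot)\|_1$.

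Finally I would convert these one-step perturbations into an average-reward gap through the Poisson (bias) equation of the belief-state MDP under ${\pi^*}'$,
\begin{equation*}
\big| \mu({\pi^*}',\mathcal{R}') - \mu({\pi^*}',\mathcal{R}) \big| \le \|\bar{r}' - \bar{r}\|_\infty + \mathrm{sp}(h)\cdot \max_z \big\| K'(z,\cdot) - K(z,\cdot) \big\|_1 ,
\end{equation*}
where $h$ is the bias function in $\mathcal{R}$ and $\mathrm{sp}(h)$ its span. The hardest step is bounding $\mathrm{sp}(h)$ by a polynomial in $M$ and $1/(1-\lambda_{\max})$ despite the exponentially large state space: I would argue that once an arm has been idle for long enough its belief $\bm{e}_{s_i}\bm{P}_i^{\tau_i}$ is within a spectral-gap-controlled distance of its stationary law, so the expected reward under ${\pi^*}'$ started from two belief states becomes indistinguishable after $\tilde{O}(\mathrm{poly}(M)/(1-\lambda_{\max}))$ steps, giving $\mathrm{sp}(h)=\tilde{O}(\mathrm{poly}(M)/(1-\lambda_{\max}))$; tracking the exact powers of $M$ is routine bookkeeping, while the two powers of $1/(1-\lambda_{\max})$ arise one from the transition-power perturbation and one from the bias span. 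Combining the displays gives a per-step gap $\tilde{O}\big(M^3\, rad(T)/(1-\lambda_{\max})^2\big)$, and since $rad(T)=\tilde{O}(T^{-1/3})$ under $m(T)=T^{2/3}$, term $\mathrm{(II)}$ is $\tilde{O}\big(M^3 T^{2/3}/(1-\lambda_{\max})^2\big)$. Adding $L_{\mathrm{exp}}$, the nonpositive term $\mathrm{(I)}$, term $\mathrm{(II)}$, and the $O(1)$ bad-event contribution yields the stated regret.
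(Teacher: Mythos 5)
Your overall architecture matches the paper's: regret of the exploration phase bounded by $\tilde{O}(N m(T)/d_{\min})$, a good event $\mathcal{E}$ established by Hoeffding plus a union bound, and the exploitation loss split into the two terms $T[\mu(\pi^*,\mathcal{R})-\mu({\pi^*}',\mathcal{R}')]$ and $T[\mu({\pi^*}',\mathcal{R}')-\mu({\pi^*}',\mathcal{R})]$, exactly as in Eq.~\eqref{Eq_1211-3}. For term (I), your monotone value-function argument is a genuinely different route from the paper's. The paper does not use the naive path coupling you dismiss; it builds a fictitious policy (Algorithm \ref{Simulate_1}) that selects actions according to a \emph{virtual} belief state evolving under $\mathcal{R}$'s dynamics, coupled to the real observation in $\mathcal{R}'$ through the {\sf Correspond} procedure so that the virtual state index always dominates the real one. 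This is precisely how the paper evades the divergence problem you identify, and it only needs the real reward to dominate the virtual reward pointwise. Your alternative---propagating stochastic monotonicity of the birth--death kernels through value iteration so that the optimistic Bellman operator dominates the true one on monotone functions---rests on the same dominance lemmas (Lemmas \ref{Lemma_111}--\ref{Lemma_6}) and is a recognized alternative, though it requires some care with average-reward limits on the countably infinite belief space.

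The genuine gap is in term (II), at the step you yourself flag as hardest: the span bound for the bias. You assert that $\mathrm{sp}(h)=\tilde{O}(\mathrm{poly}(M)/(1-\lambda_{\max}))$ because ``once an arm has been idle long enough its belief is near stationary, so the expected reward under ${\pi^*}'$ started from two belief states becomes indistinguishable.'' This is exactly the policy-divergence issue you raised against coupling in term (I): started from two different belief states, the \emph{fixed} policy ${\pi^*}'$ pulls different arms at different times, the trajectories need not merge, and arm-level mixing alone does not bound the difference of value functions. The paper's corresponding bound (Lemma \ref{Lemma_Key2}: $\lim_\tau \max_{j>k}|V_\tau'(z_j')-V_\tau'(z_k')|\le 2M/(1-\lambda_{\max})$) is not a pure mixing statement; it combines (i) a second simulation construction (Algorithm \ref{Simulate_3}) comparing belief states that differ only in the single just-observed coordinate, where the reward mismatch is controlled by the spectral gap (Lemma \ref{Lemma_221}), with (ii) an \emph{optimality} argument via the Bellman/Q-value equations of $\mathcal{R}'$ showing the simulated policy cannot beat ${\pi^*}'$ (Lemmas \ref{Lemma_231}, \ref{Lemma_222}). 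Step (ii) is indispensable and is missing from your sketch. A secondary but related defect: you take $h$ to be the bias of ${\pi^*}'$ in the \emph{true} model $\mathcal{R}$, where ${\pi^*}'$ is not optimal, so the optimality-based repair is unavailable in that form; the perturbation identity should be written with the bias in $\mathcal{R}'$ (as the paper effectively does by iterating on $\bm{V}_\tau'$ in Eq.~\eqref{Eq_9971}), and only the local span over the $M$ successors $z_1',\dots,z_M'$ of each belief state---not the global span---needs to be controlled; that restriction is what the sparsity actually buys. Your remaining ingredients for (II) (telescoping the matrix powers, geometric contraction giving a uniform $\tilde{O}(M\,rad(T)/(1-\lambda_{\max}))$ perturbation, and the final assembly into $\tilde{O}(M^3 T^{2/3}/(1-\lambda_{\max})^2)$) do match the paper's Lemmas \ref{Lemma_114}, \ref{Lemma_115}, \ref{Lemma_7} and the proof of Lemma \ref{Lemma_Key3}.
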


Although our setting focuses on birth-death Markov state transitions, we note that it is still general and  applies to a wide range of important applications, e.g., communications \cite{ahmad2009optimality,ahmad2009multi}, recommendation systems \cite{meshram2015restless} and queueing systems \cite{ansell2003whittle}.   
Focusing on this setting allows us to design algorithms with much lower complexity, which can be of great interest in practice. 
Existing low-complexity policies, though applicable to general MDP problems, do not have theoretical guarantees. For example, the UCB-based policies in \cite{tekin2012online,dai2011non,liu2011logarithmic} suffer from a $\Theta(T)$ regret (although their weak regret upper bound is $O(\log T)$), while the Thompson Sampling policy only has a sub-linear regret upper bound in Bayesian setting.
The colored-UCRL2 policy \cite{ortner2012regret}, which possesses a sub-linear regret bound of $O(D\sqrt{T})$ with respect to $T$, suffers from an exponential implementation complexity (in $N$), even with an efficient oracle. 
Our Restless-UCB policy only requires a polynomial complexity (refer to Line $6$ in Algorithm \ref{Algorithm_UCB}) with an efficient offline oracle, and achieves a rigorous sub-linear regret upper bound.

%
%

The reason why the regret bound of Restless-UCB is slightly worse than colored-UCRL2 is because the observations in the exploitation phase are not used for updating the parameters of the game (the observations for different arms in the exploitation phase are interleaved with each other and are hard to be used as we discussed before). 
This means that only $\tilde{O}(T^{2\over 3})$ observations are used in estimating the offline problem instance, resulting in a bias of $\tilde{O}(T^{-{1\over 3}})$ according to 
\cite{hoeffding1963probability}. 
Colored-UCRL2 tackles this problem (i.e., utilize all the observations) by updating  transition vectors for all the possible $(s_i,\tau_i)$, and finding a policy based on all these transition vectors. As a result, it needs to work with  an exponential  space set and results in an exponential time complexity. 
%
%
%
We instead sacrifice a little on the regret  to obtain a significant reduction in the implementation complexity. 
We also emphasize that the colored-UCRL2 policy cannot simplify its implementation even under our assumptions, due to its need to compute the best policy based on all  transition vectors for all (exponentially many) $(s_i,\tau_i)$ pairs. 
Moreover,  the factor in the Restless-UCB's regret bound is ${N\over d_{\min}}+{M^3 \over (1-\lambda_{\max})^2}$, which is polynomial with $N,M$, and is \emph{exponentially} better than the factor $D$ in regret bounds of colored-UCRL2 \cite{ortner2012regret} or Thompson Sampling \cite{jung2019thompson}, which is the diameter of applying a learning policy to the problem and is exponential in $N,M$. 
In Appendix B, we also prove that our analysis can be adopted to reduce the $D$ factor in their regret bounds to polynomial values. This shows that our analysis approach can be of independent interest for analyzing online restless bandit problems. 


%
%

%

Now we highlight two key ideas of our theoretical analysis, each summarized in one lemma. They enable us to achieve polynomial time complexity and reduce the exponential factor in regret bounds.

\begin{restatable}{lemma}{LemmaKeyOne}\label{Lemma_Key1}
Conditioning on event $\mathcal{E}$, we have 
$\mu(\pi^*, \mathcal{R}) \le \mu({\pi^*}', \mathcal{R'})$. Here \begin{eqnarray*}
\mathcal{E} = \{ \forall i, |j-k|\le 1, |P_i(j,k) - \hat{P}_i(j,k)| \le rad(T),  |r(i,k) - \hat{r}(i,k)| \le rad(T)\}.
\end{eqnarray*}
\end{restatable}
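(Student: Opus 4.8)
The plan is to first reduce the claim to a single-policy comparison. Because ${\pi^*}'$ is by construction optimal for $\mathcal{R}'$, we have $\mu({\pi^*}',\mathcal{R}') = \mu^*(\mathcal{R}') = \sup_\pi \mu(\pi,\mathcal{R}')$, so it suffices to produce \emph{one} policy $\pi''$ for the instance $\mathcal{R}'$ whose average reward satisfies $\mu(\pi'',\mathcal{R}') \ge \mu(\pi^*,\mathcal{R})$. I would construct $\pi''$ by coupling the belief-state MDP of $\mathcal{R}'$ with that of $\mathcal{R}$, so that $\pi''$ replays on $\mathcal{R}'$ exactly the actions that $\pi^*$ takes on a coupled ``shadow'' copy of $\mathcal{R}$, while the observed states in $\mathcal{R}'$ always stay below those in $\mathcal{R}$.

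Two comparison facts drive the coupling. First, conditioning on $\mathcal{E}$ and reading off the construction in Line 6 of Algorithm \ref{Algorithm_UCB}, every row of $\bm{P}'_i$ is stochastically dominated (toward lower indices) by the corresponding row of $\bm{P}_i$: for interior $k$ the only nontrivial tail comparisons are $P'_i(k,k+1) = \hat{P}_i(k,k+1) - rad(T) \le P_i(k,k+1)$ and $P'_i(k,k)+P'_i(k,k+1) = 1 - (\hat{P}_i(k,k-1)+rad(T)) \le 1 - P_i(k,k-1)$, both immediate from $\mathcal{E}$; the boundary rows $k=1,M$ are handled by the special cases in Line 6, and $\bm{P}'_i$ stays a valid stochastic matrix (rows sum to one automatically, and nonnegativity of $P'_i(k,k+1)$ follows from Assumption \ref{Assumption_4} once $rad(T) \le c_1/2$, i.e.\ for $T$ large). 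Together with $r'(i,k) = \hat{r}(i,k) + rad(T) \ge r(i,k)$, this formalizes the ``push toward higher-reward states'' intuition. Second, the true chain $\bm{P}_i$ is \emph{stochastically monotone}: for a birth-death chain the monotonicity inequality $\sum_{j \ge m} P_i(k,j) \le \sum_{j\ge m} P_i(k+1,j)$ is trivial except at $m=k+1$, where it reads exactly $P_i(k,k+1)+P_i(k+1,k) \le 1$ --- precisely Assumption \ref{Assumption_3}.

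These two facts let me invoke the standard monotone-coupling comparison for Markov chains: if $\bm{P}'_i \preceq_{\mathrm{st}} \bm{P}_i$ rowwise and $\bm{P}_i$ is stochastically monotone, then from ordered starts there is a coupling keeping the $\mathcal{R}'$-chain pointwise below the $\mathcal{R}$-chain at every step; running it for $\tau$ steps shows the same domination persists for the $\tau$-step kernels $(\bm{P}'_i)^\tau \preceq_{\mathrm{st}} (\bm{P}_i)^\tau$ that actually govern belief-state transitions. I would then build $\pi''$ inductively, maintaining the invariant that the two belief states share identical elapsed-time counters $\tau_i$ and satisfy $s'_i \le s_i$ for every arm. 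At each step $\pi''$ plays $a_t = \pi^*(z_t)$ for its internal shadow state $z_t$; the environment $\mathcal{R}'$ returns a new observed state $k'$ for arm $a_t$ drawn from $(\bm{P}'_{a_t})^{\tau_{a_t}}$, and $\pi''$ uses its own randomness to draw the shadow state $k \ge k'$ from $(\bm{P}_{a_t})^{\tau_{a_t}}$ with the correct conditional law (possible since $(\bm{P}'_{a_t})^{\tau}(s'_{a_t},\cdot) \preceq_{\mathrm{st}} (\bm{P}_{a_t})^{\tau}(s_{a_t},\cdot)$, combining rowwise domination, monotonicity, and $s'_{a_t}\le s_{a_t}$). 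The invariant is preserved, and the per-step reward obeys $r'(a_t,k') = \hat{r}(a_t,k')+rad(T) \ge r(a_t,k') \ge r(a_t,k)$ using $\mathcal{E}$ and the decreasing-reward Assumption \ref{Assumption_Monotone}. Summing and passing to the long-run average gives $\mu(\pi'',\mathcal{R}') \ge \mu(\pi^*,\mathcal{R})$, completing the chain of inequalities.

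The main obstacle is making the belief-MDP coupling rigorous: $\pi^*$ is an arbitrary optimal policy whose action depends on the full belief state, so I must argue that $\pi''$ --- which only observes $\mathcal{R}'$ --- is nonetheless a legitimate (randomized, history-dependent) policy even though it replays $\pi^*$'s actions. The point is that its shadow state $z_t$ is a deterministic function of the $\mathcal{R}'$-observations together with its internal coupling randomness, so $\pi''$ is admissible and $\mu(\pi'',\mathcal{R}') \le \mu^*(\mathcal{R}')$ holds. The accompanying technical care is verifying that both stochastic domination and stochastic monotonicity survive matrix powering (so the coupling extends from single transitions to the $\tau$-step kernels) and that all boundary and positivity conditions hold; these lean on Assumptions \ref{Assumption_BirthDeath}, \ref{Assumption_3} and \ref{Assumption_4}.
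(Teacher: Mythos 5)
Your proof is correct and is essentially the paper's own argument: the paper likewise simulates $\pi^*$ on $\mathcal{R}'$ via a fictitious policy (Algorithm \ref{Simulate_1}) that couples a virtual $\mathcal{R}$-trajectory to the real $\mathcal{R}'$-trajectory, using the same two ingredients --- rowwise domination of $\bm{P}_i'$ over $\bm{P}_i$ under $\mathcal{E}$ (Lemma \ref{Lemma_111}) and birth-death stochastic monotonicity from Assumption \ref{Assumption_3} (Lemma \ref{Lemma_112}), propagated to $\tau$-step kernels (Lemma \ref{Lemma_6}) --- with your conditional-law shadow draw realized there as the {\sf Correspond} procedure (Algorithm \ref{Simulate_2}). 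The concluding chain $r'(i,s'_i(t)) \ge r(i,s'_i(t)) \ge r(i,s_i(t))$ together with the optimality of ${\pi^*}'$ on $\mathcal{R}'$ is identical in both arguments.
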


%
\begin{remark}

Conditioning on event $\mathcal{E}$, which is guaranteed to happen with high probability, the probability of transitioning to a lower index state (which has a higher expected reward)  in $\mathcal{R}'$ is always larger than that one in $\mathcal{R}$. Lemma \ref{Lemma_Key1} shows that we can obtain a higher average reward in $\mathcal{R}'$ than in $\mathcal{R}$.
This implies that we can efficiently (with $O(N)$ complexity) construct a better instance within the confidence set, which is an important step in analyzing restless MDPs, e.g., \cite{ortner2012regret}, whereas the prior work takes an exponential time for the construction.  

\end{remark}


%
%



\begin{restatable}{lemma}{LemmaKeyThree}\label{Lemma_Key3}
Conditioning on event $\mathcal{E}$, 
$\mu({\pi^*}', \mathcal{R'}) - \mu({\pi^*}', \mathcal{R}) = \tilde{O}\left({M^3 \over (1-\lambda_{\max})^2} T^{-{1\over 3}} \right)$.
\end{restatable}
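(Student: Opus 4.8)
The plan is to treat the fixed policy ${\pi^*}'$ as inducing, in each problem instance, a Markov reward process on the belief-state space $z = \{(s_i,\tau_i)\}_{i=1}^N$. Write $Q,\rho$ for the belief-state transition kernel and per-state expected reward that ${\pi^*}'$ induces in the true instance $\mathcal{R}$, and $Q',\rho'$ for those induced in the constructed instance $\mathcal{R}'$; let $\nu,\nu'$ be the respective stationary distributions, so that $\mu({\pi^*}',\mathcal{R}) = \nu^{\top}\rho$ and $\mu({\pi^*}',\mathcal{R}') = \nu'^{\top}\rho'$. The starting point is the average-reward perturbation identity obtained from the Poisson (Bellman) equation $h - Qh = \rho - \mu({\pi^*}',\mathcal{R})\mathbf{1}$ for the bias function $h$ of the $\mathcal{R}$-chain. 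Using the stationarity $\nu'^{\top}Q' = \nu'^{\top}$ and the fact that each row of $Q'-Q$ sums to zero, I would derive
\[
\mu({\pi^*}',\mathcal{R}') - \mu({\pi^*}',\mathcal{R}) = \nu'^{\top}(\rho'-\rho) + \nu'^{\top}(Q'-Q)\,h,
\]
so that the gap is bounded by $\|\rho'-\rho\|_\infty + \max_z\|Q'(z,\cdot)-Q(z,\cdot)\|_1 \cdot \mathrm{span}(h)$.

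The next step is a perturbation lemma for matrix powers. Each belief transition satisfies $Q(z,z_k^{i}) = (\bm{e}_{s_i}\bm{P}_i^{\tau_i})_k$, and under event $\mathcal{E}$ the single-step matrices obey $|P_i(j,k) - P'_i(j,k)| \le 2\,rad(T)$ on the at-most-three (by Assumption \ref{Assumption_BirthDeath}) nonzero band entries. I would telescope
\[
\bm{P}_i^{\tau} - (\bm{P}_i')^{\tau} = \sum_{j=0}^{\tau-1}\bm{P}_i^{\,j}\,(\bm{P}_i - \bm{P}_i')\,(\bm{P}_i')^{\tau-1-j},
\]
observe that each signed row measure $\bm{e}_{s_i}\bm{P}_i^{\,j}(\bm{P}_i-\bm{P}_i')$ has zero total mass and $\ell_1$-norm $O(rad(T))$, and then use geometric ergodicity of every chain in $\mathcal{P}_i$ (available since $\lambda_{\max}<1$) to contract it at rate $\lambda_{\max}^{\tau-1-j}$. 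The geometric sum gives $\|\bm{e}_{s_i}(\bm{P}_i^{\tau}-(\bm{P}_i')^{\tau})\|_1 = O\!\big(rad(T)/(1-\lambda_{\max})\big)$ \emph{uniformly in $\tau$}. This $\tau$-independence is exactly the sparsity exploitation: each belief state has only $\le M$ successors, all driven by a single $M$-state chain, so one spectral gap controls the infinitely many belief transitions. The same bound governs $\max_z\|Q'-Q\|_1$, and combined with $|r-r'|\le 2\,rad(T)$ and $r\in[0,1]$ it also governs $\|\rho'-\rho\|_\infty$; both are $\tilde O\!\big(\mathrm{poly}(M)\,T^{-1/3}/(1-\lambda_{\max})\big)$.

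The remaining factor is $\mathrm{span}(h)$, the bias span, which I would interpret as the reward-deviation accumulated before two copies of the $\mathcal{R}$-chain couple, i.e. essentially the mixing time of the belief chain, and control by $\tilde O\!\big(\mathrm{poly}(M)/(1-\lambda_{\max})\big)$ via the per-arm geometric ergodicity. Multiplying $\max_z\|Q'-Q\|_1 = \tilde O\!\big(T^{-1/3}/(1-\lambda_{\max})\big)$ by $\mathrm{span}(h) = \tilde O\!\big(1/(1-\lambda_{\max})\big)$ produces the dominant $(1-\lambda_{\max})^{-2}$ factor, while the reward term is lower order. Careful accounting of the polynomial $M$-factors (the $\ell_1$-ergodicity constant relating the spectral gap to actual $\ell_1$ decay, the $\le M$ successors per belief state, and the bias span) yields the $M^3$ dependence, and substituting $rad(T)=\tilde O(T^{-1/3})$ gives the stated rate.

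The hard part will be bounding $\mathrm{span}(h)$ rigorously: the belief-state space is infinite because the delays $\tau_i$ are unbounded, and the chain is a joint evolution of all $N$ arms, so finite-state mixing bounds do not apply directly. I expect the crux is to show that, despite the infinite state space, the bias span is still governed by the single-chain gap $1-\lambda_{\max}$ and a polynomial in $M$, leveraging that each coordinate relaxes to its stationary law $\bm{d}^{(i)}$ geometrically. A secondary subtlety is verifying positive recurrence of the $Q$-chain and well-definedness of $h$ and $\nu'$ on the recurrent class even for policies that starve some arm (where $\tau_i\to\infty$ and that coordinate degenerates to $\bm{d}^{(i)}$).
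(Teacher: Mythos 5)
Your overall skeleton---a perturbation identity for the average reward of a fixed policy under two kernels, plus a uniform-in-$\tau$ bound on $\|\bm{v}\bm{P}_i^\tau - \bm{v}(\bm{P}_i')^\tau\|_1$ obtained by telescoping and geometric ergodicity---matches the first half of the paper's proof (its Lemmas 7--11 and the recursion $\bm{V}_{t+1}-\bm{V}_{t+1}' = \mathcal{T}(\bm{V}_t-\bm{V}_t') + (\mathcal{T}-\mathcal{T}')\bm{V}_t' + (\bm{r}-\bm{r}')$, which is your Poisson identity in finite-horizon form). But there is a genuine gap exactly at the step you defer. First, you do not need (and should not try to bound) the global span of the bias: because $Q$ and $Q'$ put mass on the \emph{same} $M$ successor belief states from every $z$, and each row of $Q'-Q$ sums to zero, only differences $|h(z_j')-h(z_k')|$ among the $M$ successors of a common $z$ (belief states differing in the single just-pulled coordinate) ever enter the term $\nu'^{\top}(Q'-Q)h$. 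This restriction is not cosmetic: it is precisely the paper's ``sparsity exploitation,'' and the global span you target is essentially the diameter-type quantity the lemma is designed to avoid, which is exponential in general. Your proposed control of it by ``the mixing time of the belief chain'' fails for the reason you yourself flag: from different starting belief states the policy takes different actions, so the two trajectories do not couple, and no finite-state mixing bound applies.

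Second---and this part cannot be repaired inside your setup---you wrote the Poisson equation for the chain induced in the \emph{true} instance $\mathcal{R}$, so your perturbation term involves $h$, the bias of ${\pi^*}'$ in $\mathcal{R}$. The paper's bound on the sibling value differences (its Lemma 14, built from Lemmas 12--13) is not a pure mixing argument: it constructs a simulated policy that starts at $z_j'$ but acts as if at $z_k'$, uses per-arm geometric mixing to bound the coupled reward discrepancy by $2M/(1-\lambda_{\max})$, and then crucially invokes the \emph{optimality of ${\pi^*}'$ in $\mathcal{R}'$} (a Bellman/Q-value inequality) to conclude that the simulated policy's value does not exceed $V_\tau'(z_j')$. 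Since ${\pi^*}'$ is not optimal for $\mathcal{R}$, no such inequality exists for your $h$. To follow the paper's route you must use the symmetric identity $\mu({\pi^*}',\mathcal{R}')-\mu({\pi^*}',\mathcal{R}) = \nu^{\top}(\rho'-\rho) + \nu^{\top}(Q'-Q)h'$, with $h'$ the bias in $\mathcal{R}'$ where optimality holds, and then prove $\lim_{\tau\to\infty}\max_{j>k}|V_\tau'(z_j')-V_\tau'(z_k')| \le 2M/(1-\lambda_{\max})$ via the simulation-plus-optimality argument (handling aperiodicity and arms pulled finitely often, as the paper does). That sibling-span bound is the actual content of the lemma; what your proposal establishes rigorously is only the comparatively routine kernel- and reward-perturbation estimates.
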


\begin{remark}
%
Existing results in \cite{ortner2012regret,jung2019thompson} treat the restless bandit game as a general MDP. Doing so results in the factor $D$ in their regret upper bounds ($D$ is the diameter of the game and is exponential in the game size). 
In our case,  the factor ${M^3 \over (1-\lambda_{\max})^2}$ in Lemma \ref{Lemma_Key3} is polynomial with the the game size. 
%
%
This improvement comes from a novel exploitation in the sparsity in general restless bandit problems, 
i.e., each belief state can only transit to $M$ other ones. 
This novel result can also help to reduce the exponential factors in previous regret bounds, e.g., \cite{ortner2012regret,jung2019thompson}, to polynomial ones in general restless bandit problems, though their complexities remain exponential (see details in Appendix B), and can be of independent interest in analyzing online restless bandit problems. 
%
\end{remark}

\subsection{Restless-UCB with Efficient Offline Approximate Oracles}


%
%
%
In this section, we show how Restless-UCB can be combined with approximate oracles to achieve good regret performance. 
In practical applications, finding the optimal policy of the offline problem is in general NP-hard \cite{papadimitriou1999complexity} and one can only obtain efficient approximate solutions \cite{liu2009myopic,guha2010approximation}. As a result, how to perform learning efficiently and achieve low regret when \texttt{Oracle} can only return an approximation policy, e.g., \cite{chen2013combinatorial,chen2016combinatorial,wen2015efficient}, is of great interest and importance in designing low-complexity and efficient learning policies. 

%

%
%
%
The definitions of approximate policies and approximation regret is given below. 
\begin{mydef} 
For an offline restless bandit instance $\mathcal{R}$, an approximate policy $\tilde{\pi}$ with approximate ratio $\lambda>0$ satisfies that 
$\mu(\tilde{\pi}, \mathcal{R}) \ge \lambda \mu^*( \mathcal{R})$.
\end{mydef}

\begin{mydef}
For an online restless bandit instance $\mathcal{R}$, the approximation regret with approximate ratio $\lambda>0$ for learning policy $\pi$ is defined as 
$Reg(T, \lambda) = \lambda \mu^*( \mathcal{R}) - \sum_{t=1}^T \E[x^\pi(t)]$.
\end{mydef}

\begin{restatable}{theorem}{PropApp}\label{Proposition_Approx}
If \texttt{Oracle} returns an approximate policy with ratio $\lambda$, then under Assumptions \ref{Assumption_Monotone}, \ref{Assumption_BirthDeath}, \ref{Assumption_3} and \ref{Assumption_4}, the approximation regret (with approximate ratio $\lambda$) of Restless-UCB with $m(T) = T^{2\over 3}$ in an online restless bandit problem $\mathcal{R}$ is upper bounded by $\tilde{O}(T^{2\over 3})$.
\end{restatable}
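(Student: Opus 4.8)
The plan is to decompose the approximation regret in exactly the same way as the exact-oracle analysis of Theorem~\ref{Theorem_1}, but carrying the factor $\lambda$ through every step. First I would split the game into the exploration phase (of length $\tilde{O}(m(T)) = \tilde{O}(T^{2\over 3})$) and the exploitation phase. The exploration phase contributes at most $\tilde{O}(T^{2\over 3})$ to the regret regardless of the oracle, since its length is $\tilde{O}(T^{2\over 3})$ and per-step rewards are bounded in $[0,1]$; this piece is unchanged from the exact case. The substance is therefore the exploitation phase, where I must bound $T\bigl(\lambda\mu^*(\mathcal{R}) - \mu(\tilde{\pi},\mathcal{R})\bigr)$, where $\tilde{\pi}$ is the approximate policy returned by \texttt{Oracle} on the constructed instance $\mathcal{R}'$.

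The key idea is to insert the instance $\mathcal{R}'$ and split the gap as
\begin{equation*}
\lambda\mu^*(\mathcal{R}) - \mu(\tilde{\pi},\mathcal{R}) = \bigl[\lambda\mu^*(\mathcal{R}) - \mu(\tilde{\pi},\mathcal{R}')\bigr] + \bigl[\mu(\tilde{\pi},\mathcal{R}') - \mu(\tilde{\pi},\mathcal{R})\bigr].
\end{equation*}
For the first bracket, I would argue that on the high-probability event $\mathcal{E}$ of Lemma~\ref{Lemma_Key1}, the constructed instance $\mathcal{R}'$ is optimistic, so $\mu^*(\mathcal{R}) \le \mu^*(\mathcal{R}')$. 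Since $\tilde{\pi}$ is a $\lambda$-approximate policy for $\mathcal{R}'$, the approximation guarantee gives $\mu(\tilde{\pi},\mathcal{R}') \ge \lambda\mu^*(\mathcal{R}') \ge \lambda\mu^*(\mathcal{R})$, so the first bracket is at most zero conditioned on $\mathcal{E}$. This is the direct analogue of Lemma~\ref{Lemma_Key1}, with the optimality of ${\pi^*}'$ replaced by the $\lambda$-approximation inequality; the monotonicity of $\mu^*$ in the optimistic instance is what makes $\lambda\mu^*(\mathcal{R}) \le \lambda\mu^*(\mathcal{R}') \le \mu(\tilde{\pi},\mathcal{R}')$ go through.

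For the second bracket, I would invoke exactly the perturbation bound of Lemma~\ref{Lemma_Key3}, whose proof bounds $\mu(\cdot,\mathcal{R}') - \mu(\cdot,\mathcal{R})$ for \emph{any fixed policy} by exploiting that $\mathcal{R}'$ and $\mathcal{R}$ differ by at most $rad(T)$ in each transition and reward entry and that each belief state transitions to only $M$ others. The crucial observation is that this lemma never used optimality of the policy being evaluated, only that the two instances are close; hence it applies verbatim to the approximate policy $\tilde{\pi}$, yielding $\mu(\tilde{\pi},\mathcal{R}') - \mu(\tilde{\pi},\mathcal{R}) = \tilde{O}\bigl({M^3\over(1-\lambda_{\max})^2}T^{-{1\over 3}}\bigr)$ on $\mathcal{E}$. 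Multiplying by the horizon $T$ turns this into an $\tilde{O}(T^{2\over 3})$ contribution, matching the exploration cost. Finally I would handle the complement $\mathcal{E}^c$: since $\Pr[\mathcal{E}^c]$ is polynomially small in $T$ (by the Hoeffding bound with $m(T)=T^{2\over 3}$ observations) and the per-step reward is at most $1$, the contribution of the bad event over $T$ steps is negligible, i.e.\ $o(T^{2\over 3})$.

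The main obstacle I anticipate is conceptual rather than computational: confirming that Lemma~\ref{Lemma_Key1} genuinely upgrades to a statement about $\mu^*(\mathcal{R}')$ rather than just about the specific optimal policy ${\pi^*}'$. Lemma~\ref{Lemma_Key1} as stated compares $\mu(\pi^*,\mathcal{R})$ to $\mu({\pi^*}',\mathcal{R}')$; what I actually need is the stronger monotonicity $\mu^*(\mathcal{R}) \le \mu^*(\mathcal{R}')$, so that the approximation ratio can be applied against the optimum of $\mathcal{R}'$. I expect this follows from the same stochastic-dominance argument underlying Lemma~\ref{Lemma_Key1} — the optimistic construction makes every transition stochastically favor lower (higher-reward) states in $\mathcal{R}'$, so the optimal achievable average reward can only increase — but making the coupling argument apply to the \emph{supremum over policies} rather than to a single transported policy is the step that requires the most care, and is where the bulk of the rigorous work lies.
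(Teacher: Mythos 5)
There is a genuine gap, and it sits exactly where you thought you were safe. Your treatment of the second bracket asserts that Lemma~\ref{Lemma_Key3} ``never used optimality of the policy being evaluated'' and therefore applies verbatim to the approximate policy $\tilde{\pi}$. This is false. The proof of Lemma~\ref{Lemma_Key3} controls the term $\|(\mathcal{T}-\mathcal{T}')\bm{V}_\tau'\|_\infty$ by the span $\max_{j>k}|V_\tau'(z_j')-V_\tau'(z_k')|$, and that span is bounded by ${2M\over 1-\lambda_{\max}}$ only through Lemma~\ref{Lemma_Key2}, whose proof (via Lemma~\ref{Lemma_222}) is a Bellman-optimality argument: the inequality chain $Q(z_j') \ge \E[r_1'(z_j')] - {\mu^*}' + \E[Q(z_1'(z_j'))] \ge \cdots$ holds precisely because the simulated policy ${\pi_{z,j,k}^*}'$ deviates from the \emph{optimal} policy ${\pi^*}'$ and can therefore only lose value. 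For a sub-optimal $\tilde{\pi}$ this inequality has the wrong direction available to it, and the polynomial span bound collapses; an instance-perturbation bound for a fixed policy inherently scales with the bias span of that policy, which for an arbitrary policy is not $O\bigl({M\over 1-\lambda_{\max}}\bigr)$. The paper states this obstruction explicitly (``Lemma~\ref{Lemma_Key2} does not work here since it relies on the optimality of ${\pi^*}'$'') and repairs it differently: it bounds $\lim_{\tau\to\infty}\max_{j>k}|V_\tau'(z_j')-V_\tau'(z_k')|$ by the diameter $D$ of applying $\tilde{\pi}'$ in $\mathcal{R}'$ (via the results of Bartlett and Tewari). Since $D$ is a constant independent of $T$ --- though possibly exponential in $N,M$ --- the exploitation regret is still $\tilde{O}(T^{2\over 3})$, which is all the theorem claims; your version would additionally claim a polynomial constant factor that the paper deliberately does not.

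Conversely, the step you flagged as the ``main obstacle'' is a non-issue. Lemma~\ref{Lemma_Key1} already \emph{is} the statement $\mu^*(\mathcal{R}) \le \mu^*(\mathcal{R}')$: since $\pi^*$ and ${\pi^*}'$ are by definition the optimal policies of $\mathcal{R}$ and $\mathcal{R}'$ respectively, $\mu(\pi^*,\mathcal{R}) = \mu^*(\mathcal{R})$ and $\mu({\pi^*}',\mathcal{R}') = \mu^*(\mathcal{R}')$, so no upgrade from a single transported policy to a supremum is needed; the paper's proof of the first bracket is exactly your chain $\mu(\tilde{\pi}',\mathcal{R}') \ge \lambda\,\mu({\pi^*}',\mathcal{R}') \ge \lambda\,\mu(\pi^*,\mathcal{R})$ on the event $\mathcal{E}$. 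So your decomposition and first bracket match the paper; the second bracket needs to replace the appeal to Lemma~\ref{Lemma_Key3} with the diameter bound (or an independent argument that the bias span of $\tilde{\pi}'$ is $T$-independent) for the proof to stand.
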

%

Theorem \ref{Proposition_Approx} shows that Restless-UCB  can be combined with approximate policies for the offline problem to achieve good performance, i.e., it can reduce the time complexity by applying an approximate oracle.  
This feature is not possessed by other policies such as colored-UCRL2 \cite{jung2019regret} or Thompson Sampling \cite{jung2019thompson}. 
Specifically, colored-UCRL2 needs to solve a set of Bellman equations with exponential size to find out the better instance within the confidence set, thus using an approximate policy leads to a similar approximation regret but cannot reduce the time complexity. 
Thompson Sampling, on the other hand, can only apply the approximate approach in the Bayesian setting  \cite{wang2018thompson}. 

\section{Experiments}

In this section, we present some of our experimental results.
In all these experiments, we  use the offline policy proposed by \cite{liu2010indexability} as the offline oracle of restless bandit problems. 

\subsection{Experiments on Constructed Instance}

We consider two problem instances, each instance contains two arms and each arm evolves according to  a two-state Markov chain. 
%
In both instances, $r(i,2) = 0$ for any arm $i$, and all Markov chains start at state $2$. In problem instance one,  $r(1,1) = 1$, $r(2,1) = 0.8$, $P_1(1,1) = 0.7$, $P_1(2,2) = 0.8$, $P_2(1,1) = 0.5$ and $P_2(2,2) = 0.6$. 
In problem instance two,  $r(1,1) = 0.8$, $r(2,1) = 0.4$, $P_1(1,1) = 0.7$, $P_1(2,2) = 0.9$, $P_2(1,1) = 0.7$ and $P_2(2,2) = 0.5$. 

We compare three different algorithms, including Restless-UCB, state-of-the-art colored-UCRL2 \cite{ortner2012regret} and Thompson Sampling policies with different priors, TS-$9$ and TS-$4$ \cite{jung2019thompson}. In TS-9, the prior distributions of transition probability of Markov chain $M_i$ are the uniform one on $[0.1, 0.2, 0.3, 0.4, 0.5, 0.6, 0.7, 0.8, 0.9]^2$ (for the values $P_i(1,1) $ and $P_i(2,2)$),  and the prior distributions of different Markov chains are independent. In TS-4, the prior support is $[0.2, 0.4, 0.6, 0.8]^2$ instead. 

The regrets of these algorithms are shown in Figures \ref{Figure_2} and \ref{Figure_3}, which take average over 1000 independent runs. The expected regrets of Restless-UCB are smaller than TS-4 and colored-UCRL2, but larger than TS-9. This is because the support of prior distribution in TS-9 contains the real problem instance. As a result, its samples equal to the real problem instance with high probability. However, from the expected regrets of TS-4, one can see that when the support of its prior distribution is not close to the real instance, its expected regret grows linearly as $T$ increases. 
Besides, compare with Restless-UCB, TS policy has a much larger variance on the regret, due to the high degree of randomness on the samples.
Therefore, Restless-UCB is more robust against inaccurate estimation in the prior distributions, and has a more reliable theoretical guarantee due to its low variance on regret.

\begin{figure} 
\centering 
\subfigure[Regret in instance 1]{ \label{Figure_2} 
\includegraphics[width=1.71in]{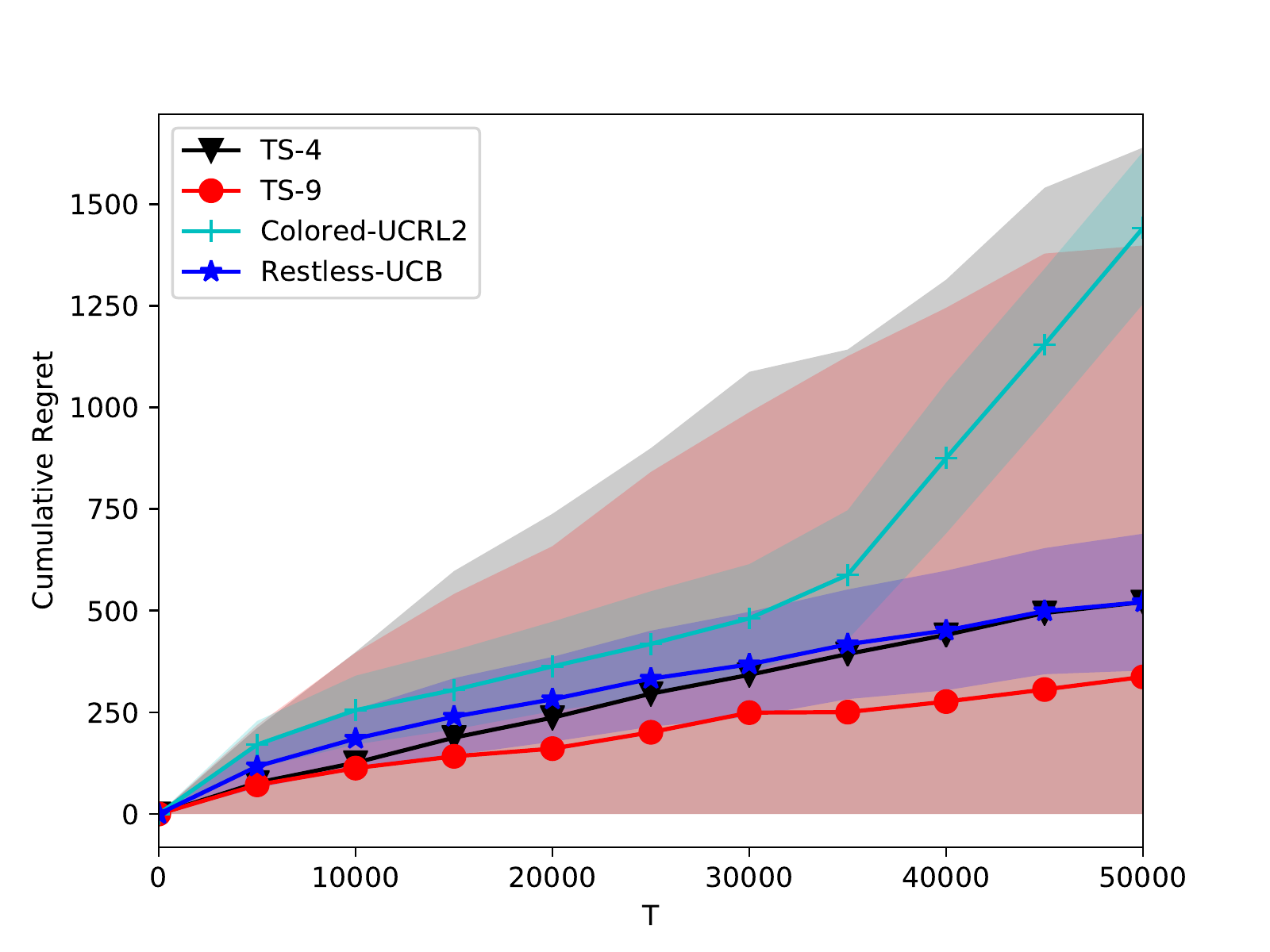}}
\subfigure[Regret in instance 2]{ \label{Figure_3} 
\includegraphics[width=1.71in]{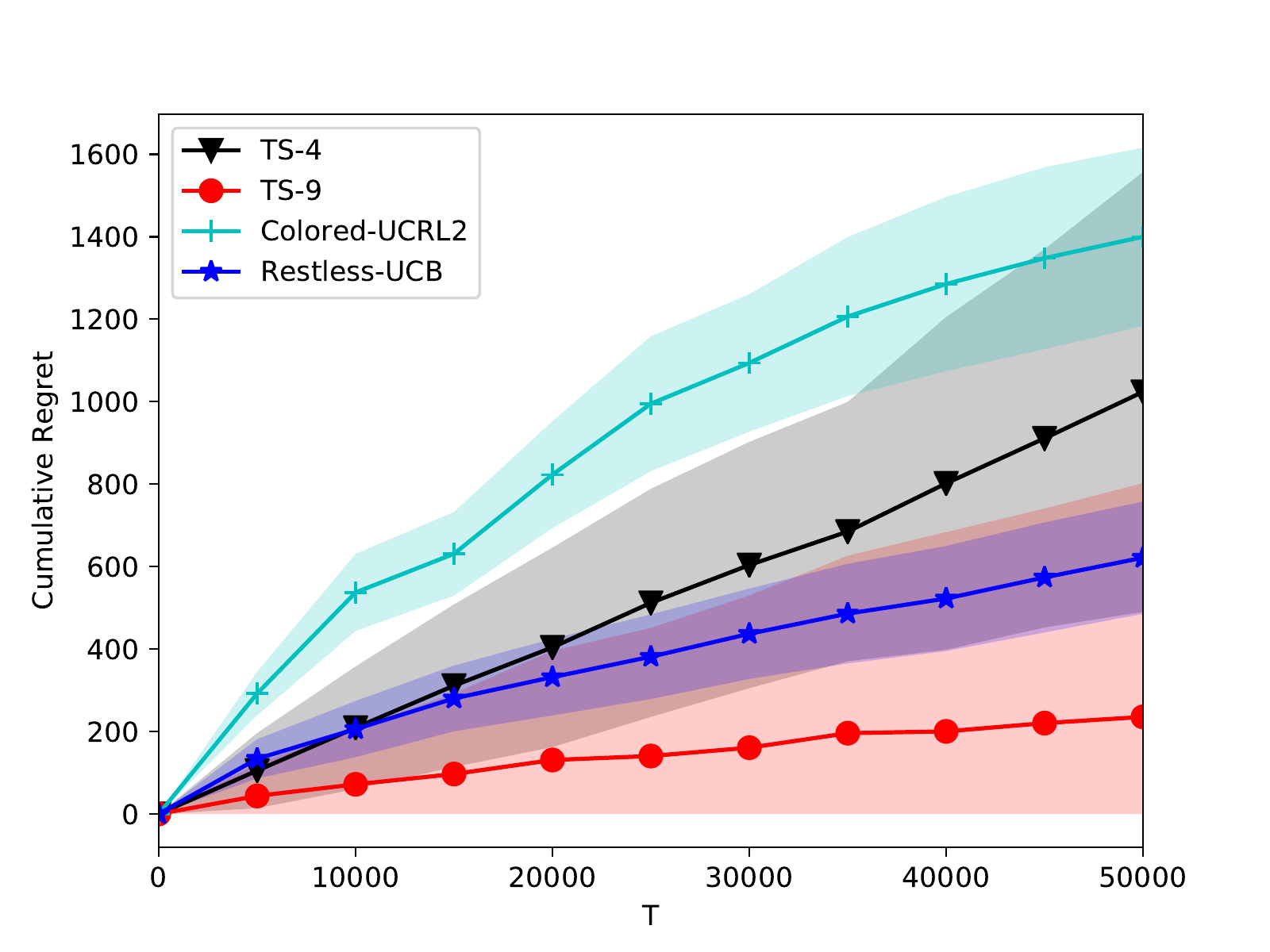}}
\subfigure[Regret in urban environment (S-band)]{ \label{Figure_4} 
\includegraphics[width=1.71in]{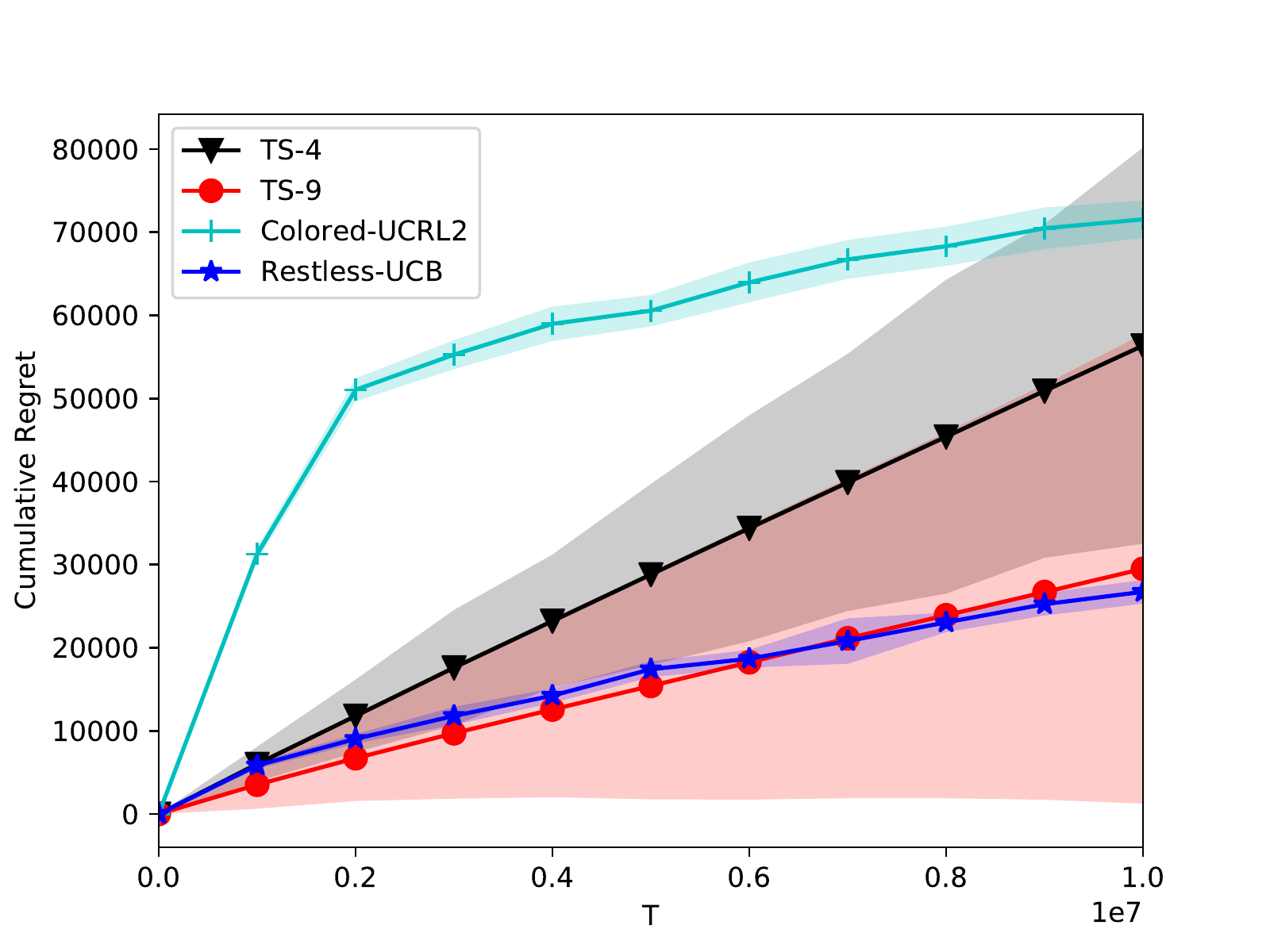}}
\subfigure[Regret in heavy  tree-shadowed  environment (S-band)]{ \label{Figure_5} 
\includegraphics[width=1.71in]{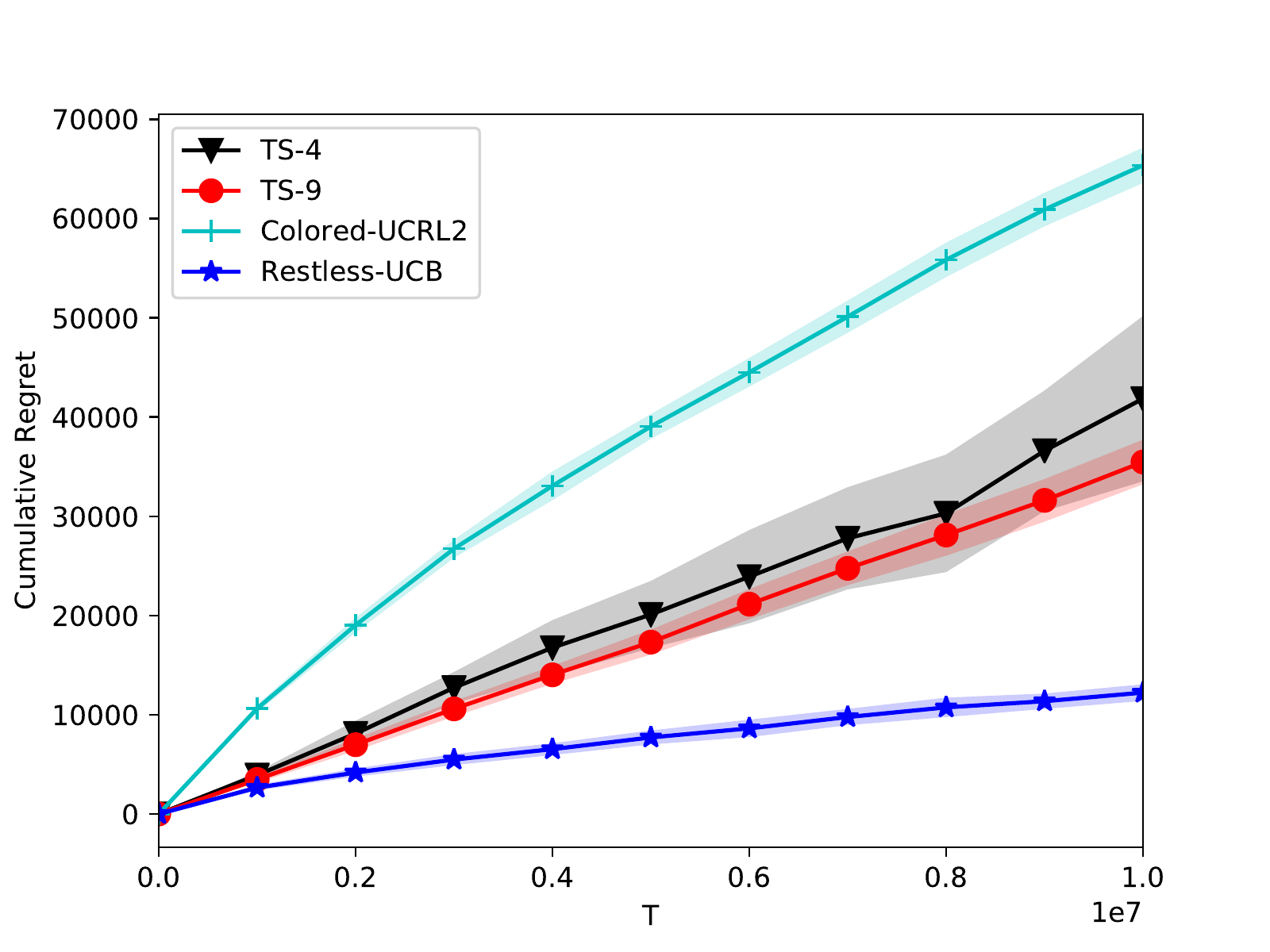}}
\subfigure[Regret in suburban environment (L-band)]{ \label{Figure_6} 
\includegraphics[width=1.71in]{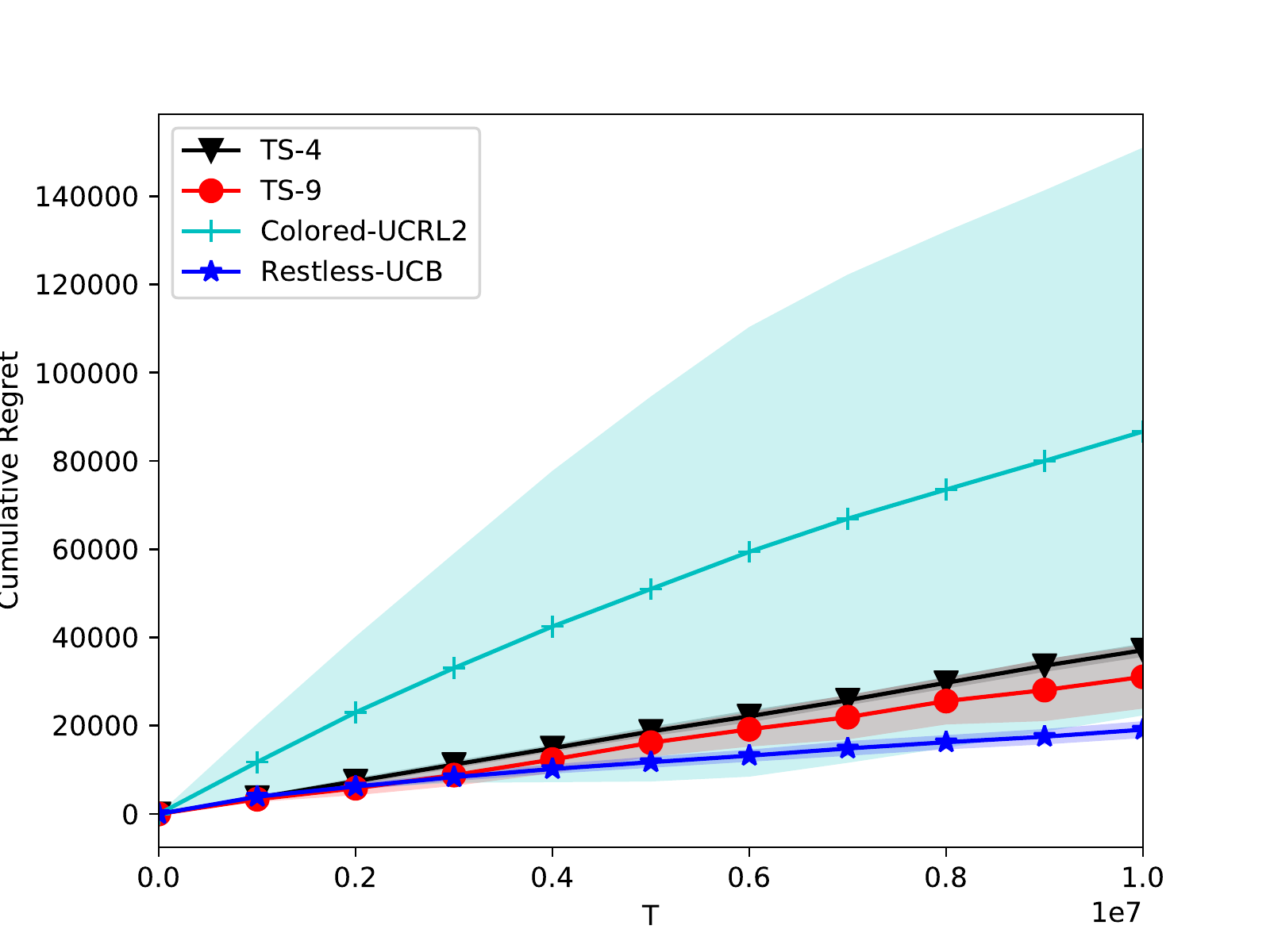}}
\subfigure[Regret in urban environment (L-band)]{ \label{Figure_7} 
\includegraphics[width=1.71in]{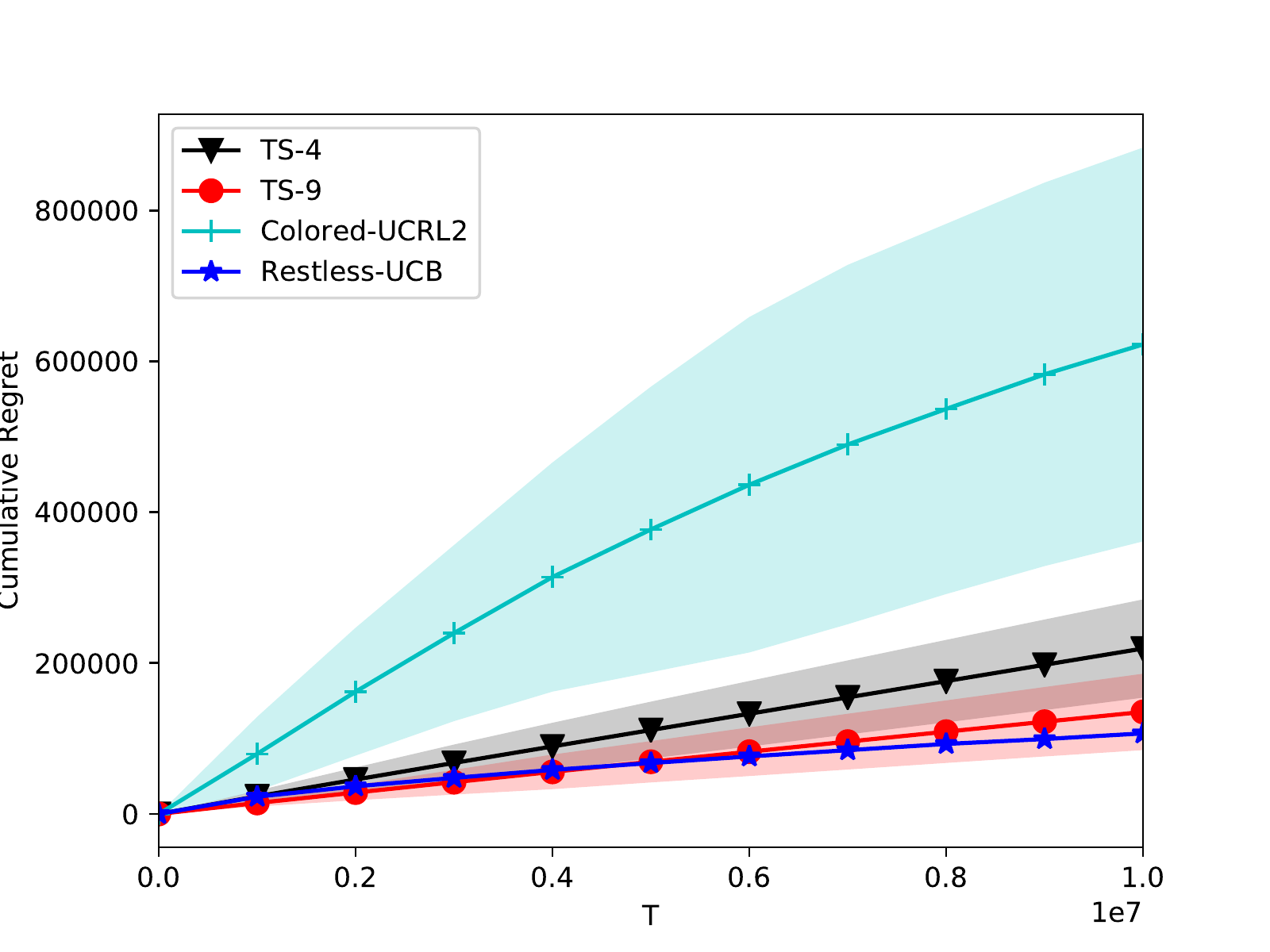}}
 \caption{Experiments: Comparison of regrets of different algorithms} \label{Figure_Experiments}  
\end{figure}

We also compare the average running times of the different algorithms. In this experiment, $T=500000$. The four problem instances contain $N=2,3,4,5$ arms and each arm has a two-state Markov chain. The results in Table \ref{Table_RunTime} take average over 50 runs of a single-threaded program (running on an Intel E5-2660 v3 workstation with 256GB RAM). 
They show that Restless-UCB policy is much more efficient when there are more arms (particularly compared to colored-UCRL2). The time complexity of colored-UCRL2 grows exponentially as $N$ grows up, while the time complexity of Restless-UCB only has a minimal increase. 


 \begin{table}[ht]
   \caption{Average running times of different algorithms}
  \label{Table_RunTime}
  \centering
\begin{tabular}{|c|c|c|c|c|c|} 
  \toprule
   Algorithm     &  2 arms   & 3 arms & 4 arms & 5 arms \\
  \midrule
  Restless-UCB & 4s  & 5s   & 5s  & 5s \\
  TS-9 & 4s & 6s    & 8s & 10s \\
  TS-4   & 3s     & 4s & 6s & 9s \\
  Colored-UCRL2 & 5s  & 326s & 8892s & 129387s  \\
  \bottomrule
\end{tabular}
\end{table}


\subsection{Experiments with Real Data Set}

We also use real datasets to compare the behavior of different algorithms. 

Here we use the wireless communication  dataset in \cite{prieto2010versatile}. 
It is a setting on digital video broadcasting satellite services to handheld devices via land mobile satellite.
\cite{prieto2010versatile} provided the parameters of two-state Markov chain representations on the channel model in three different environments, including urban, suburban and heavy tree-shadowed environments.
In this experiment, different elevation angles of antenna are represented as arms, and different elevation angles correspond to different channel parameters, including the transition matrices and propagation impairments.
Our goal is to correctly transmit as many data packets as possible within a time horizon $T$. 
We use the transition probability matrices in Tables IV and VI in  \cite{prieto2010versatile}. We also use the average direct signal mean given in Tables III and V in \cite{prieto2010versatile} as the expected reward. 
In Figure \ref{Figure_4}, we consider communicating via S-band under the urban environment, and one can choose the elevation angle to be either $40^\circ$ or $80^\circ$. In Figure \ref{Figure_5}, we consider communicating via S-band under the heavy tree-shadowed environment, and one can choose the elevation angle to be either $40^\circ$ or $80^\circ$. 
%
%
In Figure \ref{Figure_6}, we consider communicating via L-band under the suburban environment, and one can choose the elevation angle to be $50^\circ$,  $60^\circ$ or $70^\circ$. In Figure \ref{Figure_7}, one consider communicating via L-band under the urban environment, and we can choose the elevation angle to be either $10^\circ$,  $20^\circ$, $30^\circ$ or $40^\circ$. 
All of these results take average over 200 independent runs.


One can see that Restless-UCB performs the best in all these experiments, it achieves the smallest expected regret and the smallest variance on regret. As mentioned before, the TS policy suffers from a linear expected regret since its support does not contain the real transition matrices, and it has a large variance on regret at the same time. Although colored-UCRL2 achieves a sub-linear regret, it suffers from a large constant factor and performs worse than Restless-UCB. When there are more arms (see Figures \ref{Figure_6} and \ref{Figure_7}), colored-UCRL2 also suffers from a large variance on regret. 
These results demonstrate the effectiveness of Restless-UCB. 


%


\section{Conclusion}

In this paper, we propose a low-complexity and efficient algorithm, called   Restless-UCB,  for online restless bandit. We show that Restless-UCB achieves a sublinear regret upper bound $\tilde{O}(T^{2\over 3})$, with a polynomial time implementation complexity. Our novel analysis technique also helps to reduce both the time complexity of the policy and the exponential factor in existing regret upper bounds. We conduct experiments based on real-world datasets to show that Restless-UCB outperforms existing benchmarks and has a much shorter running time. 



\section*{Broader Impact}

Online restless bandit model has found applications in many important areas such as wireless communications \cite{ahmad2009optimality,ahmad2009multi}, recommendation systems \cite{meshram2015restless} and queueing systems \cite{ansell2003whittle}. 
Existing results face challenges including exponential implementation-complexity and  regret bounds that are exponential in the size of the game \cite{ortner2012regret,jung2019thompson,jung2019regret}. 
Our Restless-UCB algorithm offers a novel approach that enjoys   $O(N)$ time-complexity to implement, which greatly reduces the running time in real applications.
Moreover, our  analysis reduces the exponential factor in the regret upper bound to a polynomial one. 
Our  work contributes to  designing low-complexity and efficient learning policies for online restless bandit problem and can likely find applications in a wide range of areas. 



\begin{ack}

The work of Siwei Wang and Longbo Huang was supported in part by the National Natural Science Foundation of China Grant 61672316, the Zhongguancun Haihua Institute for Frontier Information Technology and the Turing AI Institute of Nanjing.

The work of John C.S. Lui is supported in part by the GRF 14201819.


\end{ack}




\bibliography{restless}
\bibliographystyle{abbrv}

\onecolumn

\appendix

\section*{Appendix}

\section{Proof of Theorem \ref{Theorem_1}}


In this section, we first propose the proofs of our two key lemmas, i.e., Lemma \ref{Lemma_Key1} and Lemma \ref{Lemma_Key3}. Then we state other lemmas and facts that are helpful in the proof of Theorem \ref{Theorem_1}. Finally, we show the complete proof of Theorem \ref{Theorem_1}. 

In the following, we concentrate on the case that $rad(T) \le {c_1\over 3}$. If $rad(T) \ge  {c_1\over 3}$, then $T = \tilde{O}({1\over c_1^3})$, which implies that the regret is at most $\tilde{O}({1\over c_1^3})$.


\subsection{Proof of Lemma \ref{Lemma_Key1}}

We first introduce a useful definition: 
\begin{mydef}
 	For two vectors $\bm{v}$ and $\bm{v}'$ with $M$ dimensions, $\bm{v} \gtrsim \bm{v}'$ if for all $k \le M$, $\sum_{j=1}^k v_j \ge \sum_{j=1}^k v'_j$.
\end{mydef}

Based on this definition, we have the following lemmas.

\begin{lemma}\label{Lemma_111}
Under Assumptions \ref{Assumption_BirthDeath}, \ref{Assumption_4} and conditioning on event $\mathcal{E}$, we have that $\bm{P}_i'(k) \gtrsim \bm{P}_i(k)$, where $\bm{P}_i'(k)$ and $\bm{P}_i(k)$ represent the transition vectors of arm $i$ under state $k$ in our estimated model $\mathcal{R}'$ and origin model $\mathcal{R}$, respectively. 
\end{lemma}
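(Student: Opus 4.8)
The plan is to verify the partial-sum (CDF) inequalities in the definition of $\gtrsim$ directly, exploiting the fact that the birth-death structure makes each transition vector supported on at most three states, so that only a constant number of the $M$ threshold inequalities are nontrivial.

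First I would record that the construction in line 6 of Algorithm \ref{Algorithm_UCB} preserves normalization: for an interior state $k$ the three perturbations $+rad(T)$, $0$, $-rad(T)$ applied to $P'_i(k,k-1)$, $P'_i(k,k)$, $P'_i(k,k+1)$ cancel, so $\bm{P}'_i(k)$ still sums to $1$, and the same holds at the two boundaries $k=1,M$ (where only two entries are nonzero, with the special perturbations $P'_i(1,1)=\hat P_i(1,1)+rad(T)$ and $P'_i(M,M)=\hat P_i(M,M)-rad(T)$ chosen precisely to keep the sum equal to $1$). I would also note that in the regime $rad(T)\le c_1/3$, together with Assumption \ref{Assumption_4} and event $\mathcal{E}$, every perturbed entry stays nonnegative, so $\bm{P}'_i(k)$ is a genuine distribution. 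This is what guarantees that both $\bm{P}_i(k)$ and $\bm{P}'_i(k)$ have total mass $1$, making the top thresholds of the CDF comparison trivial.

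Next, fix an arm $i$ and a state $k$, and examine the thresholds $\ell=1,\dots,M$ in the required inequality $\sum_{j=1}^\ell P'_i(k,j)\ge \sum_{j=1}^\ell P_i(k,j)$. By Assumption \ref{Assumption_BirthDeath} both vectors are supported on $\{k-1,k,k+1\}\cap[M]$, so for $\ell<k-1$ both partial sums equal $0$ and for $\ell\ge k+1$ both equal $1$; these give equalities. Only $\ell=k-1$ and $\ell=k$ need work. At $\ell=k-1$ the inequality reads $\hat P_i(k,k-1)+rad(T)\ge P_i(k,k-1)$, which is exactly the lower half of the confidence bound $|P_i(k,k-1)-\hat P_i(k,k-1)|\le rad(T)$ supplied by $\mathcal{E}$. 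At $\ell=k$ I would pass to the complementary sum, writing $\sum_{j=1}^k P'_i(k,j)=1-P'_i(k,k+1)$ and $\sum_{j=1}^k P_i(k,j)=1-P_i(k,k+1)$; the required inequality then becomes $\hat P_i(k,k+1)-rad(T)\le P_i(k,k+1)$, which is the upper half of the same confidence bound. The boundary cases $k=1$ and $k=M$ are handled identically but are simpler, each having a single nontrivial threshold ($\ell=1$ with $P'_i(1,1)=\hat P_i(1,1)+rad(T)\ge P_i(1,1)$, and $\ell=M-1$ with $P'_i(M,M-1)=\hat P_i(M,M-1)+rad(T)\ge P_i(M,M-1)$).

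I do not anticipate a serious obstacle: once normalization and the birth-death support are in place, the whole statement collapses to the two one-line confidence inequalities from $\mathcal{E}$. The one point requiring care is the middle threshold $\ell=k$, where a direct sum of the two lower entries $P'_i(k,k-1)+P'_i(k,k)$ does not obviously dominate, since $\hat P_i(k,k)$ may undershoot $P_i(k,k)$. The clean fix is to compare through the single up-transition entry $P_i(k,k+1)$ via the complementary sum, which is exactly where the lower confidence bound built into $P'_i(k,k+1)=\hat P_i(k,k+1)-rad(T)$ is used. The only other bookkeeping is to confirm that the two special boundary perturbations are the correct choices to keep $\bm{P}'_i(1)$ and $\bm{P}'_i(M)$ normalized while still shifting mass toward lower-index states.
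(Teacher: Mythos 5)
Your proof is correct and takes essentially the same route as the paper's: using the birth-death support to reduce the $M$ partial-sum inequalities in the definition of $\gtrsim$ to the two confidence-bound facts $P'_i(k,k-1)\ge P_i(k,k-1)$ and $P'_i(k,k+1)\le P_i(k,k+1)$ supplied by $\mathcal{E}$. The paper's three-line proof leaves implicit exactly the points you make explicit---the normalization of $\bm{P}'_i(k)$ (which is what justifies the complementary-sum step at the threshold $\ell=k$, where a direct entrywise comparison would indeed fail) and the boundary states $k=1,M$---so your write-up is simply a correctly fleshed-out version of the same argument.
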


\begin{proof}



Note that there are only three non-zero values in $\bm{P}_i(k)$ (and $\bm{P}_i'(k)$, respectively), i.e., $P_i(k,k-1)$, $P_i(k,k)$ and $P_i(k,k+1)$ ($P_i'(k,k-1)$, $P_i'(k,k)$ and $P_i'(k,k+1)$, respectively). Then we only need to prove that conditioning on event $\mathcal{E}$, we have that $P_i'(k,k+1) \le P_i(k,k+1) $ and $P_i'(k,k-1) \ge P_i(k,k-1)$. This is  given by definition of $\mathcal{E}$ and $\bm{P}_i'(k)$'s directly. 
\end{proof}

\begin{lemma}\label{Lemma_112}
Under Assumptions \ref{Assumption_BirthDeath}, \ref{Assumption_3}, for any arm $i$ and state $k$, we have that $\bm{P}_i(k) \gtrsim \bm{P}_i(k+1)$.
\end{lemma}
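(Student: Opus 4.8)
The plan is to unwind the definition of $\gtrsim$ and reduce the claim to a comparison of cumulative (prefix) sums of the two rows of the transition matrix. Writing $F_k(m) \triangleq \sum_{j=1}^m P_i(k,j)$ for the $m$-th prefix sum of the transition vector $\bm{P}_i(k)$, the goal $\bm{P}_i(k) \gtrsim \bm{P}_i(k+1)$ is exactly the statement that $F_k(m) \ge F_{k+1}(m)$ holds for every $m \le M$. First I would invoke Assumption \ref{Assumption_BirthDeath}: the birth--death structure means $\bm{P}_i(k)$ is supported only on indices $\{k-1,k,k+1\}$ and $\bm{P}_i(k+1)$ only on $\{k,k+1,k+2\}$, so each prefix sum is piecewise constant and jumps only at these few indices. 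This turns the comparison into checking a small number of coordinates rather than all of $m=1,\dots,M$.

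Next I would split into three regimes according to where $m$ sits relative to $k$. For $m \le k-1$, the row $\bm{P}_i(k+1)$ has no mass yet, so $F_{k+1}(m)=0 \le F_k(m)$ trivially. For $m \ge k+1$, all the mass of $\bm{P}_i(k)$ has already been collected (its support lies in $\{k-1,k,k+1\}$), so $F_k(m)=1 \ge F_{k+1}(m)$ since any prefix sum of a probability vector is at most $1$. Thus both the small-$m$ and large-$m$ regimes are immediate, and the entire content of the lemma is concentrated at the single index $m=k$.

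The crux, and the only place real work is needed, is the inequality at $m=k$, namely $F_k(k) = P_i(k,k-1)+P_i(k,k) \ge P_i(k+1,k) = F_{k+1}(k)$. Here I would rewrite the left side using the fact that row $k$ sums to one, $P_i(k,k-1)+P_i(k,k) = 1 - P_i(k,k+1)$, so the desired inequality becomes $P_i(k,k+1)+P_i(k+1,k) \le 1$, which is precisely Assumption \ref{Assumption_3}. This is the main (and essentially sole) obstacle, and the positive-correlation assumption is exactly calibrated to dispatch it. Finally I would note that the boundary states cause no trouble: for $k=1$ the row $\bm{P}_i(1)$ is supported on $\{1,2\}$ and the same prefix-sum bookkeeping applies, with the $m=1$ comparison again reducing to Assumption \ref{Assumption_3} via $P_i(2,1) \le 1-P_i(1,2) = P_i(1,1)$, and analogously near $k=M-1$, so the argument is uniform across all states.
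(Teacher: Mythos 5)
Your proof is correct and follows essentially the same route as the paper's: both reduce the claim to comparing prefix sums, observe that the birth--death structure makes every index except $m=k$ trivial, and dispatch the single nontrivial inequality $1 - P_i(k,k+1) \ge P_i(k+1,k)$ via Assumption \ref{Assumption_3}. Your explicit treatment of the boundary states $k=1$ and $k=M-1$ is a minor addition the paper leaves implicit, but the argument is the same.
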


\begin{proof}

Note that $P_i(k, k-1) \ge 0 = P_i(k+1,k-1)$ and $P_i(k,k-1) + P_i(k,k) + P_i(k,k+1) = 1 \ge 1 - P_i(k+1,k+2) =  P_i(k+1,k-1) + P_i(k+1,k) + P_i(k+1,k+1) $. Thus, the only thing we need to prove is that $P_i(k,k-1) + P_i(k,k) \ge P_i(k+1,k)$.

Since we have
\begin{equation*}
P_i(k,k-1) + P_i(k,k) = 1 - P_i(k,k+1) \ge P_i(k+1,k),
\end{equation*}
where the inequality is because of Assumption \ref{Assumption_3}, we finish the proof of this lemma.
\end{proof}








\begin{lemma}\label{Lemma_3}
Under Assumptions \ref{Assumption_BirthDeath}, \ref{Assumption_4}  and conditioning on event $\mathcal{E}$, for any arm $i$ and probability vector $\bm{v}$, we have that $\bm{v}\bm{P}_i' \gtrsim \bm{v}\bm{P}_i$.
\end{lemma}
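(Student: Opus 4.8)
The plan is to derive Lemma~\ref{Lemma_3} directly from Lemma~\ref{Lemma_111}, using the single structural observation that the partial order $\gtrsim$ is preserved under nonnegative mixtures of vectors. The point is that $\gtrsim$ is defined entirely through prefix sums $\sum_{j=1}^{m}(\cdot)_j$, and each such prefix sum is a \emph{linear} functional of the vector. Left-multiplying by a probability vector $\bm{v}$ simply takes a nonnegative combination of the rows of the transition matrix, so a rowwise prefix-sum inequality survives the combination. Lemma~\ref{Lemma_111} supplies exactly this rowwise inequality (it is stated under Assumptions~\ref{Assumption_BirthDeath} and~\ref{Assumption_4} conditioned on $\mathcal{E}$, which are precisely the hypotheses of the present lemma), so no further structural input is required.

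Concretely, I would first unfold the matrix–vector products. For every truncation level $m \le M$,
\begin{equation*}
\sum_{j=1}^{m} (\bm{v}\bm{P}_i')_j = \sum_{k=1}^{M} v_k \sum_{j=1}^{m} P_i'(k,j), \qquad \sum_{j=1}^{m} (\bm{v}\bm{P}_i)_j = \sum_{k=1}^{M} v_k \sum_{j=1}^{m} P_i(k,j),
\end{equation*}
obtained by writing $(\bm{v}\bm{P}_i)_j = \sum_k v_k P_i(k,j)$ and interchanging the order of the (finite) sums over $j$ and $k$. Subtracting the two identities gives
\begin{equation*}
\sum_{j=1}^{m} (\bm{v}\bm{P}_i')_j - \sum_{j=1}^{m} (\bm{v}\bm{P}_i)_j = \sum_{k=1}^{M} v_k \left( \sum_{j=1}^{m} P_i'(k,j) - \sum_{j=1}^{m} P_i(k,j) \right).
\end{equation*}
By Lemma~\ref{Lemma_111}, conditioning on $\mathcal{E}$ we have $\bm{P}_i'(k) \gtrsim \bm{P}_i(k)$ for every state $k$, which by definition means $\sum_{j=1}^{m} P_i'(k,j) \ge \sum_{j=1}^{m} P_i(k,j)$ for all $m$. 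Hence every bracketed term is nonnegative, and since $\bm{v}$ is a probability vector we have $v_k \ge 0$ for all $k$, so the entire sum is nonnegative. As this holds for every $m \le M$, we conclude $\bm{v}\bm{P}_i' \gtrsim \bm{v}\bm{P}_i$, which is the claim.

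I do not expect a genuine obstacle here: the whole argument reduces to interchanging two finite summations and invoking nonnegativity of the mixing weights $v_k$, once Lemma~\ref{Lemma_111} provides the rowwise dominance. The only point worth flagging is that this lemma compares the two matrices $\bm{P}_i'$ and $\bm{P}_i$ \emph{row by row at the same state}, so it relies on Lemma~\ref{Lemma_111} and not on Lemma~\ref{Lemma_112}; the latter, which compares rows across \emph{different} states of the same chain, is reserved for the subsequent monotonicity-in-the-belief-state argument and plays no role in establishing this mixture-stability fact.
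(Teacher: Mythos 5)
Your proof is correct and is essentially identical to the paper's: both expand the prefix sums $\sum_{j=1}^{m}(\bm{v}\bm{P}_i')_j$ as a nonnegative mixture $\sum_k v_k \sum_{j=1}^m P_i'(k,j)$ of the rows' prefix sums and invoke the rowwise dominance $\bm{P}_i'(k) \gtrsim \bm{P}_i(k)$ from Lemma~\ref{Lemma_111} together with $v_k \ge 0$. Your closing remark is also accurate — the paper likewise uses only Lemma~\ref{Lemma_111} here, reserving Lemma~\ref{Lemma_112} for the companion result (Lemma~\ref{Lemma_4}).
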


\begin{proof}
Define $(\bm{v})_1^k = \sum_{j=1}^k v_j$, then we only need to prove that for any arm $i$ and state $k$, we have that $(\bm{v}\bm{P}_i')_1^k \ge (\bm{v}\bm{P}_i)_1^k$.

Note that 
\begin{eqnarray}
\nonumber (\bm{v}\bm{P}_i')_1^k &=& \sum_{j=1}^M (v_j \bm{P}_i'(j))_1^k\\
\nonumber  &=& \sum_{j=1}^M v_j (\bm{P}_i'(j))_1^k\\
\label{Eq_1111}&\ge&\sum_{j=1}^M v_j( \bm{P}_i(j))_1^k\\
\nonumber &=& (\bm{v}\bm{P}_i)_1^k,
\end{eqnarray}
where Eq. \eqref{Eq_1111} is because that conditioning on event $\mathcal{E}$, we have that $\bm{P}_i'(k) \gtrsim \bm{P}_i(k)$ (Lemma \ref{Lemma_111}).
\end{proof}

\begin{lemma}\label{Lemma_4}
Under Assumptions \ref{Assumption_BirthDeath}, \ref{Assumption_3} and conditioning on event $\mathcal{E}$, for any arm $i$ and any probability vectors $\bm{v}$ and $\bm{v}'$ such that $\bm{v} \gtrsim \bm{v}'$, we have that $\bm{v}\bm{P}_i \gtrsim \bm{v}'\bm{P}_i$.
\end{lemma}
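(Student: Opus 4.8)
The plan is to reduce the vector dominance $\bm{v}\bm{P}_i \gtrsim \bm{v}'\bm{P}_i$ to a scalar prefix inequality for each $k$, and then establish that inequality by an Abel (summation-by-parts) argument. Unrolling the definition of $\gtrsim$, it suffices to prove $(\bm{v}\bm{P}_i)_1^k \ge (\bm{v}'\bm{P}_i)_1^k$ for every $k \le M$, where $(\bm{w})_1^k = \sum_{l=1}^k w_l$. Swapping the order of summation exactly as in the proof of Lemma~\ref{Lemma_3}, I would write $(\bm{v}\bm{P}_i)_1^k = \sum_{j=1}^M v_j\,(\bm{P}_i(j))_1^k$ and similarly for $\bm{v}'$. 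Abbreviating $g_j := (\bm{P}_i(j))_1^k$, the target becomes $\sum_{j=1}^M v_j g_j \ge \sum_{j=1}^M v'_j g_j$.

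The essential input is Lemma~\ref{Lemma_112}, which gives $\bm{P}_i(j) \gtrsim \bm{P}_i(j+1)$ and therefore $g_j = (\bm{P}_i(j))_1^k \ge (\bm{P}_i(j+1))_1^k = g_{j+1}$; that is, the coefficient sequence $(g_j)_j$ is non-increasing in $j$. Writing $\delta_j = v_j - v'_j$ and $S_m = \sum_{j=1}^m \delta_j$, the hypothesis $\bm{v}\gtrsim\bm{v}'$ states that $S_m \ge 0$ for every $m$, while $S_M = 0$ because $\bm{v}$ and $\bm{v}'$ are both probability vectors and hence have equal total mass. Summation by parts then gives
\begin{equation*}
\sum_{j=1}^M \delta_j g_j = \sum_{j=1}^{M-1} S_j\,(g_j - g_{j+1}) + S_M g_M,
\end{equation*}
where each $S_j \ge 0$, each $g_j - g_{j+1} \ge 0$, and the boundary term $S_M g_M$ vanishes. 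Hence the right-hand side is non-negative, which yields the prefix inequality and, over all $k$, the lemma.

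The contrast with Lemma~\ref{Lemma_3} is what dictates the technique and pinpoints the one delicate step. In Lemma~\ref{Lemma_3} the probability vector is held fixed and two matrices are compared row-by-row, so the conclusion follows termwise using only $v_j \ge 0$. Here the matrix $\bm{P}_i$ is fixed and the two vectors are compared in the cumulative (stochastic-dominance) order, so no termwise comparison is available; the prefix-mass dominance of $\bm{v}$ over $\bm{v}'$ can only be transferred through $\bm{P}_i$ by exploiting the monotonicity of $g_j$ supplied by Lemma~\ref{Lemma_112}. The step requiring care is therefore the summation-by-parts manipulation, and in particular the vanishing of the boundary term, which hinges on $\bm{v}$ and $\bm{v}'$ being genuine probability distributions with $S_M = 0$; once this identity is in place, the sign of every surviving summand is immediate.
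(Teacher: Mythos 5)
Your proposal is correct and follows essentially the same route as the paper's own proof: reduce to the prefix inequality $(\bm{v}\bm{P}_i)_1^k \ge (\bm{v}'\bm{P}_i)_1^k$, expand the difference as $\sum_{j=1}^M (v_j - v'_j)(\bm{P}_i(j))_1^k$, and apply summation by parts, using Lemma~\ref{Lemma_112} for the monotonicity of the prefix sums $(\bm{P}_i(j))_1^k$ in $j$ and the equality of total masses to kill the boundary term. The paper's chain of equalities is exactly this Abel argument (modulo some typos writing $v_j - v_j$ for $v_j - v'_j$), so there is nothing to add.
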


\begin{proof}

We only need to prove that for any arm $i$ and state $k$, we have that $(\bm{v}\bm{P}_i)_1^k \ge (\bm{v}'\bm{P}_i)_1^k$.

Note that 
\begin{eqnarray}
\nonumber (\bm{v}\bm{P}_i)_1^k -  (\bm{v}'\bm{P}_i)_1^k &=& \sum_{j=1}^M (v_j - v_j)(\bm{P}_i(j))_1^k\\
\nonumber &=& \left(\sum_{j'=1}^M (v_{j'} - v_{j'}')\right) (\bm{P}_i(M))_1^k   + \sum_{j=1}^{M-1} \left(\sum_{j'=1}^j (v_{j'} - v_{j'}')\right) (\bm{P}_i(j) - \bm{P}_i(j+1))_1^k\\
\nonumber &=& 0 + \sum_{j=1}^{M-1} \left(\sum_{j'=1}^j (v_{j'} - v_{j'}')\right)  (\bm{P}_i(j) - \bm{P}_i(j+1))_1^k\\
\nonumber &=& 0 + \sum_{j=1}^{M-1} (\bm{v} - \bm{v})_1^j  (\bm{P}_i(j) - \bm{P}_i(j+1))_1^k\\
\label{Eq_1112} &\ge& 0,
\end{eqnarray}
where Eq. \eqref{Eq_1112} is because that $(\bm{v} - \bm{v})_1^j \ge 0$ and  $(\bm{P}_i(j) - \bm{P}_i(j+1))_1^k \ge 0$, since we have that $\bm{v} \gtrsim \bm{v}'$ and $\bm{P}_i(j) \gtrsim \bm{P}_i(j+1)$ by Lemma \ref{Lemma_112}.
\end{proof}





\begin{lemma}\label{Lemma_6}

Under Assumptions \ref{Assumption_BirthDeath}, \ref{Assumption_3}, \ref{Assumption_4} and conditioning on event $\mathcal{E}$, for any arm $i$. any integer $\tau \ge 0$ and any probability vectors $\bm{v}$ and $\bm{v}'$ such that $\bm{v} \gtrsim \bm{v}'$, we have that $\bm{v}(\bm{P}_i')^\tau \gtrsim \bm{v}'(\bm{P}_i)^\tau$.

\end{lemma}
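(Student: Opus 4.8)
The plan is to prove the statement by induction on $\tau$, using Lemma \ref{Lemma_3} and Lemma \ref{Lemma_4} as the two basic building blocks, together with the transitivity of the order $\gtrsim$. First I would record the elementary observation that $\gtrsim$ is transitive: if $\bm{u} \gtrsim \bm{w}$ and $\bm{w} \gtrsim \bm{z}$, then comparing partial sums $\sum_{j=1}^k u_j \ge \sum_{j=1}^k w_j \ge \sum_{j=1}^k z_j$ for every $k$ immediately gives $\bm{u} \gtrsim \bm{z}$. I would also note that since $\bm{P}_i$ and $\bm{P}_i'$ are stochastic matrices (the latter being a valid transition matrix under event $\mathcal{E}$ and the standing assumption $rad(T)\le c_1/3$), all intermediate products $\bm{v}(\bm{P}_i')^\tau$ and $\bm{v}'(\bm{P}_i)^\tau$ remain probability vectors, so that Lemmas \ref{Lemma_3} and \ref{Lemma_4} are applicable to them at every stage.

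The base case $\tau = 0$ is just the hypothesis $\bm{v} = \bm{v}(\bm{P}_i')^0 \gtrsim \bm{v}'(\bm{P}_i)^0 = \bm{v}'$. For the inductive step, I would assume $\bm{v}(\bm{P}_i')^\tau \gtrsim \bm{v}'(\bm{P}_i)^\tau$ and set $\bm{u} = \bm{v}(\bm{P}_i')^\tau$, $\bm{w} = \bm{v}'(\bm{P}_i)^\tau$, both probability vectors with $\bm{u} \gtrsim \bm{w}$. I would then chain the two lemmas through the intermediate term $\bm{u}\bm{P}_i$:
\begin{equation*}
\bm{v}(\bm{P}_i')^{\tau+1} = \bm{u}\bm{P}_i' \gtrsim \bm{u}\bm{P}_i \gtrsim \bm{w}\bm{P}_i = \bm{v}'(\bm{P}_i)^{\tau+1},
\end{equation*}
where the first $\gtrsim$ is Lemma \ref{Lemma_3} (replacing $\bm{P}_i$ by $\bm{P}_i'$ only pushes mass toward lower indices) and the second is Lemma \ref{Lemma_4} applied with $\bm{u} \gtrsim \bm{w}$ (applying the same matrix $\bm{P}_i$ preserves the order). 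Transitivity then yields $\bm{v}(\bm{P}_i')^{\tau+1} \gtrsim \bm{v}'(\bm{P}_i)^{\tau+1}$, completing the induction.

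The argument is essentially routine once Lemmas \ref{Lemma_3} and \ref{Lemma_4} are in hand, so there is no genuinely hard step. The one point requiring care is the bookkeeping that lets the two lemmas compose: Lemma \ref{Lemma_3} compares $\bm{u}\bm{P}_i'$ with $\bm{u}\bm{P}_i$ (same left factor, different matrices), whereas Lemma \ref{Lemma_4} compares $\bm{u}\bm{P}_i$ with $\bm{w}\bm{P}_i$ (same matrix, different left factors). It is exactly this split — first swap the matrix, then move the vector — that makes the telescoping through $\bm{u}\bm{P}_i$ possible, and one must check that $\bm{u}$ is a genuine probability vector for Lemma \ref{Lemma_3} to apply at each stage.
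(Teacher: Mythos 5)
Your proof is correct and takes essentially the same approach as the paper: the paper's entire proof of this lemma is the single sentence that it ``is given by applying Lemmas \ref{Lemma_3} and \ref{Lemma_4} together,'' and your induction on $\tau$ through the intermediate term $\bm{u}\bm{P}_i$ (swap the matrix via Lemma \ref{Lemma_3}, then move the vector via Lemma \ref{Lemma_4}, then use transitivity of $\gtrsim$) is exactly the chaining that sentence leaves implicit. The extra checks you make explicit --- transitivity of $\gtrsim$ and that $\bm{P}_i'$ is a genuine stochastic matrix under $\mathcal{E}$ when $rad(T) \le c_1/3$, so all iterates remain probability vectors --- are details the paper omits but uses tacitly.
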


\begin{proof}

This is given by applying Lemmas \ref{Lemma_3} and \ref{Lemma_4} together.
\end{proof}

Based on these lemmas, we provide the proof of Lemma \ref{Lemma_Key1} here.

{\LemmaKeyOne*}

\begin{algorithm}[t]
    \centering
    \caption{$\hat{{\pi}}^*$ based on ${\pi^*}$}\label{Simulate_1}
    \begin{algorithmic}[1]
    \STATE \textbf{Init: } The real belief state is $z' = \{(s_i',\tau_i')\}_{i=1}^N$, and set the virtual belief state $z = z'$.
    \WHILE {\textbf{true}}
    \STATE Choose action $a(t)$ as $\pi^*$ choose under $z$, and observe state $s_i'(t)$, reward $r'(a(t),s_i'(t))$.
    \STATE Set $\bm{v}' = \bm{e}_{s_{a(t)}'} (\bm{P}_{a(t)}')^{\tau_{a(t)}}$ and $\bm{v} = \bm{e}_{s_{a(t)}}(\bm{P}_{a(t)})^{\tau_{a(t)}}$.
    \STATE Update the $a(t)$-th term in $z$ to be $(s_i(t), 1)$, where $s_i(t) = {\sf Correspond}(a(t),\bm{v},\bm{v}',s_i'(t))$. For other terms $j\ne a(t)$, set $\tau_j = \tau_j + 1$. Observes virtual reward $r(a(t), s_i(t))$. 
    \STATE Update the $a(t)$-th term in $z'$ to be $(s_i'(t), 1)$, for other terms $j\ne a(t)$, set $\tau_j' = \tau_j' + 1$. 
    \ENDWHILE
    \end{algorithmic}
\end{algorithm}

	\begin{algorithm}[t]
    \centering
    \caption{{\sf Correspond}$(i,\bm{v},\bm{v}',k)$}\label{Simulate_2}
    \begin{algorithmic}[1]
    \STATE $start = \sum_{j=1}^{k-1} v_j'$, $end = \sum_{j=1}^k v_j'$.
    \FORALL {$j$}
    \STATE $p_j = \sum_{j'=1}^j v_{j'}$
    \IF {$p_j < start$}
    \STATE $q_j = 0$.
    \ELSE 
    \STATE $q_j = {\min\{p_{j+1}, end\} - p_j \over start - end}$.
    \ENDIF
    \ENDFOR
    \STATE  \textbf{Return} $j$ with probability $q_j$. 
    \end{algorithmic}
\end{algorithm}

\begin{proof}
The key proof idea is to simulate policy $\pi^*$, i.e., emulate it as close as possible  in $\mathcal{R}'$, using a fictitious policy $\hat{{\pi}}^*$ shown in Algorithm \ref{Simulate_1} , where $\bm{e}_k$ denotes the probability vector with the $k$-th term equals to 1.
%

%
At the beginning, $\hat{\pi}^*$ chooses the same action $i$ as $\pi^*$ does.
However, since $\mathcal{R}$ and $\mathcal{R}'$ have different parameters (hence different transitions), the observed state $s_i'(t)$ in $\mathcal{R}'$ does not follow the same distribution as the observed state $s_i(t)$ in $\mathcal{R}$. 
Thus, to carry out the simulation, we need to record not only the actual observed state $s_i'(t)$, but also a virtual state $s_i(t)$ which follows the same distribution as choosing action $i$ in $\mathcal{R}$.
The virtual state $s_i(t)$ and the actual state $s_i'(t)$ are used to update the virtual belief state $z$ and the actual belief state $z'$, respectively. 
%
Then, we can pretend to observe the virtual state $s_i(t)$ in our policy $\hat{\pi}^*$ to imitate the trajectory of applying policy $\pi^*$ in problem $\mathcal{R}$ precisely.
Specifically,  in our simulated policy $\hat{\pi}^*$, we choose the next action according to the virtual belief state $z = \{(s_i,\tau_i)\}_{i=1}^N$ 
instead of the actual belief state $z' = \{(s_i',\tau_i')\}_{i=1}^N$.
%
%
For each time slot $t$, after we record the virtual state $s_i(t)$, we also construct a corresponding virtual reward $r(i,s_i(t))$
, while the actual received reward is $r'(i,s_i'(t))$.
Since the virtual belief state $z$ follows the trajectory of applying $\pi^*$ in $\mathcal{R}$ precisely,
the cumulative virtual reward of applying $\hat{\pi}^*$ in $\mathcal{R}'$ is the same as the cumulative reward of applying $\pi^*$ in $\mathcal{R}$.
%
%
%


We then prove that conditioning on event $\mathcal{E}$, at any time, if we select action $i$, the observed state $s_i'(t)$ and the 
 virtual state $s_i(t)$ satisfies $s_i(t) \ge s_i'(t)$. 
%
%
%
%
%
We use induction to prove it. At the beginning, we have that $s_i = s_i'$ for any $i$. 

When we choose to pull arm $i$ at time $t$, suppose that there are $\tau_i$ rounds after the last pull of arm $i$, the last real state of arm $i$ is $s_i'$, and the last virtual state of arm $i$ is $s_i$. If our claim holds at time $t-1$, i.e., $s_i \ge s_i'$, then by Lemma \ref{Lemma_6}, we know that $\bm{e}_{s_i'}(\bm{P}'_i)^{\tau_i} \gtrsim \bm{e}_{s_i}(\bm{P}_i)^{\tau_i}$. Denote $\bm{v}' = \bm{e}_{s_i'}(\bm{P}'_i)^{\tau_i}$ and $\bm{v} = \bm{e}_{s_i}(\bm{P}_i)^{\tau_i}$ as the two input vectors of {\sf Correspond} (line 5 in Algorithm \ref{Simulate_1}), then the {\sf Correspond} procedure in Algorithm \ref{Simulate_2}, which is for generating an transition that follows distribution $\bm{v}$,
always returns $j \ge k$ since $p_{k-1} \le start$ in line 4. Thus, the returned virtual state $s_i(t)$ and the actual  observed state $s_i'(t)$ satisfies $s_i(t)\ge s_i'(t)$.
Thus we finish the proof of the claim.

Since the virtual state $s_i(t)$ follows the distribution under problem instance $\mathcal{R}$, and the next action only depends on the virtual belief state $z$, we know that the cumulative virtual reward is the same as the cumulative reward of applying ${\pi^*}$ under $\mathcal{R}$. 




As for the real reward, we know that conditioning on event $\mathcal{E}$, $r'(i,s_i'(t)) \ge r(i,s_i'(t)) \ge r(i,s_i(t))$. Thus the cumulative real reward of applying $\hat{\pi}^*$ under $\mathcal{R}'$ is larger than the cumulative virtual reward, which equals to the cumulative reward of applying ${\pi}^*$ under $\mathcal{R}$. This implies that $\mu(\hat{\pi}^*, \mathcal{R}') \ge \mu(\pi^*, \mathcal{R})$.

On the other hand, since ${\pi^*}'$ is the best policy of $\mathcal{R}'$, we must have $\mu(\hat{\pi}^*, \mathcal{R}') \le \mu({\pi^*}', \mathcal{R}')$. Thus we finish the proof of $\mu(\pi^*, \mathcal{R}) \le \mu({\pi^*}', \mathcal{R}')$.
\end{proof}

\subsection{Proof of Lemma \ref{Lemma_Key3}}

In the following,  we only consider the case when all actions $i >0$ are pulled infinitely often. Since a finitely pulled often action $i>0$ can be replaced by action $0$ with only a constant regret, we can safely ignore these actions.

\begin{lemma}\label{Lemma_113}
Conditioning on event $\mathcal{E}$, we have that $||\bm{P}_i(k) - \bm{P}_i'(k)||_\infty \le 2rad(T)$ and $|r(i,k) - r'(i,k)| \le 2rad(T)$.
\end{lemma}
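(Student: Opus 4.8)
The plan is to prove both bounds entrywise, directly from the definition of event $\mathcal{E}$ together with the explicit construction of $\mathcal{R}'$ given in Line 6 of Algorithm \ref{Algorithm_UCB}, using nothing more than the triangle inequality. No structural lemma is needed here; the only care required is in the treatment of the boundary states $k=1$ and $k=M$, where the construction perturbs the diagonal entry rather than leaving it fixed.

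First I would dispatch the reward bound. By the definition $r'(i,k) = \hat{r}(i,k) + rad(T)$ we have $r(i,k)-r'(i,k) = \big(r(i,k)-\hat{r}(i,k)\big) - rad(T)$, and conditioning on $\mathcal{E}$ the first difference lies in $[-rad(T),rad(T)]$, so $r(i,k)-r'(i,k)\in[-2rad(T),0]$ and hence $|r(i,k)-r'(i,k)|\le 2rad(T)$.

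For the transition vector, I would recall that under Assumption \ref{Assumption_BirthDeath} the row $\bm{P}_i(k)$ has at most three nonzero entries, at positions $k-1,k,k+1$, so the $\ell_\infty$ norm of $\bm{P}_i(k)-\bm{P}_i'(k)$ is the maximum of the gaps over just these coordinates. For the off-diagonal entries the construction shifts each by exactly $rad(T)$: for instance $P_i(k,k+1)-P'_i(k,k+1) = \big(P_i(k,k+1)-\hat{P}_i(k,k+1)\big)+rad(T)$, which is at most $2rad(T)$ in absolute value since the parenthesized term is within $rad(T)$ on $\mathcal{E}$; the $(k,k-1)$ entry is handled symmetrically. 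For an interior diagonal entry ($1<k<M$) the construction leaves it unchanged, $P'_i(k,k)=\hat{P}_i(k,k)$, so the gap is at most $rad(T)\le 2rad(T)$.

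The remaining step, and the one requiring the most bookkeeping, is the boundary diagonal entries, where the construction shifts the diagonal so that each row remains a valid distribution. At $k=1$ we have $P'_i(1,1)=\hat{P}_i(1,1)+rad(T)$, so $P_i(1,1)-P'_i(1,1)=\big(P_i(1,1)-\hat{P}_i(1,1)\big)-rad(T)$ is within $2rad(T)$; at $k=M$ the entry $P'_i(M,M)=\hat{P}_i(M,M)-rad(T)$ is treated identically. Taking the maximum over the (at most three) nonzero coordinates then yields $||\bm{P}_i(k)-\bm{P}_i'(k)||_\infty\le 2rad(T)$. The main obstacle is purely organizational: tracking which diagonal entries the construction perturbs, so that the boundary cases are not mistakenly bounded by $rad(T)$ when they in fact need the full $2rad(T)$.
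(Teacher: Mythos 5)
Your proof is correct and follows essentially the same route as the paper's: both bound $|P_i'(j,k)-P_i(j,k)|$ by the triangle inequality through the empirical value $\hat{P}_i(j,k)$, using that $\mathcal{E}$ gives $|\hat{P}_i(j,k)-P_i(j,k)|\le rad(T)$ while the construction in Line 6 of Algorithm \ref{Algorithm_UCB} guarantees $|\hat{P}_i(j,k)-P_i'(j,k)|\le rad(T)$, and the reward bound is identical. Your explicit case analysis of the boundary diagonal entries $k=1$ and $k=M$ is a more careful spelling-out of what the paper compresses into one line, but it is the same argument.
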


\begin{proof}

Since conditioning on event $\mathcal{E}$ we have that $|\hat{P}_i(j,k) - P_i(j,k)| \le rad(T)$ and $|\hat{P}_i(j,k) - P_i'(j,k)| \le rad(T)$, we know that 

\begin{equation*}
|P_i'(j,k) - P_i(j,k)| \le |\hat{P}_i(j,k) - P_i(j,k)| +|\hat{P}_i(j,k) - P_i'(j,k)| \le 2rad(T).
\end{equation*}

Similarly we can prove that $|r(i,k) - r'(i,k)| \le 2rad(T)$.
\end{proof}

\begin{lemma}\label{Lemma_114}
Conditioning on event $\mathcal{E}$, for any $\tau, i, k$ and probability vector $\bm{v}$, we have that  $||\bm{v}(\bm{P}_i)^\tau - \bm{v}(\bm{P}_i')^\tau||_\infty \le 2\tau \cdot rad(T)$.
\end{lemma}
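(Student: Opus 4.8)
The plan is to avoid a direct induction on the $\ell_\infty$ norm (which, as I explain at the end, does not propagate cleanly through the matrix products) and instead to work with cumulative sums and total variation, where the stochastic ordering already developed for the birth--death chains makes the error accumulate only additively. Throughout we condition on $\mathcal{E}$. Write $\bm{\mu}_\tau = \bm{v}(\bm{P}_i')^\tau$ and $\bm{\nu}_\tau = \bm{v}(\bm{P}_i)^\tau$, and for the down-set $S_k=\{1,\dots,k\}$ set $g_k(\tau) = (\bm{\mu}_\tau)_1^k - (\bm{\nu}_\tau)_1^k$, using the notation $(\cdot)_1^k = \sum_{j=1}^k(\cdot)_j$ from the proof of Lemma \ref{Lemma_3}. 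First I would apply Lemma \ref{Lemma_6} with both starting vectors equal to $\bm{v}$ (so $\bm{v}\gtrsim\bm{v}$ holds trivially) to obtain $\bm{\mu}_\tau \gtrsim \bm{\nu}_\tau$, i.e. $g_k(\tau)\ge 0$ for all $k$. Since $(\bm{\nu}_\tau - \bm{\mu}_\tau)_k = g_{k-1}(\tau) - g_k(\tau)$ with $g_0(\tau)=g_M(\tau)=0$, nonnegativity of the $g_k(\tau)$ gives $||\bm{\nu}_\tau - \bm{\mu}_\tau||_\infty = \max_k|g_k(\tau)-g_{k-1}(\tau)| \le \max_k g_k(\tau)$. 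Finally, because $g_k(\tau)$ is the gap between the probabilities assigned to $S_k$ by $\bm{\mu}_\tau$ and $\bm{\nu}_\tau$, it is dominated by the worst set, so $\max_k g_k(\tau) \le \frac{1}{2}||\bm{\mu}_\tau - \bm{\nu}_\tau||_1$. Hence it suffices to bound this total-variation distance by $2\tau\,rad(T)$.

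I would establish the total-variation bound by telescoping over the $\tau$ one-step transitions. Inserting the intermediate distribution $\bm{\nu}_{\tau-1}\bm{P}_i'$ and using $\bm{\mu}_\tau = \bm{\mu}_{\tau-1}\bm{P}_i'$, $\bm{\nu}_\tau = \bm{\nu}_{\tau-1}\bm{P}_i$, the triangle inequality gives $\frac{1}{2}||\bm{\mu}_\tau-\bm{\nu}_\tau||_1 \le \frac{1}{2}||\bm{\mu}_{\tau-1}\bm{P}_i' - \bm{\nu}_{\tau-1}\bm{P}_i'||_1 + \frac{1}{2}||\bm{\nu}_{\tau-1}(\bm{P}_i'-\bm{P}_i)||_1$. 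The first term is at most $\frac{1}{2}||\bm{\mu}_{\tau-1}-\bm{\nu}_{\tau-1}||_1$, since right-multiplication by the stochastic matrix $\bm{P}_i'$ is an $\ell_1$-contraction (total variation is non-increasing under a Markov kernel). For the second term, writing $\bm{\nu}_{\tau-1}(\bm{P}_i'-\bm{P}_i) = \sum_k (\bm{\nu}_{\tau-1})_k\,(\bm{P}_i'(k)-\bm{P}_i(k))$ and applying the triangle inequality gives $\frac{1}{2}||\bm{\nu}_{\tau-1}(\bm{P}_i'-\bm{P}_i)||_1 \le \sum_k (\bm{\nu}_{\tau-1})_k \cdot \frac{1}{2}||\bm{P}_i'(k)-\bm{P}_i(k)||_1$, and each per-row total variation is at most $2\,rad(T)$: this follows from Lemma \ref{Lemma_113} together with the fact that the three nonzero entries of each row of $\bm{P}_i'-\bm{P}_i$ sit on $\{k-1,k,k+1\}$ and sum to zero, so the total mass moved is at most $2\,rad(T)$. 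This yields the recursion $\frac{1}{2}||\bm{\mu}_\tau-\bm{\nu}_\tau||_1 \le \frac{1}{2}||\bm{\mu}_{\tau-1}-\bm{\nu}_{\tau-1}||_1 + 2\,rad(T)$, which with base case $\bm{\mu}_0=\bm{\nu}_0=\bm{v}$ unrolls to $\frac{1}{2}||\bm{\mu}_\tau-\bm{\nu}_\tau||_1 \le 2\tau\,rad(T)$; combined with the first paragraph this proves the claim.

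The step I expect to be the main obstacle is the propagation of the single-step error through the remaining transitions. A naive induction keeping the bound in $\ell_\infty$ breaks down, because right-multiplication by a stochastic matrix is \emph{not} a contraction in $\ell_\infty$: the column sums of a birth--death kernel can exceed one, so sup-norm errors may be amplified rather than damped and one loses the linear-in-$\tau$ rate. The resolution is a two-sided use of the birth--death monotonicity: Lemma \ref{Lemma_6} supplies the ordering $\bm{\mu}_\tau \gtrsim \bm{\nu}_\tau$ that lets me replace $\ell_\infty$ by the cumulative gaps $\max_k g_k(\tau)$ without losing a factor of two, while total variation---a quantity that genuinely contracts under the Markov kernels---carries the telescoped sum. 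Pinning the per-row constant to exactly $2\,rad(T)$, rather than a larger value, also requires exploiting the zero-sum, three-support structure of each row of $\bm{P}_i'-\bm{P}_i$ instead of a crude entrywise bound, and this is what matches the constant stated in the lemma.
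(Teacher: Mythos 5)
Your proof is correct, but it does not follow the paper's route, and it is in fact more watertight. The paper also telescopes over the $\tau$ steps with per-step cost $2\,rad(T)$, but it does so directly in $\ell_\infty$: it writes $\bm{v}(\bm{P}_i)^\tau - \bm{v}(\bm{P}_i')^\tau$ as $\sum_{\tau'=0}^{\tau-1}\bm{v}(\bm{P}_i')^{\tau'}(\bm{P}_i)^{\tau-\tau'-1}(\bm{P}_i-\bm{P}_i')$, so that each summand is a probability vector hitting the difference matrix \emph{last}, and then bounds each summand entrywise by $2\,rad(T)$ via Lemma \ref{Lemma_113}. That decomposition, however, is an identity only when $\bm{P}_i$ and $\bm{P}_i'$ commute; the valid telescoping $\sum_{\tau'}\bm{v}(\bm{P}_i')^{\tau'}(\bm{P}_i-\bm{P}_i')(\bm{P}_i)^{\tau-\tau'-1}$ leaves trailing stochastic powers, and---exactly as you observe in your final paragraph---right-multiplication by a stochastic matrix does not contract $\ell_\infty$ for row vectors. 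Your argument sidesteps this by telescoping in total variation, where the contraction $||\bm{w}\bm{P}_i'||_1\le ||\bm{w}||_1$ is genuine, bounding the per-step injection by the per-row total variation of $\bm{P}_i'-\bm{P}_i$ (your zero-sum, three-entry argument correctly yields the constant $2\,rad(T)$), and converting back to $\ell_\infty$ only at the end; what your approach buys is precisely a proof that survives non-commutativity. Two minor comments: (i) your first paragraph's detour through Lemma \ref{Lemma_6} and the cumulative gaps $g_k$ is unnecessary---for any zero-sum vector $\bm{w}$ (e.g.\ a difference of two probability vectors) the positive and negative parts carry equal mass, so $||\bm{w}||_\infty\le \frac{1}{2}||\bm{w}||_1$ holds with no stochastic-dominance input, and this also removes any reliance on Assumptions \ref{Assumption_BirthDeath}--\ref{Assumption_4} beyond the sparsity of the rows; (ii) the same $\ell_1$ device repairs the paper's own proof, since each correctly-telescoped term satisfies $||\bm{u}(\bm{P}_i-\bm{P}_i')(\bm{P}_i)^{j}||_\infty\le \frac{1}{2}\,||\bm{u}(\bm{P}_i-\bm{P}_i')||_1\le 2\,rad(T)$ for any probability vector $\bm{u}$, recovering the stated bound $2\tau\cdot rad(T)$.
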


\begin{proof}

Note that 
\begin{eqnarray}
\nonumber ||\bm{v}(\bm{P}_i)^\tau - \bm{v}(\bm{P}_i')^\tau||_\infty &\le& \sum_{\tau'=0}^{\tau-1} ||\bm{v}(\bm{P}_i')^{\tau'} (\bm{P}_i)^{\tau - \tau'-1} (\bm{P}_i - \bm{P}_i')||_\infty\\
\nonumber &=& \sum_{\tau'=0}^{\tau-1}||\bm{v}({\tau'})\bm{P}_i - \bm{v}({\tau'})\bm{P}_i'||_\infty\\
\label{Eq_1113}&\le& \tau \cdot (2rad(T)),
\end{eqnarray}
where $\bm{v}({\tau'}) = \bm{v}(\bm{P}_i')^{\tau'} (\bm{P}_i)^{\tau - \tau'-1}$, and Eq. \eqref{Eq_1113} is because that by Lemma \ref{Lemma_113} we always have $||\bm{v}({\tau'})\bm{P}_i - \bm{v}({\tau'})\bm{P}_i'||_\infty \le 2rad(T)$. 
\end{proof}

\begin{lemma}\label{Lemma_115}
Under Assumptions \ref{Assumption_BirthDeath}, \ref{Assumption_4} and conditioning on event $\mathcal{E}$, for any $\tau > {\log T \over \log(1/\lambda_{\max})}, i, k$ and probability vector $\bm{v}$, we have that  $||\bm{v}(\bm{P}_i)^\tau - \bm{v}(\bm{P}_i')^\tau||_\infty \le {2M\over 1-\lambda_{\max}} \cdot rad(T) + {2M\over T}$.
\end{lemma}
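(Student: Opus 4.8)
The plan is to reduce this large-$\tau$ estimate to three pieces by passing through the stationary distributions of the two chains, then controlling the mixing separately from the transition-matrix perturbation. First I would record a preliminary fact used throughout this subsection: since we are in the regime $rad(T)\le c_1/3$, Lemma~\ref{Lemma_113} gives $|P_i'(j,k)-P_i(j,k)|\le 2\,rad(T)\le \tfrac{2c_1}{3}$ for neighboring $j,k$, so $\bm{P}_i'\in\mathcal{P}_i$. Hence, conditioning on $\mathcal{E}$, both $\bm{P}_i$ and $\bm{P}_i'$ are ergodic with second-eigenvalue modulus at most $\lambda_{\max}$, and, being birth-death (Assumption~\ref{Assumption_BirthDeath}), both are reversible with well-defined stationary distributions, which I denote $\bm{d}$ and $\bm{d}'$ respectively.

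The core step is the triangle inequality $\|\bm{v}\bm{P}_i^\tau-\bm{v}(\bm{P}_i')^\tau\|_\infty \le \|\bm{v}\bm{P}_i^\tau-\bm{d}\|_\infty + \|\bm{d}-\bm{d}'\|_\infty + \|\bm{d}'-\bm{v}(\bm{P}_i')^\tau\|_\infty$, which isolates two mixing terms and one stationary-perturbation term. For the two mixing terms I would invoke geometric ergodicity: the signed measure $\bm{v}-\bm{d}$ carries zero total mass and is contracted by $\bm{P}_i$ at rate $\lambda_{\max}$, and because $\tau>\log T/\log(1/\lambda_{\max})$ we have $\lambda_{\max}^\tau<1/T$; combining these bounds each of $\|\bm{v}\bm{P}_i^\tau-\bm{d}\|_\infty$ and $\|\bm{d}'-\bm{v}(\bm{P}_i')^\tau\|_\infty$ by $M/T$, so together they yield the $\tfrac{2M}{T}$ summand.

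For the stationary-perturbation term I would use the resolvent identity $(\bm{d}-\bm{d}')(I-\bm{P}_i)=\bm{d}'(\bm{P}_i-\bm{P}_i')$, which gives $\bm{d}-\bm{d}'=\bm{d}'(\bm{P}_i-\bm{P}_i')Z$, where $Z$ is the fundamental matrix of $\bm{P}_i$ obtained from $\sum_{s\ge 0}(\bm{P}_i^s-\mathbf{1}\bm{d})$. Here Lemma~\ref{Lemma_113} controls the driving factor $\bm{P}_i-\bm{P}_i'$ by $2\,rad(T)$, and the birth-death sparsity leaves at most three nonzero entries per row (contributing the factor $M$ from summing over coordinates), while the geometric decay of $\bm{P}_i^s-\mathbf{1}\bm{d}$ at rate $\lambda_{\max}$ makes $\|Z\|=O\!\big(\tfrac{1}{1-\lambda_{\max}}\big)$. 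This yields $\|\bm{d}-\bm{d}'\|_\infty \le \tfrac{2M}{1-\lambda_{\max}}\,rad(T)$, and summing the three pieces gives the stated bound.

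The main obstacle I expect is keeping every constant polynomial in $M$ and linear in $\tfrac{1}{1-\lambda_{\max}}$, rather than letting the naive reversible-chain spectral estimates introduce a $\tfrac{1}{\sqrt{d_{\min}}}$ factor, which for birth-death chains can be exponentially large in $M$. The two sources of leverage that make the $d_{\min}$-free bound possible are that $\lambda_{\max}<1$ holds \emph{uniformly} over the whole confidence set $\mathcal{P}_i$ (so the geometric contraction rate applies to both chains and to all intermediate products), and that Assumption~\ref{Assumption_BirthDeath} restricts each row of $\bm{P}_i-\bm{P}_i'$ to $O(1)$ nonzero entries. Tracking these carefully, possibly substituting the telescoping estimate already proved in Lemma~\ref{Lemma_114} as an alternative handle on the driving term when $\tau$ is moderate, is the delicate part of the argument.
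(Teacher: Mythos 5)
Your proposal is correct and follows essentially the same route as the paper's proof: a triangle inequality passing through the two stationary distributions, geometric-ergodicity bounds of $M/T$ for each mixing term (using $\lambda_{\max}^{\tau} < 1/T$ when $\tau > \log T/\log(1/\lambda_{\max})$), and a perturbation bound of ${2M \over 1-\lambda_{\max}}\cdot rad(T)$ on $\|\bm{d}-\bm{d}'\|_{\infty}$. The only difference is that the paper obtains the stationary-distribution perturbation bound by citing the results of Seneta and Cho--Meyer, whereas you derive it directly from the group-inverse/fundamental-matrix identity $\bm{d}-\bm{d}'=\bm{d}'(\bm{P}_i-\bm{P}_i')Z$ together with the birth-death sparsity of $\bm{P}_i-\bm{P}_i'$ --- which is precisely the content of those cited results.
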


\begin{proof}

Since under Assumptions \ref{Assumption_BirthDeath}, \ref{Assumption_4}, both $M_i$ (with transition matrix $\bm{P}_i$) and $M_i'$ (with transition matrix $\bm{P}_i'$) are ergodic and have unique stationary distributions. After ${\log T \over \log(1/\lambda_{\max})}$ time steps, both the two Markov chains converge to their stationary distributions. By results in \cite{seneta1991sensitivity, cho2001comparison}, conditioning on event $\mathcal{E}$, their stationary distributions satisfy that $\lim_{\tau\to \infty} ||\bm{v}(\bm{P}_i)^\tau - \bm{v}(\bm{P}_i')^\tau||_\infty \le {M\over 1-\lambda_{\max}} \cdot 2rad(T)$. On the other hand, after ${\log T \over \log(1/\lambda_{\max})}$ steps, the gap between $\bm{v}(\bm{P}_i)^\tau$ and the stationary distribution is upper bounded by ${M\over T}$ (similar for $\bm{v}(\bm{P}_i')^\tau$) \cite{rosenthal1995convergence}. 

Thus $||\bm{v}(\bm{P}_i)^\tau - \bm{v}(\bm{P}_i')^\tau||_\infty \le {2M\over 1-\lambda_{\max}} \cdot rad(T) + {2M\over T}$ for any $\tau > {\log T \over \log(1/\lambda_{\max})}, i, k$, and probability vector $\bm{v}$.
\end{proof}

\begin{lemma}\label{Lemma_7}
Under Assumptions \ref{Assumption_BirthDeath}, \ref{Assumption_4} and conditioning on event $\mathcal{E}$, for any $\tau, i, k$ and probability vector $v$, we have that  $||\bm{v}(\bm{P}_i)^\tau -\bm{v}(\bm{P}_i')^\tau||_\infty \le \left({2\log T \over \log(1/\lambda_{\max})} + {2M\over 1-\lambda_{\max}} \right)\cdot rad(T) + {2M\over T}$.
\end{lemma}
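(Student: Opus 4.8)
The plan is to derive the uniform bound by stitching together the two regime-specific estimates already in hand, namely Lemma \ref{Lemma_114} for short horizons and Lemma \ref{Lemma_115} for long horizons, through a case split at the mixing threshold $\tau_0 \triangleq {\log T \over \log(1/\lambda_{\max})}$. The key observation is that the right-hand side of the claimed inequality is precisely the sum of the worst-case contributions from the two regimes, so that in either case the relevant single estimate is dominated by the full expression, since every term appearing is non-negative.

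First I would handle the short-horizon case $\tau \le \tau_0$. Here Lemma \ref{Lemma_114} gives $||\bm{v}(\bm{P}_i)^\tau - \bm{v}(\bm{P}_i')^\tau||_\infty \le 2\tau \cdot rad(T)$, and using $\tau \le \tau_0 = {\log T \over \log(1/\lambda_{\max})}$ this is at most ${2\log T \over \log(1/\lambda_{\max})}\cdot rad(T)$; adjoining the non-negative terms ${2M\over 1-\lambda_{\max}}\cdot rad(T)$ and ${2M\over T}$ only enlarges the bound, so the claim holds. Next I would treat the long-horizon case $\tau > \tau_0$, where Lemma \ref{Lemma_115} applies directly and yields $||\bm{v}(\bm{P}_i)^\tau - \bm{v}(\bm{P}_i')^\tau||_\infty \le {2M\over 1-\lambda_{\max}}\cdot rad(T) + {2M\over T}$, which is again no larger than the full right-hand side once the non-negative term ${2\log T \over \log(1/\lambda_{\max})}\cdot rad(T)$ is included. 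Combining the two cases delivers the stated $\tau$-independent bound for every $\tau$.

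There is essentially no genuine obstacle here: the lemma is a bookkeeping step that packages the transient $L_\infty$-telescoping estimate of Lemma \ref{Lemma_114} together with the stationary perturbation estimate of Lemma \ref{Lemma_115} into one bound valid uniformly in $\tau$. The only points requiring minor care are that the threshold $\tau_0$ be taken consistently with the hypothesis of Lemma \ref{Lemma_115}, and that $\log(1/\lambda_{\max}) > 0$ so that $\tau_0$ is well-defined and positive; the latter is guaranteed because $\lambda_{\max} < 1$ under Assumptions \ref{Assumption_BirthDeath} and \ref{Assumption_4}, as established in the ergodicity discussion of Section \ref{Section_2}.
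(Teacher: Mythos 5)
Your proposal is correct and matches the paper's own proof, which likewise obtains Lemma \ref{Lemma_7} by combining Lemma \ref{Lemma_114} (short horizons) and Lemma \ref{Lemma_115} (long horizons); you have simply made the case split at $\tau_0 = {\log T \over \log(1/\lambda_{\max})}$ explicit where the paper states it as a direct application. No gaps.
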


\begin{proof}

This lemma is given by directly applying Lemmas \ref{Lemma_114} and \ref{Lemma_115}.
\end{proof}

Based on the results in Lemma \ref{Lemma_7}, we denote $gap(T) = \left({2\log T \over \log(1/\lambda_{\max})} + {2M\over 1-\lambda_{\max}} \right)\cdot rad(T) + {2M\over T}$, which will be used frequently in the following analysis.

\begin{lemma}\label{Lemma_98}
Under Assumptions \ref{Assumption_BirthDeath}, \ref{Assumption_4} and conditioning on event $\mathcal{E}$, we have that $r(z) - r'(z) \le 2rad(T) + M \cdot gap(T)$, where $z = \{(\tau_i, s_i)\}_{i=1}^N$ is the belief state of the game, $r(z)$ and $r'(z)$ is the expected given reward of belief state $z$ in problem $\mathcal{R}$ and $\mathcal{R}'$ respectively.
\end{lemma}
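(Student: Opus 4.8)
The plan is to exploit the fact that the reward earned at a belief state is governed entirely by the single arm the policy decides to pull there, so the comparison $r(z)-r'(z)$ collapses to a comparison of one $\tau$-step transition vector and one reward vector across the two instances. Let $a$ be the arm that ${\pi^*}'$ pulls at belief state $z=\{(\tau_i,s_i)\}_{i=1}^N$. Because the same policy ${\pi^*}'$ is run in both $\mathcal{R}$ and $\mathcal{R}'$ and its choice depends only on $z$, the pulled arm $a$ (and the pair $(s_a,\tau_a)$) is the same in both instances. The observation distribution is then $\bm{v}=\bm{e}_{s_a}\bm{P}_a^{\tau_a}$ under $\mathcal{R}$ and $\bm{v}'=\bm{e}_{s_a}(\bm{P}_a')^{\tau_a}$ under $\mathcal{R}'$, and $r(z)=\sum_k v_k\,r(a,k)$, $r'(z)=\sum_k v'_k\,r'(a,k)$.

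First I would split the difference so that the true reward vector multiplies the transition mismatch and a probability vector multiplies the reward mismatch:
\begin{equation*}
r(z)-r'(z)=\sum_k (v_k-v'_k)\,r(a,k)+\sum_k v'_k\big(r(a,k)-r'(a,k)\big).
\end{equation*}
For the second sum, $\bm{v}'$ is a probability vector and Lemma \ref{Lemma_113} gives $|r(a,k)-r'(a,k)|\le 2\,rad(T)$ for every $k$, so the sum is at most $2\,rad(T)$.

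For the first sum, the crucial input is Lemma \ref{Lemma_7}, which bounds the deviation of the two $\tau$-step transition vectors uniformly in $\tau$: $\|\bm{v}-\bm{v}'\|_\infty=\|\bm{e}_{s_a}\bm{P}_a^{\tau_a}-\bm{e}_{s_a}(\bm{P}_a')^{\tau_a}\|_\infty\le gap(T)$. Since the true rewards satisfy $r(a,k)\in[0,1]$, a H\"older-type estimate gives $\sum_k(v_k-v'_k)r(a,k)\le \|\bm{v}-\bm{v}'\|_\infty\sum_k r(a,k)\le M\cdot gap(T)$. Combining the two bounds yields $r(z)-r'(z)\le 2\,rad(T)+M\cdot gap(T)$.

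I expect the only genuinely conceptual step to be the first reduction: recognizing that each belief state's reward is a linear functional of a single arm's $\tau_a$-step transition vector, which is precisely the object that Lemma \ref{Lemma_7} controls uniformly over the (unboundedly large) elapsed time $\tau_a$. After that reduction the argument is a one-line decomposition followed by the triangle inequality, and the constants are absorbed comfortably into the stated bound, so I anticipate no real difficulty in the calculation itself.
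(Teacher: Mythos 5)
Your proof is correct and follows essentially the same argument as the paper: decompose $r(z)-r'(z)$ into a reward-mismatch term and a transition-mismatch term, bound the former by $2\,rad(T)$ via Lemma \ref{Lemma_113} (using that the multiplying vector is a probability vector), and the latter by $M\cdot gap(T)$ via Lemma \ref{Lemma_7}. The only difference is the symmetric arrangement of the product decomposition (you pair the true rewards with the transition difference, the paper pairs the estimated rewards $r'(i,k)$ with it), which is immaterial — if anything yours is marginally cleaner since $r(a,k)\in[0,1]$ holds exactly, whereas $r'(i,k)=\hat{r}(i,k)+rad(T)$ may slightly exceed $1$.
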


\begin{proof}

Note that $r(z) = \sum_{k=1}^m r(i,k) v_k$ and $r'(z) = \sum_{k=1}^m r'(i,k) v'_k$, where $i$ is the chosen action at belief state $z = \{(\tau_i, s_i)\}_{i=1}^N$, and $\bm{v} = \bm{e}_{s_i} (\bm{P}_i)^{\tau_i}$, $\bm{v}' = \bm{e}_{s_i} (\bm{P}_i')^{\tau_i}$.

Thus
\begin{eqnarray}
\nonumber|r(z) - r'(z)| &=&|\sum_{k=1}^M r(i,k) v_k - \sum_{k=1}^M r'(i,k) v_k'|\\
\nonumber&=& |\sum_{k=1}^M (r(i,k) - r'(i,k)) v_k + \sum_{k=1}^M r'(i,k) (v_k - v_k')|\\
\nonumber&\le& |\sum_{k=1}^M (r(i,k) - r'(i,k)) v_k| + |\sum_{k=1}^M r'(i,k) (v_k - v_k')|\\
\label{Eq_1131}&\le& 2rad(T) + M \cdot gap(T).
\end{eqnarray}

where Eq. \eqref{Eq_1131} is because that $|r(i,k) - r'(i,k)| \le 2rad(T)$ (Lemma \ref{Lemma_113}) and $|v_k - v_k'| \le gap(T)$ (Lemma \ref{Lemma_7}).
\end{proof}

Let $V_t(z)$ denote the expected cumulative reward that we start at belief state $z = \{\tau_i,s_i\}_{i=1}^N$ and implement  policy ${\pi^*}'$ for $t$ time steps in $\mathcal{R}$, and $\bm{V}_t$ denote the vector of $V_t(z)$. Similarly, let $V_t'(z)$ be the expected cumulative reward that we start at belief state $z$ and implement ${\pi^*}'$ for $t$ times  in $\mathcal{R}'$ and $\bm{V}_t'$ be the vector of $V_t'(z)$. 
Then we know that 
$\mu({\pi^*}', \mathcal{R}') - \mu({\pi^*}', \mathcal{R}) \le \lim_{t\to \infty} {1\over t}||\bm{V}_t - \bm{V}_t'||_\infty$.

Notice that under the fixed policy ${\pi^*}'$, $\mathcal{R}$ and $\mathcal{R}'$ are  two Markov Chains. Let $\mathcal{T}$ and $\mathcal{T}'$ be the transition matrices of these two Markov Chains, and let $\bm{r}$ and $\bm{r}'$ to be the reward vectors of them, respectively. 
Then, $\bm{V}_t = \sum_{\tau = 0}^{t-1}\mathcal{T}^\tau \bm{r}$, and $\bm{V}_t' = \sum_{\tau = 0}^{t-1}(\mathcal{T'})^\tau \bm{r}'$, which implies that
\begin{eqnarray*}
\bm{V}_{t+1} - \bm{V}_{t+1}' &=& (\mathcal{T}\bm{V}_t + \bm{r}) - (\mathcal{T'}\bm{V}_t' + \bm{r}')\\
&=& \mathcal{T}\bm{V}_t - \mathcal{T'}\bm{V}_t' + (\bm{r}-\bm{r}')\\
&=& \mathcal{T}(\bm{V}_t - \bm{V}_t') + (\mathcal{T}-\mathcal{T'})\bm{V}_t' + (\bm{r}-\bm{r}').
\end{eqnarray*}


Since $\mathcal{T}$ represents a transition matrix, $||\mathcal{T}(\bm{V}_t - \bm{V}_t')||_{\infty}  \le ||\bm{V}_t - \bm{V}_t'||_{\infty}$. This implies that 
\begin{equation}\label{Eq_9971}
{1\over t}||\bm{V}_{t} - \bm{V}_{t}'||_\infty \le {1\over t}\sum_{\tau = 0}^{t-1} ||(\mathcal{T}-\mathcal{T}')\bm{V}_\tau'||_\infty + {1\over t}\sum_{\tau = 0}^{t-1} ||\bm{r}-\bm{r}'||_{\infty}.
\end{equation}

Lemma \ref{Lemma_98} shows that ${1\over t}\sum_{\tau = 0}^{t-1} ||\bm{r}-\bm{r}'||_{\infty} \le 2rad(T) + M \cdot gap(T)$.




As for the first term in Eq. \eqref{Eq_9971}, i.e., $||(\mathcal{T}-\mathcal{T}')\bm{V}_\tau'||_\infty$, notice that for any belief state $z= \{(s_i, \tau_i)\}_{i=1}^N$ that we select action $i \ne 0$, there are only $M$ non-zero values in $\mathcal{T}$ and $\mathcal{T}'$, i.e., transitions to belief states $z_1',\cdots,z_M'$, where $z_k'$ is given by substitute $(s_i, \tau_i)$ by $(k,1)$, and other actions $i' \ne i$ will add $1$ to their $\tau_{i'}$'s. Thus, the $z$-th term in $(\mathcal{T}-\mathcal{T}')\bm{V}_\tau'$ equals to $\sum_{k=1}^m (v_k(z) - v_k'(z)) V_{\tau}'(z_k')$, where $\bm{v}(z)$ and $\bm{v}'(z)$ are probability distributions of the next observed state in $\mathcal{R}$ and $\mathcal{R}'$ under the belief state $z$. 
Under the event $\mathcal{E}$, $v_k(z) - v_k'(z)$ can be bounded by $\tilde{O}({M \over 1-\lambda_{\max} } rad(T))$ (Lemma \ref{Lemma_7}). On the other hand, since $\sum_{k=1}^M v_k(z) - v_k'(z) = 0$, we have that $\sum_{k=1}^m (v_k(z) - v_k'(z)) V_{\tau}'(z_k') \le \tilde{O}({M \over 1-\lambda_{\max} } rad(T))\cdot M \max_{j > k} |V_{\tau}'(z_j') - V_{\tau}'(z_k')|$.
Therefore, the remaining issue is to bound $\max_{j > k} |V_{\tau}'(z_j') - V_{\tau}'(z_k')|$ for any belief state $z$. 
%

Thus, in the following, we concentrate on a fixed tuple $(z,j,k)$,
and use the idea of simulated policy again. The simulated policy denoted by ${\pi_{z,j,k}^*}'$ is shown in Algorithm \ref{Simulate_3}. Specifically, we pretend to observe a virtual state $s_i = k$ at the beginning, while the actual observed state is $s_i' = j$. 
Similar as Algorithm \ref{Simulate_1}, in every time step, we observe the real state $s_{a(t)}'(t)$ and real reward $r'(a(t), s_{a(t)}'(t))$, but we also record a virtual state $s_{a(t)}(t)$ and a virtual reward $r'(a(t), s_{a(t)}(t))$, and the next action only depends on the virtual belief state, but not the real belief state.
We use ${V_{\tau}^{z,j,k}}'(z_j')$ to denote the expected cumulative reward of applying ${\pi_{z,j,k}^*}'$ in $\mathcal{R'}$ starting at belief state $z_j'$ (the cumulative real reward).
Similarly as before, in ${\pi_{z,j,k}^*}'$,  $V_{\tau}'(z_k')$ equals to the cumulative virtual reward. 

Then we can bound the gap between ${V_{\tau}^{z,j,k}}'(z_j')$ and $V_{\tau}'(z_k')$ in the next lemma.

%
%
%
%
%
\begin{algorithm}[t]
    \centering
    \caption{Simulated ${\pi_{z,j,k}^*}'$ start at state $z_j'$ based on ${\pi^*}'$}\label{Simulate_3}
    \begin{algorithmic}[1]
    \STATE \textbf{Init: } The real belief state $y' = z_j'$, and set the virtual belief state $y = z_k'$, let $i$ be the chosen action under $z$.
    \WHILE {\textbf{true}}
    \STATE Choose action $a(t)$ as ${\pi^*}'$ choose under $y$, and observes state $s_{a(t)}'(t)$, reward $r'(a(t),s'(t))$.
    \IF {$a(t) \ne i$}
    \STATE Update the $a(t)$-th term in $y$ and $y'$ to be $(s_{a(t)}'(t), 1)$. For other terms $j\ne a(t)$, set $\tau_j = \tau_j + 1$ and $\tau_j' = \tau_j' + 1$. Observes virtual reward $r'(a(t), s_{a(t)}'(t))$.
    \ELSE
    \STATE Set $\bm{v}' = \bm{e}_{s_{a(t)}'} (\bm{P}_{a(t)}')^{\tau_{a(t)}}$ and $\bm{v} = \bm{e}_{s_{a(t)}}(\bm{P}_{a(t)}')^{\tau_{a(t)}}$.
    \STATE Update the $a(t)$-th term in $z$ to be $(s_{a(t)}(t), 1)$, where $s_{a(t)}(t) = {\sf Correspond}(a(t),\bm{v},\bm{v}',s_{a(t)}'(t))$. For other terms $j\ne a(t)$, set $\tau_j = \tau_j + 1$. Observes virtual reward $r'(a(t), s_{a(t)}(t))$.
    \STATE Update the $a(t)$-th term in $z'$ to be $(s_{a(t)}'(t), 1)$, for other terms $j\ne a(t)$, set $\tau_j = \tau_j + 1$.
    \ENDIF
    \ENDWHILE
    \end{algorithmic}
\end{algorithm}
%

%
%

\begin{lemma}\label{Lemma_221}
Under Assumptions \ref{Assumption_BirthDeath}, \ref{Assumption_4} and conditioning on event $\mathcal{E}$, 
\begin{equation*}
|{V_{\tau}^{z,j,k}}'(z_j') - V'_{\tau}(z_k')| \le {2M\over 1-\lambda_{\max}}.
\end{equation*}
\end{lemma}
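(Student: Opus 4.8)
The plan is to exploit the coupling already defined by the simulated policy ${\pi_{z,j,k}^*}'$ in Algorithm \ref{Simulate_3}. First I would observe that, under this coupling, the real belief state $y'$ (started at $z_j'$) and the virtual belief state $y$ (started at $z_k'$) differ in at most one place: the state component of the coordinate corresponding to action $i$. Indeed, every coordinate $j'\ne i$ is updated identically in $y$ and $y'$ (line 5), and the elapsed-time component $\tau_i$ is common to both, since the action sequence is driven solely by the virtual state $y$. Moreover, once the two $i$-states coincide, the two distributions $\bm v=\bm e_{s_i}(\bm P_i')^{\tau_i}$ and $\bm v'=\bm e_{s_i'}(\bm P_i')^{\tau_i}$ fed to {\sf Correspond} are equal, so the coupling keeps them equal forever (coalescence is absorbing). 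Consequently the real reward and the virtual reward differ only at steps where $a(t)=i$ and the two $i$-states have not yet coalesced, so
\[
|{V_{\tau}^{z,j,k}}'(z_j') - V'_{\tau}(z_k')| \le \mathbb E\Big[\textstyle\sum_{m\ge 1} \text{(reward gap at the } m\text{-th pull of } i)\Big].
\]

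Second, I would bound the gap contributed by a single pull of $i$. Since rewards lie in $[0,1]$, the gap at the $m$-th pull is at most the probability that the virtual and real $i$-states still disagree at that pull; equivalently it is controlled by a total-variation distance between $\bm v^{(m)}=\bm e_{s_i^{(m-1)}}(\bm P_i')^{\tau^{(m)}}$ and $\bm v'^{(m)}=\bm e_{s_i'^{(m-1)}}(\bm P_i')^{\tau^{(m)}}$. The essential point is that both vectors are pushed forward by the \emph{same} transition matrix $\bm P_i'$ and differ only in their starting state, so their difference is a mean-zero signed vector. Because $\bm P_i'\in\mathcal P_i$ is ergodic with second-eigenvalue modulus at most $\lambda_{\max}<1$ (as established in Section \ref{Section_2}), this difference lies in the complement of the stationary direction and is contracted by a factor of order $\lambda_{\max}$ for every elapsed step of the chain.

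Third, I would sum over pulls. Between consecutive pulls of $i$ at least one chain step elapses, so after $m$ pulls the difference $\bm v^{(m)}-\bm v'^{(m)}$ has been contracted through a number of steps growing at least linearly in $m$; the per-pull gaps therefore form a geometric series with ratio at most $\lambda_{\max}$, whose sum is $O\big(1/(1-\lambda_{\max})\big)$. The extra factor $M$ arises from converting the spectral contraction into the $\ell_1$/TV quantity that upper bounds the reward gap on the $M$-dimensional simplex. A clean way to package this is through the fundamental matrix $Z_i=\sum_{t\ge 0}\big((\bm P_i')^t-\mathbf 1\,\bm d\big)$ of $\bm P_i'$ (with $\bm d$ its stationary row), which satisfies $\|Z_i\|\le \tfrac{M}{1-\lambda_{\max}}$ and expresses value differences between two starting states as $Z_i$ applied to reward differences; this yields exactly the claimed bound $\tfrac{2M}{1-\lambda_{\max}}$.

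The hard part will be the last two steps, namely making the geometric contraction of the disagreement rigorous \emph{under the monotone (quantile) coupling} produced by {\sf Correspond}, which is not the maximal coupling. I would need to verify two facts: (i) the monotone coupling never increases the disagreement mass between successive pulls, so that the contraction accumulates; and (ii) the disagreement probability is genuinely governed by the spectral gap $1-\lambda_{\max}$ and not merely by the one-step overlap constant $c_1$, so that the final constant is $\tfrac{2M}{1-\lambda_{\max}}$. Since Assumptions \ref{Assumption_BirthDeath} and \ref{Assumption_4} guarantee ergodicity and the strict bound $\lambda_{\max}<1$ uniformly over $\mathcal P_i$, both facts should follow from the convergence-rate estimates already used in Lemmas \ref{Lemma_115}--\ref{Lemma_7}, but the bookkeeping over arbitrary, policy-dependent inter-pull gaps $\tau^{(m)}$ is where the care is required.
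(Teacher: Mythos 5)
Your overall skeleton coincides with the paper's proof: rewards can differ only at pulls of arm $i$, the per-pull gap is controlled by a distributional distance that decays geometrically at rate $\lambda_{\max}$, and summing a geometric series yields ${2M\over 1-\lambda_{\max}}$. The divergence — and the genuine gap — is in how you set up the decay. You measure it through the pushforwards from the states realized at the \emph{previous} pull, $\bm{v}^{(m)}=\bm{e}_{s_i^{(m-1)}}(\bm{P}_i')^{\tau^{(m)}}$ versus ${\bm{v}'}^{(m)}=\bm{e}_{s_i'^{(m-1)}}(\bm{P}_i')^{\tau^{(m)}}$. These restart from point masses at every pull, so each per-pull term decays only in the inter-pull time $\tau^{(m)}$; if arm $i$ is pulled every step, each term is of order $M\lambda_{\max}$ times the probability that the coupled states still disagree, and your claimed ``geometric series with ratio at most $\lambda_{\max}$'' does not follow unless you prove that the {\sf Correspond} coupling \emph{coalesces} geometrically fast. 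That is precisely the ``hard part'' you flag, and it is not closed by the convergence estimates of Lemmas \ref{Lemma_115}--\ref{Lemma_7}: those bound distances between marginal laws, not the disagreement probability of a particular (non-maximal, quantile) coupling, whose meeting rate is governed by overlap constants like $c_1$ rather than by $\lambda_{\max}$.

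The paper's proof closes this by never using the joint coupling to bound the gap at all. An expected reward difference depends only on the two marginals, and {\sf Correspond} preserves marginals exactly; hence at total elapsed time $t$ since the start of Algorithm \ref{Simulate_3}, the virtual state of arm $i$ has law exactly $\bm{e}_k(\bm{P}_i')^{t}$ and the real state has law exactly $\bm{e}_j(\bm{P}_i')^{t}$, no matter how many pulls and coupling steps occurred in between. Reversibility of the birth--death chain (the diagonal similarity making $\bm{P}_i'$ symmetric, so its Jordan form is diagonal) together with Fact 3 of \cite{rosenthal1995convergence} gives $\|\bm{e}_s(\bm{P}_i')^{t}-{\bm{d}^{(i)}}'\|_1\le 2M\,(\lambda_{\max})^{t}$ for either starting state, so a pull occurring at elapsed time $t$ contributes at most $O\bigl(M(\lambda_{\max})^{t}\bigr)$ — decay in \emph{total} elapsed time, with no reset. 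Since each integer time hosts at most one pull of arm $i$, summing over $t\ge 0$ gives ${2M\over 1-\lambda_{\max}}$. So the repair to your argument is structural: drop the coalescence/contraction-of-disagreement bookkeeping and index the decay by total elapsed time of the unconditional marginals, rather than by inter-pull gaps of conditionally restarted point masses.
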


\begin{proof}

Recall that ${V_{\tau}^{z,j,k}}'(z_j')$ is the expected cumulative reward of applying ${\pi_{z,j,k}^*}'$ (Algorithm \ref{Simulate_3}) in $\mathcal{R'}$ and start at belief state $z_j'$.
Similar with the previous analysis, in Algorithm \ref{Simulate_3}, the cumulative virtual reward is $V'_{\tau}(z_k)$.

If we do not pull arm $i$ (the arm chosen in belief state $z$) at time $t$, then the real reward equals to the virtual reward in this time step, since they are under the same problem instance $\mathcal{R}'$. Thus the difference between real reward and virtual reward only appears when we pull arm $i$.

At the beginning, the probability distribution of $s_i$ (the state of arm $i$ in belief state $y$) is $\bm{e}_{k}$ and the probability distribution of $s_i'$ (the state of arm $i$ in belief state $y'$)  is $\bm{e}_j$. After $\tau_i$ time steps, the probability distribution of the next virtual state of arm $i$ becomes $\bm{e}_{k}(\bm{P}_i')^{\tau_i}$ and the probability distribution of next real state of arm $i$ becomes $\bm{e}_{j}(\bm{P}_i')^{\tau_i}$. 
Denote $\bm{v}(\tau_i) = \bm{e}_{k}(\bm{P}_i')^{\tau_i}$ and $\bm{v}'(\tau_i) = \bm{e}_{j}(\bm{P}_i')^{\tau_i}$
Then we can upper bound the expected gap between real reward and virtual reward at time $\tau_i$ as $\sum_{\ell = 1}^M |v_{\ell}(\tau_i) - v'_{\ell}(\tau_i)| r'(i,\ell) \le ||\bm{v}(\tau_i) - \bm{v}'(\tau_i)||_1$.

Under Assumptions \ref{Assumption_BirthDeath}, \ref{Assumption_4}, $\bm{V} = diag(1, \sqrt{P_i'(2,1) \over P_i'(1,2)}, \sqrt{P_i'(2,1)P_i'(3,2) \over P_i'(1,2)P_i'(2,3)},\cdots, \sqrt{\prod_{\ell=1}^{M-1} {P_i'(\ell+1, \ell) \over P_i'(\ell,\ell+1)}})$ satisfies that $\bm{V}^{-1} \bm{P}_i' \bm{V}$ is a symmetric matrix. Thus, the maximum Jordan block size of the Jordan normal form of $\bm{P}_i'$ equals to 1. According to Fact 3 in \cite{rosenthal1995convergence}, such $\bm{P}_i'$ satisfies that 
the value of $||\bm{v}(\tau_i) - {\bm{d}^{(i)}}'||_1$ converges to 0 exponentially with rate at most ${\lambda^{i}}'$, where ${\bm{d}^{(i)}}'$ is the unique stationary distribution vector of $M_i$ in $\mathcal{R}'$, and ${\lambda^{i}}'$ is the absolute value of the second largest eigenvalue of $\bm{P}_i'$. 

Specifically, we have that:
\begin{eqnarray*}
||\bm{v}(\tau_i) - {\bm{d}^{(i)}}'||_1 &\le& M({\lambda^{i}}')^{\tau_i} ||\bm{v}(0) - {\bm{d}^{(i)}}'||_1\\
&\le& 2M  (\lambda_{\max})^{\tau_i}.
\end{eqnarray*}





Thus, the $|{V_{\tau}^{z,j,k}}'(z_j') - V'_{\tau}(z_k')|$ is upper bounded by $2M \sum_{\tau_i=0}^\infty (\lambda_{\max})^{\tau_i} \le {2M\over 1-\lambda_{\max}}$.
\end{proof}


\begin{lemma}\label{Lemma_231}

Under Assumptions \ref{Assumption_BirthDeath}, \ref{Assumption_4}, if all the arms (expected for the default one) are pulled infinitely often, and applying ${\pi^*}'$ on $\mathcal{R}'$ is aperiodic, then the  stationary distribution of applying ${\pi_{z,j,k}^*}'$ on $\mathcal{R}'$ and applying ${\pi^*}'$ on $\mathcal{R}'$ is the same. 

\end{lemma}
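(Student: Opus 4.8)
The plan is to prove the claim by a coupling argument: I will show that, under the simulated policy ${\pi_{z,j,k}^*}'$ of Algorithm \ref{Simulate_3}, the real belief state $y'$ and the virtual belief state $y$ coincide after a finite (a.s.) coupling time, after which the real process evolves exactly as the belief-state chain induced by ${\pi^*}'$ on $\mathcal{R}'$. Since a stationary distribution, and under the aperiodicity hypothesis the limiting distribution, is insensitive to the transient pre-coupling phase, the two stationary distributions must agree.

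First I would record the key structural feature of Algorithm \ref{Simulate_3}. The processes $y'$ and $y$ start from $z_j'$ and $z_k'$, which agree in every coordinate except the $i$-th one. For each arm $i'\ne i$, lines 4--5 update the $i'$-th coordinates of $y$ and $y'$ to the same value $(s'_{i'}(t),1)$ when arm $i'$ is pulled, and increment both $\tau$'s in lockstep otherwise; hence coordinates $i'\ne i$ remain identical at all times, and the two processes can differ only in the $i$-th coordinate. In particular $a(t)={\pi^*}'(y)$ and $\tau_i$ are common to both, the virtual coordinate of arm $i$ is generated by {\sf Correspond} from $\bm{e}_{s_i}(\bm{P}'_i)^{\tau_i}$, and the real one is a genuine $\mathcal{R}'$ observation from $\bm{e}_{s'_i}(\bm{P}'_i)^{\tau_i}$. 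Because {\sf Correspond} produces a virtual state with marginal $\bm{v}$, the virtual process $y$ is, in law, precisely the belief-state Markov chain obtained by applying ${\pi^*}'$ to $\mathcal{R}'$.

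Next I would argue the coupling. If at some pull of arm $i$ the virtual and real last states coincide, $s_i=s'_i$, then $\bm{v}=\bm{e}_{s_i}(\bm{P}'_i)^{\tau_i}$ and $\bm{v}'=\bm{e}_{s'_i}(\bm{P}'_i)^{\tau_i}$ are identical, so inspection of Algorithm \ref{Simulate_2} shows {\sf Correspond} returns exactly the observed state, i.e. $s_i(t)=s'_i(t)$; the $i$-th coordinates then agree and, by the same argument, continue to agree forever. It therefore suffices to show that the $i$-th coordinates coincide in finite time almost surely. By hypothesis arm $i$ is pulled infinitely often, and under Assumptions \ref{Assumption_BirthDeath} and \ref{Assumption_4} (conditioned on $\mathcal{E}$) the chain $\bm{P}'_i$ is ergodic with full support after enough steps, so at each pull $\bm{v}$ and $\bm{v}'$ overlap and the monotone coupling realized by {\sf Correspond} assigns a strictly positive, uniformly bounded-below probability to $s_i(t)=s'_i(t)$; a Borel--Cantelli argument over the infinitely many pulls then yields a finite coupling time with probability one.

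Finally I would conclude: after the finite coupling time we have $y=y'$, so ${\pi_{z,j,k}^*}'$ selects $a(t)={\pi^*}'(y')$ and drives $y'$ by the true $\mathcal{R}'$ transitions, i.e. the real belief state thereafter follows exactly the ${\pi^*}'$-on-$\mathcal{R}'$ chain, which is irreducible (all non-default arms pulled infinitely often) and, by assumption, aperiodic, hence has a unique limiting distribution; a finite coupling time then forces $y'$ to converge to that same distribution irrespective of its transient behavior. This gives the claimed equality of stationary distributions. The main obstacle is the coupling-time step: one must verify that the {\sf Correspond} coupling keeps the coincidence probability bounded away from zero uniformly over the possibly large and history-dependent values of $\tau_i$ and of the gap between the virtual and real last states, which is exactly where ergodicity of $\bm{P}'_i$ under Assumptions \ref{Assumption_BirthDeath} and \ref{Assumption_4} is indispensable.
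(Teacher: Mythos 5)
Your overall strategy is exactly the paper's: observe that all coordinates other than arm $i$ stay identical in $y$ and $y'$, that the virtual process is in law the ${\pi^*}'$-on-$\mathcal{R}'$ belief chain, and that once the virtual and real states of arm $i$ coincide at some pull they coincide forever, so the two processes share a stationary distribution provided coupling occurs in finite time. The gap is in the one step that carries the load. Your claim that {\sf Correspond} assigns a \emph{per-pull} coincidence probability that is strictly positive and uniformly bounded below is false under the birth-death structure: by Assumption \ref{Assumption_BirthDeath}, after $\tau_i$ steps the distribution $\bm{e}_s(\bm{P}_i')^{\tau_i}$ is supported on $\{s-\tau_i,\dots,s+\tau_i\}$, so at a pull with small $\tau_i$ (e.g.\ $\tau_i=1$, which can happen at every pull since $\tau_i$ is dictated by the policy, not by you) and virtual/real states more than $2\tau_i$ apart, $\bm{v}$ and $\bm{v}'$ have \emph{disjoint} supports and the coincidence probability at that pull is exactly zero. "Full support after enough steps" is not available, because nothing forces $\tau_i$ to be large; hence Borel--Cantelli over single pulls, as you invoke it, does not go through. (To be fair, you flag this as the unresolved obstacle; but it is the heart of the lemma, and the paper's own argument at this point --- exponential ergodicity of $\bm{P}_i'$ implies the coincidence probability tends to $1$ --- is also stated without the coupling detail.)

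The step can be repaired inside your own framework, using machinery the paper already has. Since $s_i \le s_i'$ initially ($k<j$) and both coordinates now evolve under the \emph{same} matrix $\bm{P}_i'$, Lemmas \ref{Lemma_112} and \ref{Lemma_4} (iterated as in Lemma \ref{Lemma_6}) give $\bm{v}\gtrsim\bm{v}'$ at every pull, and the quantile coupling realized by Algorithm \ref{Simulate_2} therefore preserves the order $s_i(t)\le s_i'(t)$ at every pull. It then suffices that the real sampled state hits state $1$ at some pull, since the squeeze $1\le s_i(t)\le s_i'(t)=1$ forces coincidence there. This happens almost surely: conditioning on $\mathcal{E}$ with $rad(T)\le c_1/3$, every neighbor transition of $\bm{P}_i'$ is at least $c_1/3$, so from any configuration the real state reaches $1$ within $M$ consecutive pulls with probability bounded below by a positive constant depending only on $c_1$ and $M$; a Borel--Cantelli (or geometric-trials) argument over disjoint blocks of $M$ pulls, rather than over single pulls, then yields an almost surely finite coupling time, after which your concluding paragraph applies verbatim.
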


\begin{proof}
Note that when applying policy ${\pi^*}'$ (or ${\pi_{z,j,k}^*}'$), the arm $i$ (the arm chosen by ${\pi^*}'$ under belief state $z$) is pulled infinitely often. Thus, we know that for any $T > 0$, there must be a pull of arm $i$ after $T$.

On the other hand, Under Assumptions \ref{Assumption_BirthDeath}, \ref{Assumption_4}, we know that the Markov Chain $M_i'$ (with transition matrix $\bm{P}_i'$) exponentially converges to its unique stationary distribution. Thus the probability of $s_i(t)=s_i'(t)$  converges to $1$ as time goes to infinity. Once $s_i=s_i'$ , we know that after this time slot we always have $y=y'$, i.e., the virtual belief state equals to the real one. Thus, ${\pi_{z,j,k}^*}'$ and ${\pi^*}'$ are the same policy after this time step. This means that they will converge to the same stationary distribution.
\end{proof}

\begin{lemma}\label{Lemma_222}
Under Assumptions \ref{Assumption_BirthDeath}, \ref{Assumption_4}, if all the arms (expected for the default one) are pulled infinitely often, and applying ${\pi^*}'$ on $\mathcal{R}'$ is aperiodic, then
\begin{equation}
\lim_{n\to \infty }({V_{t}^{z,j,k}}'(z_j') - V'_t(z_j')) \le 0.
\end{equation}
\end{lemma}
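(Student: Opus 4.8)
The plan is to compare the two cumulative rewards through the relative value function (bias) of the optimal average-reward policy on $\mathcal{R}'$, and then to use Lemma~\ref{Lemma_231} to annihilate the boundary term in the limit. Since all non-default arms are pulled infinitely often and applying ${\pi^*}'$ on $\mathcal{R}'$ is aperiodic, the induced chain on the belief states is ergodic, so I would first invoke the average-reward MDP machinery to obtain the optimal gain $\mu^* = \mu({\pi^*}', \mathcal{R}')$ together with a bounded relative value function $h(\cdot)$ satisfying the Bellman optimality equation
\begin{equation*}
\mu^* + h(z) = \max_{a}\Big[ r'(z,a) + \sum_{z'} P(z'\mid z,a)\, h(z')\Big],
\end{equation*}
with ${\pi^*}'$ attaining the maximum at every $z$. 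Boundedness of $h$ is not automatic on the infinite belief-state space, but it follows from the same geometric mixing already exploited in Lemmas~\ref{Lemma_221} and \ref{Lemma_7}: value differences between belief states are uniformly $O(M/(1-\lambda_{\max}))$, so $h$ is bounded up to the usual additive constant.

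Next I would define the Bellman gap $g(z,a) = \mu^* + h(z) - r'(z,a) - \sum_{z'}P(z'\mid z,a)h(z') \ge 0$, which vanishes along the actions chosen by ${\pi^*}'$. For any (possibly history-dependent) policy $\pi$ run on $\mathcal{R}'$ from a belief state $z$, telescoping $h$ along the trajectory yields the identity
\begin{equation*}
V_t^{\pi}(z) = t\mu^* + h(z) - \E_{\pi}[h(z_t)] - \E_{\pi}\Big[\sum_{s=0}^{t-1} g(z_s,a_s)\Big].
\end{equation*}
Applying this to $\pi = {\pi^*}'$ (where the last term vanishes identically) and to $\pi = {\pi_{z,j,k}^*}'$ (where the last sum is nonnegative), both started at $z_j'$, and subtracting, the $t\mu^*$ and $h(z_j')$ terms cancel and I am left with
\begin{equation*}
{V_t^{z,j,k}}'(z_j') - V'_t(z_j') = \E_{{\pi^*}'}[h(z_t)] - \E_{{\pi_{z,j,k}^*}'}[h(z_t)] - \E_{{\pi_{z,j,k}^*}'}\Big[\sum_{s=0}^{t-1} g(z_s,a_s)\Big].
\end{equation*}

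To finish, I would send $t\to\infty$. Under ${\pi^*}'$ the belief-state distribution converges (aperiodicity gives genuine convergence, not merely Cesàro) to the stationary distribution $d$, so $\E_{{\pi^*}'}[h(z_t)] \to \sum_z d(z) h(z)$; by Lemma~\ref{Lemma_231} the simulated policy ${\pi_{z,j,k}^*}'$ has the same stationary distribution on $\mathcal{R}'$, hence $\E_{{\pi_{z,j,k}^*}'}[h(z_t)]$ converges to the same constant. The first two terms cancel in the limit, while the expected sum of Bellman gaps is monotone nondecreasing and nonnegative, giving $\lim_{t\to\infty}({V_t^{z,j,k}}'(z_j') - V'_t(z_j')) = -\lim_t \E_{{\pi_{z,j,k}^*}'}[\sum_s g(z_s,a_s)] \le 0$, as claimed.

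The main obstacle I anticipate is the rigorous construction and uniform boundedness of $h$ over the infinite belief-state space $z = \{(s_i,\tau_i)\}$ with unbounded elapsed times $\tau_i$, together with justifying the convergence of $\E_{\pi}[h(z_t)]$ for the history-dependent simulated policy; both hinge on the geometric mixing of the $M_i'$ chains (rate $\lambda_{\max}<1$) and on Lemma~\ref{Lemma_231}. Once these are in place, the telescoping identity and the nonnegativity of $g$ deliver the inequality essentially for free. An alternative that sidesteps explicit bias functions would be a direct coupling of ${\pi_{z,j,k}^*}'$ and ${\pi^*}'$ on synchronized randomness, using the coupling time from Lemma~\ref{Lemma_231} to show the cumulative rewards differ only by a bounded transient; but pinning down the sign of that transient again reduces to the optimality gap $g \ge 0$, so I would present the bias argument as the primary route.
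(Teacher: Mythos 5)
Your proposal is correct and is essentially the paper's own argument: the paper unrolls the Bellman (Q-value) equation of ${\pi^*}'$ on $\mathcal{R}'$ along both trajectories, uses the suboptimality of ${\pi_{z,j,k}^*}'$ to obtain the inequality (your nonnegative Bellman gap $g$), and invokes Lemma~\ref{Lemma_231} to cancel the terminal $\E[Q(z_t)]$ terms in the limit, exactly as you cancel the $\E[h(z_t)]$ terms. Your added attention to the boundedness of the bias function on the infinite belief-state space is a point the paper glosses over, but it does not change the route of the proof.
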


\begin{proof}

Let's consider the Bellman equations of applying ${\pi^*}'$ on $\mathcal{R'}$. Denote ${\mu^*}' = \mu({\pi^*}', \mathcal{R'})$, and $Q(z)$ the Q-value of belief state $z$.

Then by Bellman equations, we have that:
\begin{eqnarray*}
Q(z_j') &=& \E[r_1(z_j')] -{\mu^*}' + \E[Q(z_1(z_j'))]\\
&=& \E[r_1(z_j')] + \E[r_2(z_j')] - 2{\mu^*}' + \E[Q(z_2(z_j'))]\\
&=& \cdots\\
&=& \sum_{\tau=1}^t \E[r_\tau(z_j')] -t{\mu^*}'  + \E[Q(z_t(z_j'))]\\
&=& V'_t(z_j')  + \E[Q(z_t(z_j'))] - t{\mu^*}',
\end{eqnarray*}
where $r_\tau(z_j')$ and $z_\tau(z_j')$ are the random reward and belief state in the $\tau$-th time step of applying ${\pi^*}'$ in $\mathcal{R}'$ that starts at $z_j'$, respectively. 

Now let's consider the policy ${\pi_{z,j,k}^*}'$, since it is not the best policy, we have that:
\begin{eqnarray*}
Q(z_j') &\ge& \E[r_1'(z_j')] - {\mu^*}'+ \E[Q(z_1'(z_j'))]\\
&\ge& \E[r_1'(z_j')] + \E[r_2'(z_j')] - 2{\mu^*}' + \E[Q(z_2'(z_j'))]\\
&\ge& \cdots\\
&\ge& \sum_{\tau=1}^t \E[r_\tau'(z_j')] -t{\mu^*}'+ \E[Q(z_t'(z_j'))]\\
&=& {V_{t}^{z,j,k}}'(z_j')  + \E[Q(z_t'(z_j'))] - t{\mu^*}'
\end{eqnarray*}

where $r_\tau'(z_j')$ and $z_\tau'(z_j')$ are the random reward and belief state in the $\tau$-th time step of applying ${\pi_{z,j,k}^*}'$ in $\mathcal{R}'$ that starts at $z_j'$, respectively. The reason that here is greater than or equal to is because that when applying ${\pi_{z,j,k}^*}'$, sometimes we do not choose the best action.

Then we have that 
\begin{equation*}
V'_t(z_j')  + \E[Q(z_t(z_j))] -t{\mu^*}'  \ge {V_{t}^{z,j,k}}'(z_j')  + \E[Q(z_t'(z_j))] - t{\mu^*}'.
\end{equation*}

When $t\to \infty$, by Lemma \ref{Lemma_231}, we have that $\E[Q(z_t(z_j))] = \E[Q(z_t'(z_j))]$, which implies that $\lim_{t\to \infty }({V_{t}^{z,j,k}}'(z_j') - V'_t(z_j')) \le 0$.
\end{proof}

 \begin{lemma}\label{Lemma_Key2}
 Under Assumptions \ref{Assumption_BirthDeath}, \ref{Assumption_4} and conditioning on event $\mathcal{E}$, if all the actions $i > 0$ are pulled infinitely often, then we have that
\begin{equation}
\lim_{\tau \to \infty}  \max_{j > k} |V_{\tau}'(z_j') - V_{\tau}'(z_k')| \le {2M\over 1-\lambda_{\max}}.
\end{equation}
 \end{lemma}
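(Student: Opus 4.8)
The plan is to prove the bound one pair $(j,k)$ at a time and then take the maximum, exploiting that the state space is finite so there are at most $M^2$ pairs and the per-pair bound $\frac{2M}{1-\lambda_{\max}}$ is uniform; hence it transfers to $\max_{j>k}$ and commutes with the limit $\tau\to\infty$. The core is to turn the two already-established estimates, Lemma \ref{Lemma_221} and Lemma \ref{Lemma_222}, into a two-sided control of the value gap. The reason both sides are needed is that the outer analysis of Lemma \ref{Lemma_Key3} uses $\max_{j>k}|V'_\tau(z_j')-V'_\tau(z_k')|$ as the oscillation (range) of $V'_\tau$ across the $M$ reachable belief states, so a mere one-sided bound would not suffice.

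Fix $j>k$. First I would insert the simulated-policy value ${V_{\tau}^{z,j,k}}'(z_j')$ of Algorithm \ref{Simulate_3} (the true cumulative reward of ${\pi_{z,j,k}^*}'$, which lives physically at $z_j'$ but selects actions from a virtual trajectory launched at $z_k'$) as an intermediary, writing the telescoping identity
\begin{equation*}
V'_\tau(z_k') - V'_\tau(z_j') = \bigl[V'_\tau(z_k') - {V_{\tau}^{z,j,k}}'(z_j')\bigr] + \bigl[{V_{\tau}^{z,j,k}}'(z_j') - V'_\tau(z_j')\bigr].
\end{equation*}
Lemma \ref{Lemma_221} bounds the first bracket by $\frac{2M}{1-\lambda_{\max}}$ in absolute value, since the real and virtual copies of arm $i$ both run under $\bm{P}_i'$ and couple to a common stationary law at geometric rate $\lambda_{\max}$. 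Lemma \ref{Lemma_222} bounds the second bracket: because ${\pi_{z,j,k}^*}'$ acts on the virtual rather than the true belief state, it is suboptimal on $\mathcal{R}'$, so $\lim_\tau\bigl({V_{\tau}^{z,j,k}}'(z_j') - V'_\tau(z_j')\bigr)\le 0$. Adding the two gives $\lim_\tau\bigl[V'_\tau(z_k') - V'_\tau(z_j')\bigr]\le \frac{2M}{1-\lambda_{\max}}$.

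To obtain the reverse inequality, and hence the absolute value, I would apply the \emph{same} two lemmas to the swapped tuple $(z,k,j)$, i.e. to the simulated policy ${\pi_{z,k,j}^*}'$ whose real start is $z_k'$ and virtual start is $z_j'$. This is legitimate because the estimates in Lemmas \ref{Lemma_221} and \ref{Lemma_222} are insensitive to which of the two starting states is designated ``real'': the coupling argument only uses the geometric convergence of both copies of $\bm{P}_i'$ to their common stationary distribution, which is symmetric in $j$ and $k$. The resulting telescoping through ${V_{\tau}^{z,k,j}}'(z_k')$ yields $\lim_\tau\bigl[V'_\tau(z_j') - V'_\tau(z_k')\bigr]\le \frac{2M}{1-\lambda_{\max}}$, so together with the first direction we get $\lim_\tau|V'_\tau(z_j') - V'_\tau(z_k')|\le \frac{2M}{1-\lambda_{\max}}$; maximizing over the finitely many pairs completes the proof. (Alternatively, one direction can be replaced by the value-monotonicity $V'_\tau(z_k')\ge V'_\tau(z_j')$ for $k<j$, which follows from the $\gtrsim$-dominance $\bm{e}_k(\bm{P}_i')^\tau \gtrsim \bm{e}_j(\bm{P}_i')^\tau$ in Lemma \ref{Lemma_6} combined with the reward monotonicity of Assumption \ref{Assumption_Monotone}.)

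The main obstacle I anticipate is not the algebra but the limit bookkeeping: Lemma \ref{Lemma_222} is stated under the aperiodicity of ${\pi^*}'$ on $\mathcal{R}'$ and relies on the coupling of ${\pi_{z,j,k}^*}'$ and ${\pi^*}'$ to a common stationary distribution (Lemma \ref{Lemma_231}). I must therefore verify that the standing hypothesis ``all actions $i>0$ are pulled infinitely often'' genuinely forces this coupling, justify exchanging $\lim_\tau$ with the finite maximum over $(j,k)$, and confirm that the swapped instantiation of the two lemmas inherits exactly the same constant $\frac{2M}{1-\lambda_{\max}}$. Once these limit and symmetry issues are dispatched, the bound is an immediate consequence of the two preceding lemmas.
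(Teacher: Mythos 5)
Your core argument is exactly the paper's proof: insert the simulated-policy value ${V_{\tau}^{z,j,k}}'(z_j')$ between $V_\tau'(z_k')$ and $V_\tau'(z_j')$, bound the first bracket by Lemma \ref{Lemma_221}, the second (in the limit) by Lemma \ref{Lemma_222}, and get the reverse inequality by instantiating the same two lemmas at the swapped tuple $(z,k,j)$; the max over the finitely many pairs then commutes with the limit. Up to that point your proposal and the paper's proof coincide, including the constant.

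The genuine gap is the periodicity issue you flag at the end but do not resolve --- and the resolution you hint at would fail. Lemma \ref{Lemma_222} (through Lemma \ref{Lemma_231}) is proved under the \emph{extra} hypothesis that the Markov chain induced by ${\pi^*}'$ on the belief states of $\mathcal{R}'$ is aperiodic, whereas Lemma \ref{Lemma_Key2} assumes only that every arm $i>0$ is pulled infinitely often. These are not equivalent, so ``verifying that the standing hypothesis genuinely forces this coupling'' is not a viable route: infinite pulling does not rule out a periodic induced chain, because the policy may cycle through the arms in a fixed deterministic pattern (e.g., alternate two arms, which forces the belief-state chain to alternate between states with $\tau_1=1$ and states with $\tau_2=1$, giving period $2$); aperiodicity of each arm's own chain $\bm{P}_i'$, which Assumption \ref{Assumption_4} does guarantee, says nothing about the period of the chain on belief states. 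The paper closes this case by a different observation: the proof of Lemma \ref{Lemma_222} only needs the two trajectories $z_t(z_j')$ and $z_t'(z_j')$ --- generated by ${\pi^*}'$ and by ${\pi_{z,j,k}^*}'$, both launched from the \emph{same} belief state $z_j'$ --- to converge to the same distribution, so that the terms $\E[Q(z_t(z_j'))]$ and $\E[Q(z_t'(z_j'))]$ cancel in the limit. Since the two trajectories start at the same state, they lie in the same cyclic class at every step, and hence converge to the same (possibly cyclically varying) distribution even when the period exceeds one. Without this observation your argument covers only the aperiodic case; with it, the lemma holds as stated. (Your parenthetical alternative --- replacing one direction by value monotonicity via Lemma \ref{Lemma_6} and Assumption \ref{Assumption_Monotone} --- is plausible but would itself need a coupling argument inside $\mathcal{R}'$, and it does not remove the dependence of the other direction on Lemma \ref{Lemma_222}, so it does not sidestep the periodicity question either.)
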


\begin{proof} 
If applying ${\pi^*}'$ on $\mathcal{R}'$ is aperiodic, then we can directly apply Lemmas \ref{Lemma_221} and \ref{Lemma_222} to get that for any $j > k$, $\lim_{\tau \to \infty} V_\tau'(z_k') - V_\tau'(z_j') \le {2M\over 1-\lambda_{\max}}$. Similarly, we can prove that $\lim_{\tau \to \infty} V_\tau'(z_j') - V_\tau'(z_k') \le  {2M\over 1-\lambda_{\max}}$, which finish the proof in this case.

Then we consider the case that applying ${\pi^*}'$ on $\mathcal{R}'$ has a constant period larger than 1. Note that in proof of Lemma 
\ref{Lemma_222}, we only require that $z_t'(z_j')$ and $z_t(z_j')$ has the same distribution when $t \to \infty$.  
On the other hand, $z_t'(z_j')$ and $z_t(z_j')$ start from the same state $z_j'$. Thus they are in the same set of states during one period. Because of this, even if applying ${\pi^*}'$ on $\mathcal{R}'$ has a constant period larger than 1, $z_t(z_j')$ and $z_t'(z_j')$ converges to the same distribution when $n \to \infty$. 
This implies that the result in Lemma \ref{Lemma_222} is still correct. Along with Lemma \ref{Lemma_221}, we know that $\lim_{\tau \to \infty} \max_{j > k}|V_\tau'(z_k') - V_\tau'(z_j')|\le {2M\over 1-\lambda_{\max}}$ still holds. 
\end{proof}

Based on these lemmas, we propose the proof of Lemma \ref{Lemma_Key3} here.

{\LemmaKeyThree*}

\begin{proof}

Eq. \eqref{Eq_9971} shows that $\mu({\pi^*}', \mathcal{R'}) - \mu({\pi^*}', \mathcal{R}) \le \lim_{t\to \infty }{1\over t}\sum_{\tau = 0}^{t-1} ||(\mathcal{T}-\mathcal{T}')\bm{V}_\tau'||_\infty + \lim_{t\to \infty}{1\over t}\sum_{\tau = 0}^{t-1} ||\bm{r}-\bm{r}'||_{\infty}$.

Lemma \ref{Lemma_98} shows that $\lim_{t\to \infty}{1\over t}\sum_{\tau = 0}^{t-1} ||\bm{r}-\bm{r}'||_{\infty}  \le 2rad(T) + M \cdot gap(T)$.

As for $||(\mathcal{T}-\mathcal{T}')\bm{V}_\tau'||_\infty$, note that for any belief state $z= \{(s_i, \tau_i)\}_{i=1}^N$ that we select action $i \ne 0$, there are only $M$ non-zero values in $\mathcal{T}$ and $\mathcal{T}'$, i.e., transitions to belief states $z_1',\cdots,z_M'$, where $z_k'$ is given by substituting $(s_i, \tau_i)$ by $(k,1)$, and other actions $j \ne i$ will increase their $\tau_{j}$ values by $1$. Thus, the $z$-th term in $(\mathcal{T}-\mathcal{T}')\bm{V}_\tau'$ equals to $\sum_{k=1}^m (v_k(z) - v_k'(z)) V_{\tau}'(z_k')$, where $\bm{v}(z)$ and $\bm{v}'(z)$ are probability distributions of the next observed state in $\mathcal{R}$ and $\mathcal{R}'$ under the belief state $z$. 
That is, 
%

\begin{eqnarray}
\nonumber\lim_{t\to \infty} ||(\mathcal{T}-\mathcal{T}')\bm{V}_\tau'||_\infty &=& \lim_{t\to \infty} \sum_{k=1}^m (v_k(z) - v_k'(z)) V_{\tau}'(z_k') \\
\label{Eq_9801}&\le&\lim_{t\to \infty} {M\over 2}||\bm{v}(z) - \bm{v}'(z)||_{\infty} \max_{j>k}|V_{\tau}'(z_j') - V_{\tau}'(z_k')|\\
\nonumber&\le&{M\over 2} gap(T) \lim_{t\to \infty} \max_{j>k}|V_{\tau}'(z_j') - V_{\tau}'(z_k')|\\
\label{Eq_1089}&\le& {M^2 \over 1-\lambda_{\max}} gap(T),
\end{eqnarray}
where Eq. \eqref{Eq_9801} is because that $\bm{v}(z)$ and $\bm{v}'(z)$ are probability vectors with dimension $M$, and Eq. \ref{Eq_1089} comes from Lemma \ref{Lemma_Key2}.

Thus, $\lim_{t\to \infty }{1\over t}\sum_{\tau = 0}^{t-1} ||(\mathcal{T}-\mathcal{T}')\bm{V}_\tau'||_\infty \le {M^2 \over 1-\lambda_{\max}} gap(T)$. Since $gap(T) = \tilde{O}\left({M\over 1-\lambda_{\max}}rad(T)\right)$, we have that $\mu({\pi^*}', \mathcal{R'}) - \mu({\pi^*}', \mathcal{R}) \le \tilde{O}\left({M^3\over (1-\lambda_{\max})^2}rad(T)\right) + 2rad(T) + \tilde{O}\left({M^2\over 1-\lambda_{\max}}rad(T)\right) = \tilde{O}\left({M^3\over (1-\lambda_{\max})^2}rad(T)\right)$.

Note that when $m(T) = T^{2\over 3}$, we have that $rad(T) = \tilde{O}(T^{-{1\over 3}})$, therefore $\mu({\pi^*}', \mathcal{R'}) - \mu({\pi^*}', \mathcal{R})$
 is upper bounded by $\tilde{O}\left({M^3\over (1-\lambda_{\max})^2}T^{-{1\over 3}}\right) $.
\end{proof}

\subsection{Other Lemmas and Facts}

\begin{fact}\label{Fact_1}
The length $T_1$ of the exploration phase satisfies that 
\begin{equation*}
\E\left[T_1\right] = \tilde{O}\left({N\over d_{\min}} m(T)\right).
\end{equation*}
\end{fact}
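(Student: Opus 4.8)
The plan is to decompose the exploration length by arm and reduce everything to a hitting-time estimate for a single, continuously-pulled arm. Since the exploration phase (lines 2--4 of Algorithm \ref{Algorithm_UCB}) pulls the arms one at a time, I would write $T_1 = \sum_{i=1}^N T_1^{(i)}$, where $T_1^{(i)}$ is the number of consecutive pulls of arm $i$ needed until every state $k$ has been observed at least $m(T)$ times. By linearity of expectation it suffices to prove $\E[T_1^{(i)}] = \tilde{O}(m(T)/d_{\min})$ for each fixed $i$, and then sum over the $N$ arms.

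For a fixed arm $i$, pulling it repeatedly generates a trajectory of the ergodic chain $M_i$. Let $\sigma_k^{(i)}$ denote the number of pulls until state $k$ has been visited $m(T)$ times, so that $T_1^{(i)} = \max_{1\le k\le M}\sigma_k^{(i)}$. Using the regenerative structure of the chain, I would write $\sigma_k^{(i)} = H_k + \sum_{j=1}^{m(T)-1} R_k^{(j)}$, where $H_k$ is the first hitting time of state $k$ and the $R_k^{(j)}$ are i.i.d. return times to $k$ with mean $1/d_k^{(i)} \le 1/d_{\min}$. Assumption \ref{Assumption_4}, which forces every neighbouring transition probability to be at least $c_1$, together with the birth--death structure, guarantees $\E[H_k] = \mathrm{poly}(M)$ and, crucially, geometric ergodicity at rate $\lambda_{\max}<1$, which endows the return times $R_k$ with exponential tails. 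Hence $\E[\sigma_k^{(i)}] \le \E[H_k] + (m(T)-1)/d_{\min} = \tilde{O}(m(T)/d_{\min})$, the hitting-time term being lower order for $m(T)=T^{2/3}$.

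The remaining step, and the main obstacle, is to bound $\E[\max_k \sigma_k^{(i)}]$ \emph{without} incurring an extra factor of $M$: the crude estimate $\E[\max_k\sigma_k^{(i)}]\le\sum_k\E[\sigma_k^{(i)}]$ would lose a factor $M$ and contradict the claimed $N/d_{\min}$ scaling. To avoid this I would use concentration in place of a union bound on expectations. Because each $\sigma_k^{(i)}$ is a sum of $m(T)$ i.i.d. return times with bounded mean and exponential tails, a Bernstein-type inequality gives $\sigma_k^{(i)} \le \E[\sigma_k^{(i)}] + \tilde{O}(\sqrt{m(T)}/d_{\min})$ with probability at least $1-1/\mathrm{poly}(T)$. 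A union bound over the $M$ states costs only an extra $\log M$ inside the logarithmic factors, hence is absorbed in $\tilde{O}$, and yields $T_1^{(i)} \le \max_k \E[\sigma_k^{(i)}] + \tilde{O}(\sqrt{m(T)}/d_{\min}) = \tilde{O}(m(T)/d_{\min})$ with high probability. On the complementary low-probability event the exponential tail of the stopping time contributes only a negligible $O(1)$ amount to the expectation.

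Combining these estimates gives $\E[T_1^{(i)}] = \tilde{O}(m(T)/d_{\min})$ for every arm, and summing over the $N$ arms yields $\E[T_1] = \tilde{O}(N\, m(T)/d_{\min})$, as claimed. The only delicate ingredient is the concentration argument that converts the max over $M$ states into a lower-order correction; everything else is a routine consequence of the ergodicity and the mean-return-time identity $\E[R_k]=1/d_k^{(i)}$.
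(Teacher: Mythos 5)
Your proof is correct in substance, but there is nothing in the paper to compare it against: the paper states Fact \ref{Fact_1} \emph{without proof}, treating it as a standard consequence of ergodicity, and moves directly to the concentration lemma for the event $\mathcal{E}$. Your argument therefore supplies the justification the authors omitted, and it is the natural one: decompose $T_1$ across the $N$ arms; for a fixed arm reduce $\sigma_k^{(i)}$ to a first hitting time plus i.i.d.\ return times via Kac's formula $\E[R_k]=1/d_k^{(i)}$; and control $\max_k \sigma_k^{(i)}$ by Bernstein-type concentration plus a union bound over the $M$ states, so that only $\max_k \E[\sigma_k^{(i)}] \le \E[H_k] + m(T)/d_{\min}$ survives as the leading term. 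You are also right that this last step is where the content lies: the crude bound $\E[\max_k \sigma_k^{(i)}] \le \sum_k \E[\sigma_k^{(i)}]$ would give $\tilde{O}(N M\, m(T)/d_{\min})$, and that extra factor of $M$ would propagate into Theorem \ref{Theorem_1} and weaken its first term from $N/d_{\min}$ to $NM/d_{\min}$.

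Two minor imprecisions, neither fatal. First, the claim $\E[H_k]=\mathrm{poly}(M)$ is not right as stated: under Assumption \ref{Assumption_BirthDeath} detailed balance gives $d_{k+1}^{(i)}/d_k^{(i)} = P_i(k,k+1)/P_i(k+1,k)$, so under Assumption \ref{Assumption_4} the stationary probabilities, and hence the hitting times (which scale like $M/(c_1 d_{\min})$), can be exponentially bad in $M$. What you actually need, and what is true, is only that $\E[H_k]$ is a constant independent of $T$, hence dominated by $m(T)/d_{\min}=T^{2/3}/d_{\min}$. Second, the sub-exponential parameter in your Bernstein step is of the order of the worst-case hitting/return-time scale (roughly $M/(c_1 d_{\min})$) rather than exactly $1/d_{\min}$; again this only changes $T$-independent constants, so the deviation term $\tilde{O}\left(\sqrt{m(T)}\right)\cdot\mathrm{const}$ remains lower order and your conclusion $\E[T_1]=\tilde{O}\left(N m(T)/d_{\min}\right)$ stands.
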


\begin{lemma}
	With probability at least $1-{8NM\over T}$, $\mathcal{E}$ holds.
\end{lemma}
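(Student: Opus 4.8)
The plan is to prove this by applying Hoeffding's inequality to each individual estimate and then taking a union bound over all of them; the quantity $8NM$ in the statement is precisely the count of estimates (times the factor $2$ from the two-sided Hoeffding bound). First I would recall the structure produced by the exploration phase of Algorithm~\ref{Algorithm_UCB}: for every arm $i$ and every state $k$, we continue pulling arm $i$ until state $k$ has been observed at least $m(T)$ times. By the Markov property, the successive next-states observed from a fixed state $j$ (for a fixed arm $i$) are i.i.d.\ draws from the distribution $P_i(j,\cdot)$, and the rewards collected in state $k$ are i.i.d.\ draws in $[0,1]$ with mean $r(i,k)$, independent of the transitions. Hence each empirical quantity $\hat P_i(j,k)$ and $\hat r(i,k)$ is an average of at least $m(T)$ bounded i.i.d.\ random variables centered at the true value.

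Next I would quantify the single-estimate failure probability. For an average $\hat\theta$ of $m(T)$ i.i.d.\ random variables in $[0,1]$ with mean $\theta$, Hoeffding's inequality \cite{hoeffding1963probability} gives
\begin{equation*}
\Pr\left(|\hat\theta - \theta| > rad(T)\right) \le 2\exp\left(-2\,m(T)\,rad(T)^2\right).
\end{equation*}
Substituting $rad(T) = \sqrt{\log T / (2m(T))}$ makes the exponent equal to $-\log T$, so each such event fails with probability at most $2/T$. This applies verbatim to every reward estimate $\hat r(i,k)$ and to every transition estimate $\hat P_i(j,k)$ with $|j-k|\le 1$.

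Finally I would count the events and union-bound. There are $NM$ reward estimates. For the transitions, the birth–death structure (Assumption~\ref{Assumption_BirthDeath}) ensures that from each state $j$ only the neighbors $j-1,j,j+1$ are relevant, so there are at most $3$ transition estimates per $(i,j)$ pair, i.e.\ at most $3NM$ in total. Summing the failure probabilities over these $NM + 3NM = 4NM$ estimates, each at most $2/T$, yields a total failure probability of at most $8NM/T$; taking complements gives $\Pr(\mathcal{E}) \ge 1 - 8NM/T$.

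The one step requiring genuine care — the main obstacle — is the independence justification for invoking Hoeffding with a fixed sample size: the exploration phase stops at a random time (when the last state reaches $m(T)$ visits), so the total number of observations of each state is itself a random stopping time. I would resolve this by applying the concentration bound only to the \emph{first} $m(T)$ i.i.d.\ observations of each state (which are guaranteed to exist under $\mathcal{E}$'s defining event), for which the draws are genuinely i.i.d.\ and the count is deterministic, so the plain Hoeffding bound applies without needing a maximal inequality over a random horizon. Everything else is routine bookkeeping.
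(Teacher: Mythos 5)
Your proposal is correct and follows essentially the same route as the paper: Hoeffding's inequality with $rad(T)=\sqrt{\log T/(2m(T))}$ gives a per-estimate failure probability of $2/T$, and a union bound over the $NM$ reward estimates plus the at most $3NM$ transition estimates (by the birth--death structure) yields $8NM/T$. Your additional care about the random stopping time of the exploration phase (restricting to the first $m(T)$ observations of each state) is a sound refinement of a point the paper's proof passes over silently.
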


\begin{proof}
Recall that
\begin{equation*}
\mathcal{E} = \{ \forall i, |j-k|\le 1, |P_i(j,k) - \hat{P}_i(j,k)| \le rad(T), |r(i,k) - \hat{r}(i,k)| \le rad(T)\}.
\end{equation*}

For any action $i$ and state $j,k$, we have that 
\begin{eqnarray}
\label{Eq_1195} \Pr[|P_i(j,k) - \hat{P}_i(j,k)| \ge rad(T)] &\le& 2\exp(-2m(T)(rad(T))^2)\\
\nonumber&\le&2\exp\left(-2m(T)\cdot{\log T \over 2m(T)}\right)\\
\nonumber&\le&{2\over T},
\end{eqnarray}
where Eq. \eqref{Eq_1195} is given by Chernoff-Hoeffding inequality \cite{hoeffding1963probability}.
Similarly, $\Pr[|r(i,k) - \hat{r}(i,k)| \le rad(T)] \le {2\over T}$.

Thus, by union bound, $\mathcal{E}$ holds with probability at least $1-{8NM\over T}$.
\end{proof}

\subsection{Main Proof of Theorem \ref{Theorem_1}}

\subsubsection{Regret in Exploration Phase}

By Fact \ref{Fact_1}, we know the regret in exploration phase has upper bound $\tilde{O}({N \over d_{\min}}m(T))$.






\subsubsection{Regret in Exploitation Phase}

Let $T_2$ be the number of time steps in the exploitation phase.
Note that 
applying ${\pi^*}'$ in  $\mathcal{R}$ has an average reward $\mu({\pi^*}', \mathcal{R})$.
%
%
According to results in \cite{bartlett2012regal}, the cumulative reward of applying policy ${\pi^*}'$ in $\mathcal{R}$ for $T_2$ time steps is lower bounded by $T_2\mu({\pi^*}', \mathcal{R}) - \mathcal{C}$, where $\mathcal{C}$ is the diameter of applying policy ${\pi^*}'$ in $\mathcal{R}$, which does not depend on $T$.

Then we can write the upper bound of cumulative regret in exploitation phase as
\begin{equation}\label{Eq_2112}
T_2(\mu(\pi^*, \mathcal{R}) - \mu({\pi^*}', \mathcal{R})) + \mathcal{C}.
\end{equation}



Since $\mathcal{C}$ is a constant that does not depend on $T$, we can concentrate on the term depends on $T$ (or $T_2$), i.e., $T_2(\mu(\pi^*, \mathcal{R}) - \mu({\pi^*}', \mathcal{R}))$. To bound this term, we write $\mu(\pi^*, \mathcal{R}) - \mu({\pi^*}', \mathcal{R})$ as:
\begin{eqnarray} 
\label{Eq_1211-3} \mu[\pi^*, \mathcal{R}) - \mu({\pi^*}', \mathcal{R}) &=& [\mu(\pi^*, \mathcal{R}) - \mu({\pi^*}', \mathcal{R'})] + [\mu({\pi^*}', \mathcal{R'}) - \mu({\pi^*}', \mathcal{R})].
\end{eqnarray}

By Lemma \ref{Lemma_Key1}, conditioning on event $\mathcal{E}$, the first term is upper bounded by $0$.

By Lemma \ref{Lemma_Key3}, conditioning on event $\mathcal{E}$, the second term is upper bounded by $\tilde{O}\left({M^3\over (1-\lambda_{\max})^2}T^{-{1\over 3}}\right) $.

Therefore, conditioning on event $\mathcal{E}$, we have that
\begin{equation*}
T_2(\mu(\pi^*, \mathcal{R}) - \mu({\pi^*}', \mathcal{R})) \le \tilde{O}\left({M^3\over (1-\lambda_{\max})^2}T^{2\over 3}\right) 
\end{equation*}

\subsubsection{The Total Regret}

From the above analysis, the total regret is upper bounded by:
\begin{eqnarray*}
Reg(T) &\le& \E[T_1] + \tilde{O}\left({M^3\over (1-\lambda_{\max})^2}T^{2\over 3}\right)  + 8NM + \mathcal{C} \\
&\le& \tilde{O}\left({N\over d_{\min}} m(T)\right) +\tilde{O}\left({M^3\over (1-\lambda_{\max})^2}T^{2\over 3}\right) + 8NM + \mathcal{C}\\
&=& \tilde{O}\left(\left({N\over d_{\min}}  + {M^3\over (1-\lambda_{\max})^2}\right)T^{2\over 3}\right).
\end{eqnarray*}

\section{Reduced Regret Bound for Colored-UCRL2 or Thompson Sampling}\label{Section_discussion_reduce}

Denote $\bm{h}'$  the bias vector of applying policy ${\pi^*}'$ in $\mathcal{R}'$, and $U = \max_{z, j > k} |h'(z_j') - h'(z_k')|$. 
The colored-UCRL2 policy \cite{ortner2012regret} and Thompson Sampling policy \cite{jung2019thompson} both achieve regret upper bounds of $O(U\sqrt{T})$, according to their analysis. 
To bound the value of $U$, they both directly apply an upper bound $D$, which is the diameter of applying policy ${\pi^*}'$ in $\mathcal{R}'$, according to \cite{bartlett2012regal}. 

However, note that $|h'(z_j') - h'(z_k')| = \lim_{t\to \infty} |V_t'(z_j') - V_t'(z_k')|$, and Lemma \ref{Lemma_Key2} states that $ \lim_{t\to \infty} \max_{j > k} |V_t'(z_j') - V_t'(z_k')| \le {2M \over 1-\lambda_{\max}}$ for any $z$. Therefore, 
\begin{eqnarray*}
U &=& \max_{z, j > k} |h'(z_j') - h'(z_k')|\\
&=& \max_{z, j > k} \lim_{t\to \infty} |V_t'(z_j') - V_t'(z_k')|\\
&=&\lim_{t\to \infty} \max_{z, j > k} |V_t'(z_j') - V_t'(z_k')|\\
&\le&{2M \over 1-\lambda_{\max}}.
\end{eqnarray*}
This implies that the regret upper bounds in \cite{ortner2012regret,jung2019thompson} can be reduced to $O({M \over 1-\lambda_{\max}} \sqrt{T})$, whose constant factor is a polynomial one.

More importantly, in the proof of Lemma \ref{Lemma_Key2}, we only use Assumptions \ref{Assumption_BirthDeath}, \ref{Assumption_4} to make sure that the both the original Markov chain $M_i$ (with transition matrix $\bm{P}_i$) and the constructed Markov chain $M_i'$ (with transition matrix $\bm{P}_i'$) are ergodic. Therefore, Lemma \ref{Lemma_Key2} is not limited to the  setting in this paper, but instead can be applied to more general settings and  reduce the exponential factors in the regret upper bounds under other learning policies. 



\section{Approximation Oracle}\label{Section_discussion_approx}

{\PropApp*}

\begin{proof}
First, we see that the exploration phase still results in $\tilde{O}(T^{2\over 3})$ regret. Next, we come to the exploitation phase.

Similarly, the regret in the exploitation phase can be upper bounded by $T_2(\lambda \mu(\pi^*, \mathcal{R}) - \mu(\tilde{\pi}', \mathcal{R}))$. We can similarly write $\lambda \mu(\pi^*, \mathcal{R}) - \mu(\tilde{\pi}', \mathcal{R})$ as

\begin{equation*}
\lambda \mu(\pi^*, \mathcal{R}) - \mu(\tilde{\pi}', \mathcal{R}) = [\lambda \mu(\pi^*, \mathcal{R}) - \mu(\tilde{\pi}', \mathcal{R}')] + [\mu(\tilde{\pi}', \mathcal{R}')- \mu(\tilde{\pi}', \mathcal{R})] 
\end{equation*}

Note that Lemma \ref{Lemma_Key1} still works in this case. 
Thus, we must have $\mu(\tilde{\pi}', \mathcal{R}') \ge \lambda \mu({\pi^*}', \mathcal{R}') \ge \lambda \mu(\pi^*, \mathcal{R})$ with high probability. 
However, Lemma \ref{Lemma_Key2} does not work here since it relies on the optimality of ${\pi^*}'$.

According to the results in \cite{bartlett2012regal}, we can use $D$, the diameter of applying $\tilde{\pi}'$ in the problem $\mathcal{R}'$ as an upper bound for $\lim_{\tau \to \infty} \max_{j>k} |V_\tau'(z_k') - V_\tau'(z_j')|$.
Thus, the regret in exploitation phase is still $\tilde{O}(T^{2\over 3})$ while the constant factor here is much larger and probably exponential. 

Together with the regret in exploration phase, we finish the proof.
\end{proof}
Note that although the constant factor in the regret bound can be exponential, the algorithm complexity is still polynomial and much better than the colored-UCRL2 policy \cite{ortner2012regret}.

\section{More Experiments on Real Datasets}
\begin{figure} 
\centering 
\subfigure[Regret: two channels]{ \label{Figure_8} 
\includegraphics[width=2.6in]{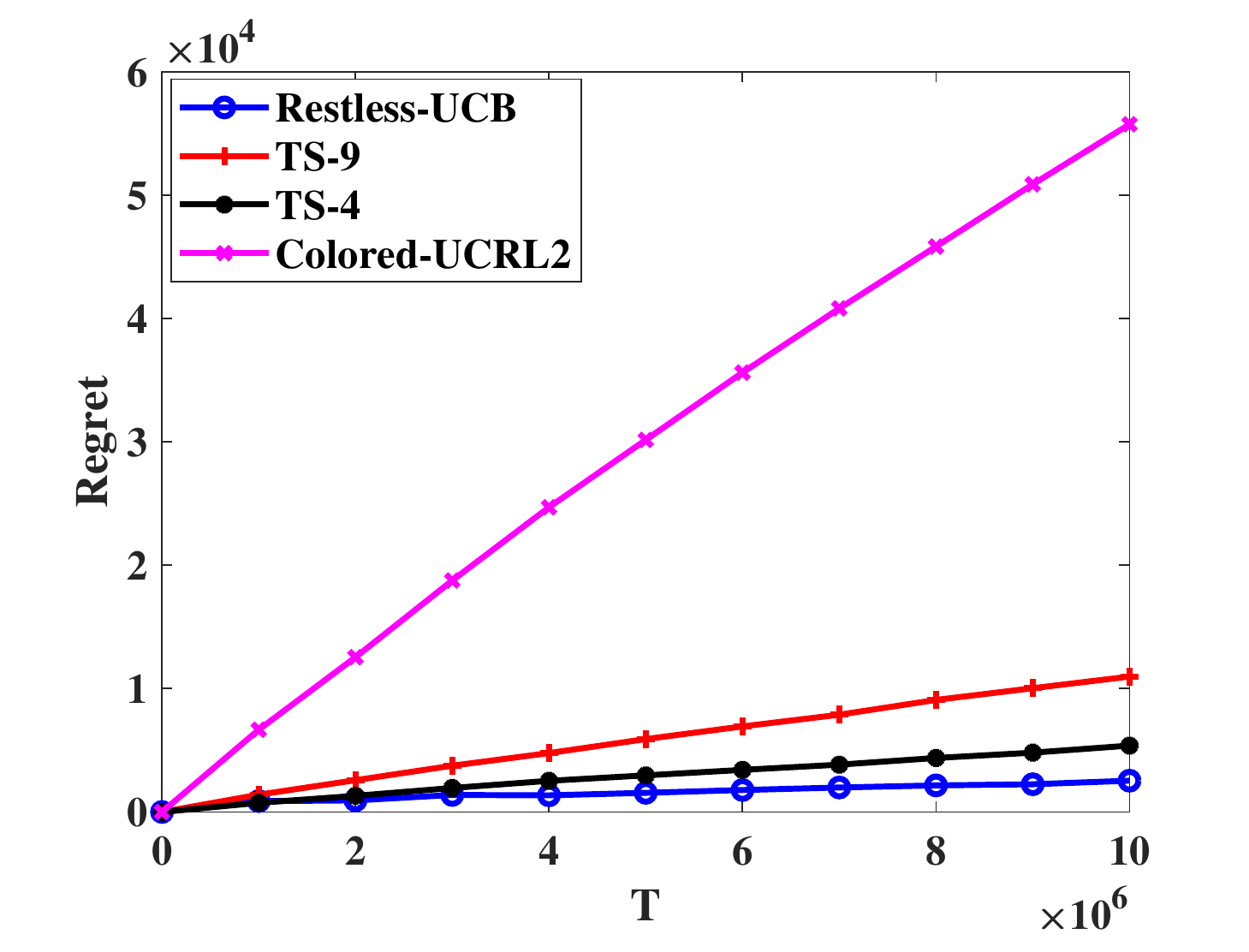}}
\subfigure[Regret: three channels]{ \label{Figure_9} 
\includegraphics[width=2.6in]{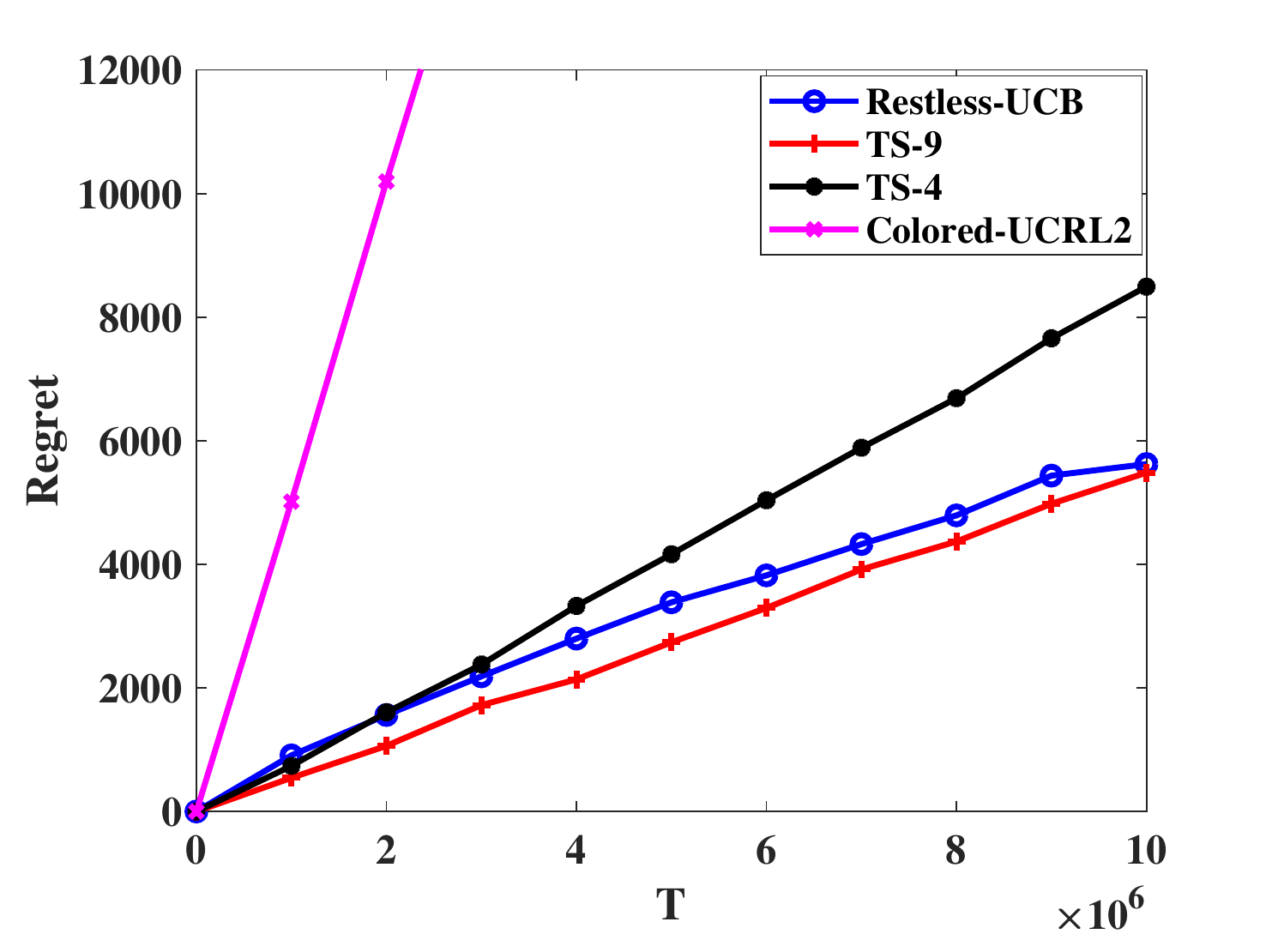}}
 \caption{More experiments: Comparison of regrets of different algorithms} \label{Figure_Experiments_More}  
\end{figure}
We also use the dataset  in \cite{sheng2014data} for  packet transmission via a wireless link with different frequency channels. 
Table $3$ of \cite{sheng2014data} provides the transition matrices of the two-state Markov Chains for different channels. In this setting, a bad state results in a packet loss while a good state guarantees a successful transition. In Figure \ref{Figure_8}, one can use frequency bands of  
$551$MHz and $665$MHz. In Figure \ref{Figure_9}, one can use frequency bands of  $551$MHz, $629$MHz and $665$MHz.

One can see that 
Restless-UCB still outperforms other policies. The TS policy suffers from a linear regret, since its support does not contain the real transition matrices, and colored-UCRL2 performs worse than Restless-UCB as well. These results also demonstrate the effectiveness of Restless-UCB.

\end{document}